\newif\ifdraft
\newtheorem{lemma}{Lemma} \theoremstyle{definition}
\newtheorem{definition}{Definition} \theoremstyle{plain}
\newtheorem{proposition}{Proposition} \newtheorem{theorem}{Theorem}
 \theoremstyle{definition}
\newtheorem{example}{Example}
\newcommand{\curlyP}{\mathcal{P}} 
\newcommand{\LC}{\mathsf{LC}}
\newcommand{\str}{\ensuremath{\textit{str}}\xspace}
\newcommand{\Ax}{\ensuremath{\alpha}\space}
\newcommand{\type}{\ensuremath{\textit{type}}}
\newcommand{\types}{\ensuremath{\textit{types}(\T)}\xspace}
\newcommand{\run}{\ensuremath{\textit{frn}}}
\newcommand{\srun}{\ensuremath{\textit{crn}}}
\newcommand{\tail}{\ensuremath{\textit{tail}}}
\newcommand{\suprr}{\ensuremath{\sqsubseteq^*_\T}}
\newcommand{\q}{\ensuremath{\mathfrak{q}}\xspace}
\newcommand{\T}{\mathcal{T}} 
\newcommand{\datalog}{\textsc{Datalog}\xspace}
\newcommand{\ISA}{\ensuremath{\mathbin{\sqsubseteq}}}
\newcommand{\Some}[2]{\ensuremath{\exists{#1}\per{#2}}}
\newcommand{\All}[2]{\ensuremath{\forall{#1}\per{#2}}}
\newcommand{\A}{\mathcal{A}}
\newcommand{\np}{\textsc{NP}\xspace}
\newcommand{\pspace}{\textsc{PSpace}\xspace}
\newcommand{\exptime}{\textsc{ExpTime}\xspace}
\newcommand{\nexptime}{\textsc{NExpTime}\xspace}
\newcommand{\twoexptime}{\textsc{2ExpTime}\xspace}
\newcommand{\nexptimenp}{\textsc{NExpTime}$^{\scriptsize \np}$\xspace}
\newcommand{\rin}{$r\,\in_\T\,\Sigma$\xspace}
\newcommand{\rnotin}{$r\,\not\in_\T\,\Sigma$\xspace}
\newcommand{\ALC}{\ensuremath{\mathcal{ALC}}\xspace}
\newcommand{\ALCHOI}{\ensuremath{\mathcal{ALCHOI}}\xspace}
\newcommand{\per}{.}  \newcommand{\K}{\mathcal K}
\newcommand{\I}{\mathcal I} \newcommand{\J}{\mathcal J}
\newcommand{\tuple}[1]{\langle{#1}\rangle}
\newcommand{\msf}[1]{\mathsf{#1}}
\newcommand{\indivnames}{\msf{N}_{\msf{I}}} 
\newcommand{\conceptnames}{\msf{N}_{\msf{C}}} 
\newcommand{\varnames}{\msf{N}_{\msf{V}}}
\newcommand{\rolenames}{\msf{N}_{\msf{R}}}
\newcommand{\preds}{\msf{N}_{\msf{P}}} \newcommand{\cert}{\msf{cert}}
\newcommand{\AND}{\ensuremath{\sqcap}}
\newcommand{\OR}{\ensuremath{\sqcup}}
\newcommand{\NOT}{\ensuremath{\neg}}
\newcommand{\SOME}{\ensuremath{\exists}}
\newcommand{\ALL}{\ensuremath{\forall}}
\newcommand{\invat}[1]{\hat{#1}}
\newcommand{\vars}{{\ensuremath{\textit{vars}}}}
\newcommand{\dom}[1]{\Delta^{#1}} 
\newcommand{\Int}[2]{#2^{#1}} 
                                                                                   \newcommand{\newt}[1]{{\color{black}#1}}
                                                                                   \newcounter{qcounter}
\begin{document}

\title{Polynomial Rewritings from Expressive Description Logics with Closed Predicates to Variants of Datalog  
}

\author{\name Shqiponja Ahmetaj
 \email  ahmetaj@dbai.tuwien.ac.at \\
 \addr TU Wien, Austria
\AND
 \name Magdalena Ortiz
 \email ortiz@kr.tuwien.ac.at \\
 \addr TU Wien, Austria
 \AND
 \name Mantas \v{S}imkus
 \email simkus@dbai.tuwien.ac.at \\
 \addr TU Wien, Austria
}

\maketitle

\begin{abstract}
  In many scenarios, complete and incomplete information coexist. For
  this reason, the knowledge representation and database communities
  have long shown interest in simultaneously supporting the closed-
  and the open-world views when reasoning about logic theories.  Here
  we consider the setting of querying possibly incomplete data using
  logic theories,
  formalized as the evaluation of an \emph{ontology-mediated query
    (OMQ)} that pairs a query with a theory, sometimes called an
  \emph{ontology}, expressing background knowledge.
  This can be further enriched by specifying a set of \emph{closed
    predicates} from the theory that are to be interpreted under the
  closed-world assumption, while the rest are interpreted with the
  open-world view.  In this way we can retrieve more precise answers
  to queries by leveraging the partial completeness of the data.

  The central goal of this paper is to understand the relative
  expressiveness of ontology-mediated query languages in which
  the ontology part is written in the expressive
  Description Logic (DL) $\ALCHOI$ and includes 
  a set of \emph{closed} predicates. We consider a restricted class of conjunctive queries.  Our
  main result is to show that every query in this \emph{non-monotonic}
  query language can be translated \emph{in polynomial time} into
  \datalog with negation as failure under the stable model semantics.
  To overcome the challenge that \datalog has no direct means to
  express the existential quantification present in $\ALCHOI$, we
  define a two-player game that characterizes the satisfaction of the
  ontology, and design a \datalog query that can decide the existence
  of a winning strategy for the game.  If there are no closed
  predicates---in the case of querying an $\ALCHOI$ knowledge base---our translation yields a positive disjunctive \datalog program of
  polynomial size.  To the best of our knowledge, unlike previous
  translations for related fragments with expressive (non-Horn) DLs,
  these are the first \emph{polynomial time} translations.
\end{abstract}

\section{Introduction}\label{sec:introduction}
\noindent

\emph{Ontology-mediated queries} (OMQs) are an extension of standard
database query languages, which allow to obtain more answers from
incomplete data by using domain knowledge given as an \emph{ontology}.
In a nutshell, an ontology is a logical theory describing a domain of
interest. It explicates the entities that are relevant in the domain,
their properties, and the relations between them, in a formal logical
language with well-defined syntax.  The most popular languages for
writing ontologies are based on \emph{Description Logics} (DLs), a
prominent family of decidable fragments of first-order logic
\cite{dlhandbook}, and on related rule languages like \datalog${\pm}$
\cite{DBLP:conf/pods/CaliGL09}.  Advocated mostly as a tool to provide
users and applications a common understanding of the domain of
discourse, ontologies are applied in a diversity of areas, including
the Web, life sciences, and many others.

A successful use case for ontologies is in data management, where the
\emph{ontology-based data access (OBDA)} paradigm advocates the use of
ontologies as conceptual views of possibly incomplete and
heterogeneous data sources, facilitating their access.  OMQs were
proposed in this setting. 
In a nutshell, an 
OMQ pairs a database query and an ontology, and the latter is used to
infer better answers from incomplete data.  \newt{For example, the
  following axiom expresses that Bachelor students are students.
  It is written in DL syntax, as a so-called \emph{TBox} $\T_1$:
  \begin{equation}
    \mathsf{BScStud} \ISA \mathsf{Student} 
  \end{equation}
  Consider an OMQ $(\T_1,q_1)$ that pairs $\T_1$ and a (unary)
  \emph{instance query} $q_1(x) = \mathsf{Student}(x)$ that retrieves
  all students.  
  If we pose this OMQ over a database that stores that Alice is a
  Bachelor student and Bob a student; in our case, written as a DL
  \emph{ABox} with two assertions
  $\{\mathsf{BScStud}(a),\mathsf{Student}(b)\}$, then we obtain both
  $a$ (Alice) and $b$ (Bob) as \emph{(certain) answers}.  Note that
  $q_1$ alone, without the knowledge in $\T_1$, would only retrieve
  $b$ as an answer.} 
OMQs have received extensive attention over the last decade, and are
still motivating major research efforts in the database and knowledge
representation research communities; see, for example, the following
works and their references
\cite{DBLP:journals/tods/BienvenuCLW14,DBLP:conf/rweb/BienvenuO15,DBLP:conf/ijcai/GottlobMP15}.

The \emph{open-world semantics} is the central property that makes
OMQs suitable for incomplete data sources.
The facts in the database only describe the world partially; other
facts that are not present may also be true.  Here Alice is a
student, 
although this is not present in the data.
However, viewing \emph{all} data as incomplete can result in too few
certain answers.  For example, we let the TBox $\T$ contain $\T_1$ and
the following axioms, stating that every student attends some
course, and that bachelor students cannot attend graduate courses:
%
\[ \mathsf{Student} \ISA \Some{\mathsf{attends}}{\mathsf{Course}}
  \qquad \mathsf{BScStud} \ISA \All{\mathsf{attends}}{\neg
    \mathsf{GradCourse}} \]
Now we take the following ABox $\A$:
%
\[ \{\mathsf{BScStud}(a), \quad \mathsf{Course}(c_1), \qquad
  \mathsf{Course}(c_2), \qquad \mathsf{GradCourse}(c_2) \}
\]
There are no certain answers for the OMQ $(\T,q_2)$ with
$q_2(x,y) =\mathsf{attends}(x,y)$; 
intuitively, we do not know which course Alice attends.  However, if
$c_1$ and $c_2$ are known to be the \emph{only} courses,
then
$(a,c_1)$ should become a certain answer, since Alice must attend some
course and it cannot be $c_2$. 

Reasoning in the presence of partial completeness has always been
considered a very relevant but 
challenging problem, both in the databases and in the knowledge
representation communities.  The last few years have seen a renewed
interest in the topic.
For a survey of many works on databases combining complete and
incomplete information, the reader may refer to
\cite{DBLP:conf/gvd/RazniewskiN14} and its references. 
In DLs, \emph{closed predicates} have been advocated as a natural way
to combine complete and incomplete knowledge. Here a set of predicates
is explicitly declared as closed; these are assumed complete and given
a \emph{closed-world}
semantics~\cite{DBLP:journals/entcs/FranconiIS11,DBLP:conf/ijcai/LutzSW13}.
In our example above,
$(a,c_1)$ is a certain answer if we declare
$\mathsf{Course}$ as \emph{closed predicate}. Note that closed
predicates make queries \emph{non-monotonic}: although
$(a,c_1)$ is a certain answer over
$\A$, it is not a certain answer over the extended set of facts $\A' =
\A \cup \{\mathsf{Course}(c_3)\}$.

The main goal of this paper is to investigate the relative
expressiveness of OMQ languages where the ontology is written in an
expressive DL with closed predicates, in terms of more traditional
query languages like \datalog.  More precisely, we are interested in
the following problem: given an OMQ $Q$, is it possible to obtain a
query $Q'$ in a suitable fragment of \datalog such that, for any ABox
$\A$, the certain answers to $Q$ and $Q'$ coincide,
and the size of $Q'$ is bounded by a polynomial in $Q$?  Such
rewritings are central in OMQ research.
The existence of $Q'$ and its size are crucial for understanding the
expressive power and succinctness of different families of OMQs.
For some less expressive DLs, 
rewritings have also paved the way
to 
scalable OMQ answering by reusing existing technologies.
As we discuss below, finding such rewritings has been a very active
research field and produced many results. However, the vast majority
of these results are for OMQs whose ontological component is in a
so-called \emph{lightweight DL}, and use as target languages either
first-order queries, or fragments of standard (positive) \datalog,
which have as common feature polynomial data complexity.  In contrast,
we consider the expressive DL $\ALCHOI$, which can express
significantly more complex structures of knowledge, but as it is
co\np-hard in data complexity \cite{Schaerf1994b}, a rewriting into
plain (non-disjunctive) \datalog cannot exist under standard
computational complexity assumptions.
Moreover, 
for DLs that are not polynomial in combined complexity, most
rewritings need exponential time and generate an exponentially larger
\datalog program.  Last but not least, the non-monotonicity caused by
\emph{closed predicates} means that our OMQs cannot be rewritten into
monotonic variants of \datalog, like positive \datalog (with or
without disjunction).

The OMQs in this paper take the form $(\T,\Sigma,\q)$, where $\T$ a
TBox in the very expressive DL $\ALCHOI$ and $\Sigma$ are the closed
predicates. For the query language of $\q$, a desirable candidate is
the class of conjunctive queries (CQs), a popular database query
language and the language of choice in the OMQ literature.
Unfortunately, 
allowing CQs and closed predicates precludes the existence of a
polynomial sized $Q'$ under standard complexity assumptions, even if
we restrict the ontology language.  Indeed, in the presence of closed
predicates, OMQ answering with CQs is \twoexptime-hard already for the
very restricted fragments of $\ALCHOI$ known as $\mathcal{EL}$ and
DL-Lite$_\mathcal{R}$~\cite{DBLP:conf/kr/NgoOS16}, while entailment in
\datalog with negation is in co\nexptimenp.  Therefore in our OMQs,
$\q$ is from a restricted class of CQs that we call \emph{c-acyclic},
and that generalizes instance queries and acyclic conjunctive
queries.
We 
propose a \emph{polynomial time translation} of any given OMQ
$Q = (\T,\Sigma,\q)$ as above, into a \datalog program extended with
negation under the stable model semantics. 

More precisely, the paper presents two main 
contributions:
\begin{itemize}
\item We first provide a game-theoretic characterization of the
    semantics of our OMQs.  To overcome the challenge that \datalog
    has no direct means to express the existential quantification
    present in $\ALCHOI$, we define a two-player game that, in a
    nutshell, verifies whether a given structure (essentially, an
    input ABox $\A$ possibly extended with additional facts about the
    same individuals) can be extended into a model of the given
    ontology, while preserving the (non)entailment of ground facts.




\item We construct, for a given c-acyclic OMQ $Q= (\ALCHOI,\Sigma, \q)$ a query $(P_Q,q)$, where $P_Q$ is a \datalog program with negation, which verifies the existence of a winning strategy for the game above.  
We prove that the certain answers of $Q$ and $(P_Q,q)$ coincide over every input ABox $\A$ over the concept and role names occurring in $\T$. $P_Q$ does not depend on the input data, and crucially, its size is polynomial. The rules in $P_Q$ are non-ground, and use predicates whose arities depend on the input $\T$. 
To our knowledge,
this is the first polynomial time translation of an expressive
(non-Horn) DL into a variant of \datalog.
\end{itemize}

With these constructions in place, we can also obtain the following results:

\begin{itemize}
\item  We obtain optimal complexity bounds, for deciding whether a
  tuple $\vec{a}$ is a certain answer of $(\T, \Sigma, \q)$ over a
  input ABox $\A$, namely co\np-completeness in data complexity and
  \exptime-completeness in combined complexity. The upper bound for
  data complexity follows from the complexity of reasoning in \datalog
  with negation. For the upper bound on combined complexity, we
  carefully analyze the shape of the program $P$ that results from our
  rewriting, and we show that the resulting rules fall into a suitably
  restricted fragment. Note that entailment from unrestricted \datalog
  programs with negation is co\nexptime-complete
  \cite{DBLP:journals/tods/EiterGM97}.

\item We study the case when closed predicates are absent. We show
  that the considered OMQs can be translated into a positive
  disjunctive \datalog program that uses, and inherently needs, the
  built-in inequality predicate $\neq$. If in addition nominals are
  disallowed from ontologies, then the inequality predicate is not
  used.

\item Our polynomial time translations heavily rely on a pair of
  constants, intuitively corresponding to the bits $0$ and $1$, that are
  introduced by rules of the target program, i.e., our target variants of \datalog support
  constants in general rules. We argue that
  disallowing this leads to the non-existence of polynomial time
  translations, under the usual assumptions in complexity theory.

\end{itemize}
A preliminary version of this article has been published in
\cite{DBLP:conf/ijcai/AhmetajOS16}, and a simplified version of this translation for $\mathcal{ALCHI}$ can be found in \cite{version-without-closedpredicates}. 

\subsection{Related Work}

\paragraph{Combining complete and incomplete information}
In the database community,  there is considerable amount of work on viewing
the data as a mixture of complete and incomplete information. 
For example, in Master Data
Management~\cite{Fan:2010:RIC:1862919.1862924}, often some
relations in an organization's database can be considered complete 
(e.g., the table of company's products),
while the remaining ones may miss tuples, i.e. may be incomplete (e.g., 
the 
customers' phone numbers). 
The question of whether a database
  with complete and incomplete predicates has complete information to
  answer a query has received significant
  attention~\cite{Fan:2010:RIC:1862919.1862924,DBLP:journals/tods/DengFG16}; 
these works focus on the complexity for variations of the problem 
  when inclusion
  dependencies are given as database constraints.

When reasoning about privacy, some or all tuples in
selected database relations may be hidden for privacy reasons, and thus from
the users' viewpoint such relations are 
incomplete~\cite{DBLP:conf/lics/BenediktBCP16, DBLP:conf/aaai/BenediktGK17}. 
This work, quite related to ours, focuses on  \emph{visible} and \emph{invisible} tables. The visible tables are
  database predicates that are assumed complete, while the invisible
  tables are predicates that should be seen as incomplete. Among
  others, the authors consider the Positive Query Implication problem
  for guarded \emph{tuple-generating dependencies (TGDs)} (also known as
  \emph{existential rules}) and guarded disjunctive TGDs, which matches the usual query answering 
  problem for DLs in the presence of closed predicates, the task we consider in this paper. The DL $\ALCHOI$ can be
  seen as an ontology language that is orthogonal to guarded disjunctive TGDs,
  which support predicates of arbitrary arity.  However, unlike us, they focus on the
  complexity of answering variations of conjunctive queries, showing
  that query answering is already \exptime-complete in data
  complexity, and 2\exptime-complete in combined complexity. 
We consider only a restricted class of conjunctive queries. 

In the DL setting, closed predicates were introduced in
  \cite{SeFB09}. In that work, ABoxes are replaced by \emph{DBoxes}, which are
  syntactically the same, but interpreted by taking all
  predicates occurring in the ABox as closed. It was shown there that this
  change   results in co\np hard instance checking, even for lightweight DLs.  
The authors also prove
  that in expressive DLs with
  nominals DBoxes can be expressed  rather naturally as part of the
    TBox. 
We remark that this encoding is only for a given DBox (i.e., dataset),  
 and hence it is not useful for obtaining data-independent rewritings of OMQs
 with closed predicates. 
More recent works on OMQs with closed predicates have focused on the complexity
of their evaluation, e.g.,
\cite{DBLP:conf/kr/NgoOS16,DBLP:conf/ijcai/LutzSW13,DBLP:journals/entcs/FranconiIS11}. 

Another research direction that brings some form of closed-word
  reasoning to DLs is the so-called \emph{hybrid knowledge bases} that
  combine DL ontologies with \datalog rules (see, e.g.,   \cite{DBLP:journals/tkde/Lukasiewicz10,DBLP:conf/kr/Rosati06,DBLP:journals/ai/EiterILST08,DBLP:journals/jacm/MotikR10}).
  Hybrid knowledge bases usually consist of a logic program
  $\curlyP$ and a TBox $\T$ in some DL.  Typically, the signature is
  partitioned into DL and rule predicates, and the closed world
  assumption is applied to the latter, but not to the former.
  Most hybrid formalisms impose syntactic \emph{safety} conditions, 
  necessary for reasoning to be decidable. 
  Some of these formalisms allow for default negation, and can thus be seen as extensions of DLs 
  with non-monotonic reasoning. However, most hybrid languages 
  cannot naturally simulate the full
  effect of closed predicates in DLs. 
A relevant exception is the recently introduced language of 
 \emph{Clopen
    knowledge bases} \cite{DBLP:conf/aaai/BajraktariOS18} that combines non-monotonic disjunctive \datalog
  rules under the stable model semantics 
   with DL ontologies, as
  well as open and closed predicates. For reasoning about such KB, an
  algorithm that rewrites a given KB into non-monotonic disjunctive
  \datalog was proposed and implemented.  However, the rewriting
    algorithm is only for a fragment that significantly restricts the
    use of closed predicates, and results in a program of exponential
    size.

\paragraph{Query rewriting}

In the context of DLs, rewriting OMQs into standard query languages, such as
\emph{first-order (FO) queries} (SQL, non-recursive \datalog)  or \datalog, is
considered one of the most prominent approaches for OMQ answering.
FO-rewritings for members of the \emph{DL-Lite} family were originally
proposed in ~\cite{DBLP:journals/jar/CalvaneseGLLR07,
DBLP:journals/ai/CalvaneseGLLR13}, but those rewritings
take exponential time. For \emph{DL-Lite$_{\mathcal{R}}$}, the authors
in \cite{DBLP:conf/kr/GottlobS12} propose a rewriting into a
polynomially-sized non-recursive \datalog program, assuming that the data
contains some fixed number of constants. 
 It was then shown in  \cite{DBLP:journals/ai/GottlobKKPSZ14} that without the additional assumption on the fixed number of constants polynomial FO-rewritings for OMQs consisting of DL-Lite$_{\mathcal{R}}$ and (U)CQs cannot exist. 
The rewritings presented in this paper also use a few constants, which are
supported by the \datalog variants we employ as target query languages.  
In \cite{DBLP:conf/ijcai/KontchakovLTWZ11}, the authors introduced the
\emph{combined approach} as a means to  obtain FO-rewritings for 
languages, like $\mathcal{EL}$, that are more expressive than DL-Lite. The
central idea is to, additionally to query rewriting, 
\emph{modify the ABox by incorporating inferences from the TBox}.  
\newt{Note that, in this case, the resulting rewriting is computed
  specifically for the
  given ABox; in contrast, our rewriting is independent of a particular ABox. } 


In the presence of closed predicates, the only rewritability results are
FO-rewritability for the core fragment of DL-Lite
\cite{DBLP:conf/ijcai/LutzSW15},  and a rewriting algorithm for queries that
satisfy some strong \emph{definability} criteria  \cite{SeFB09}. 
Since closed predicates cause co\np-hardness in  data
complexity already for instance queries in many lightweight DLs, 
the existence of FO-rewritings is ruled out. 
The recent rewriting for Clopen knowledge bases \cite{DBLP:conf/aaai/BajraktariOS18}, as
mentioned, imposes restrictions on the use of closed predicates, and 
results in a program of exponential size. 


Many DLs are not FO-rewritable,
but can be rewritten into monotonic \datalog queries, leading to
implemented systems, e.g.,
\cite{DBLP:journals/japll/Perez-UrbinaMH10,DBLP:conf/aaai/EiterOSTX12,RAPID}.
The pioneering work in~\cite{DBLP:journals/jar/HustadtMS07} showed that
instance queries in an expressive extension of \ALC can be rewritten
into a program in disjunctive \datalog, using  
a constant number of variables per rule, but exponentially many
rules. The first translation from \emph{conjunctive queries (CQs)} in
expressive DLs without closed predicates ($\mathcal{SH}$, $\mathcal{SHQ}$) to programs in disjunctive \datalog was introduced in \cite{DBLP:journals/jcss/EiterOS12}, but the program may contain double exponentially many predicates. For \ALC and for union of CQs, the existence of exponential rewritings into disjunctive \datalog was shown recently~\cite{DBLP:journals/tods/BienvenuCLW14}, and for restricted
fragments of $\mathcal{SHI}$ and classes of CQs translations to
\datalog were investigated
in~\cite{DBLP:journals/ai/KaminskiNG16}. 
A polynomial
time \datalog translation of instance queries was proposed in \cite{DBLP:conf/kr/OrtizRS10},  but for a so-called \emph{Horn-DL} that lacks disjunction. To  our knowledge, this was  until now
the only polynomial 
rewriting for a DL that is not FO-rewritable. 

  There is also a noticeable amount of works on query
  rewritings for variants of guarded TGDs. For instance, it was shown in ~\cite{DBLP:conf/mfcs/BaranyBC13,DBLP:conf/pods/GottlobRS14} that   OMQs, where the query is given as a (union of) conjunctive queries and the ontology as a set of guarded TGDs or more general classes of dependencies, can be rewritten into a
  plain  \datalog program. The authors in   ~\cite{DBLP:journals/tods/BienvenuCLW14} propose a rewriting into a disjunctive \datalog program for guarded \emph{disjunctive} TGDs and a union of
  conjunctive queries. However, the rewritings
  mentioned in \cite{DBLP:conf/mfcs/BaranyBC13,DBLP:conf/pods/GottlobRS14,DBLP:journals/tods/BienvenuCLW14}
  take exponential time, even if the number of variables in each (disjunctive) TGD is bounded by a constant. For guarded TGDs with a bound on the size of the schema and for linear TGDs, the authors in  \cite{DBLP:conf/kr/GottlobMP14, DBLP:conf/ijcai/GottlobMP15} propose rewritings into polynomially-sized non-recursive \datalog programs.  We note that, for linear TGDs of bounded arity, this result follows
from~\cite{DBLP:conf/kr/GottlobS12,DBLP:journals/ai/GottlobKKPSZ14}. 

Adapting the techniques presented in this paper,  we showed in
\cite{DBLP:conf/icdt/AhmetajOS18} that instance queries mediated by ontologies specified as
guarded disjunctive TGDs without closed predicates can be rewritten 
into disjunctive \datalog. 
The rewriting is polynomial if the number of variables in each TGD is bounded; 
we note that the TGDs considered in that work do not allow for constants
(essentially,   \emph{nominals}). Although 
the latter rewriting is inspired by the techniques presented in this paper, the higher arities and
  the rather relaxed syntax of guarded disjunctive TGDs makes the adaptation
  highly non-trivial. In the absence of disjunction, that is for
  non-disjunctive TGDs, we additionally propose a 
rewriting into a   (plain) \datalog program;
similarly as above, it is polynomial if the number 
of variables per TGD is bounded. 
We remark that the data complexity of such OMQs in
  the presence of closed predicates has been shown to be
  \pspace-hard~\cite{benedikt2018pspace}.  
 Therefore, under common assumptions in complexity theory, our rewritings
 cannot be generalized to such OMQs with closed predicates. 

\subsection{Organization}
This paper is organized as follows. In Section \ref{sec:prelims} we
introduce the DL $\ALCHOI$ and the normal form we consider in this
paper. We then define in Section \ref{sec:omq} the (ontology-mediated)
query answering problem for $\ALCHOI$ knowledge bases in the presence
of closed predicates. Section \ref{sec:game} is devoted to the
game-like characterization of models, and in particular, we tie the
inclusion of a tuple in a certain answer of a given OMQ over an input
ABox to the existence of a winning strategy in a two-player game. The
existence of such a strategy can be decided by a \emph{type-marking}
algorithm, which is presented in
Section~\ref{sec:marking-algorithm}. In Section \ref{sec:rewriting},
we first introduce the required variants of \datalog, and give an
implementation of the marking algorithm from Section
\ref{sec:marking-algorithm} as a \datalog program with negation under
the stable model semantics.  We proceed by carefully analyzing the
complexity of the program that results from our rewriting. We also
consider the restricted case when closed predicates, and possibly
nominals, are absent from OMQs. At the end of the section, we discuss
the important role that constants are playing in our translations.
Conclusions and directions for future work are given in Section
\ref{sec:conc}. For better readability, a part of a rather  long and technical proof is given in the
Appendix \ref{sec:appendix}.

\section{DL Preliminaries}
 \label{sec:prelims}
 
In this section we define the Description Logics and OMQ 
languages we use in this paper. 
 

\subsection{$\ALCHOI$ with Closed Predicates} 

\paragraph{Syntax} We recall the standard definition of the DL $\ALCHOI$ summarized in Table \ref{tab:syntax}. 
We assume countably infinite, mutually disjoint sets $\rolenames$ of \emph{role names},
$\conceptnames$ of \emph{concept names}, and $\indivnames$ of
\emph{individual names} to build complex \emph{concepts} and \emph{roles}. Intuitively, concepts are unary relations used to describe \emph{classes} of objects and roles are binary relations used to describe relations between objects. These expressions are then used
in \emph{inclusions} and \emph{assertions}. The former express general
dependencies to be satisfied by data instances, while the latter assert the
membership of specific (pairs of) individuals in concepts and roles. 
A \emph{(plain) knowledge base (KB)} is a tuple $\K = (\T, \A)$, where $\T$ is a finite set of inclusions called a \emph{TBox}, 
 and $\A$ is a finite set of assertions called an \emph{ABox}. 
A
 \emph{KB with closed predicates} is a triple $\K = (\T,\Sigma, \A)$, where
 $(\T, \A)$ is a plain KB and $\Sigma \subseteq \conceptnames\cup \rolenames$ is the set of \emph{closed predicates} in $\K$. 


\begin{table}
{
   \centering\small\renewcommand{\arraystretch}{1.3}
  $\begin{array}{rll@{}}
       \toprule
\multicolumn{2}{l}{\text{Roles $r$ and concepts $C_{(i)}$:}}\\
r  \longrightarrow & p ~\mid~ p^-&\\ [1mm]
  C  \longrightarrow & A ~\mid~ \top ~\mid~ \bot ~\mid~ \{a\}  ~\mid~  C_1 \AND C_2 ~\mid~ C_1 \OR C_2 ~\mid~ \NOT C  \mid\\ & \SOME{r}.{C} \mid \ALL{r}.{C} \\
 \multicolumn{2}{l}{\text{where $p\in \rolenames$,
 $A\in \conceptnames$, and $a \in \indivnames$.}}\\
    \midrule
\multicolumn{2}{l}{\text{Assertions and inclusions:}}\\
  C_1 \ISA C_2 & \text{concept inclusion}\\\
  r_1 \ISA r_2 & \text{role inclusion}\\
  A(a)  & \text{concept assertion}\\
  p(a_1,a_2)& \text{role assertion}\\
 \multicolumn{2}{l}{\text{where $C_1$, $C_2$ are concepts, $r_1$, $r_2$ are roles, $A \in \conceptnames$, $p \in \rolenames$, and $\{a, a_1,a_2\}\subseteq \indivnames$.}}\\
   \midrule
\end{array}$ 
}
  \caption{Syntax of $\ALCHOI$} 
\label{tab:syntax}

\end{table}

\medskip 

\paragraph{Semantics} We recall the usual semantics of DL KBs, 
defined in terms of \emph{interpretations}, 
relational structures over a (possibly infinite) non-empty domain
and a signature consisting of unary predicates (the concept names), binary
predicates (the role names), and constants (the individuals). 

\begin{definition}\label{def:semanticsALCHOIQ}
  An \emph{interpretation} is a pair $\I=\tuple{\dom{\I},\Int{\I}{\cdot}}$ where
  $\dom{\I} \neq \emptyset$ is the \emph{domain}, and $\cdot^{\I}$
is an \emph{interpretation function} with 
 $\Int{\I}{A} \subseteq \dom{\I}$ for each $A\in\conceptnames$,
  $\Int{\I}{p}\subseteq\dom{\I}\times\dom{\I}$ for each $p\in\rolenames$, and
  $\Int{\I}{a}\in \dom{\I}$ for each $a\in \indivnames$.  The function
  $\Int{\I}{\cdot}$ is extended to all \ALCHOI concepts and roles as usual,
  see Table~\ref{tab:semantics}.

  Consider an interpretation $\I$.  
  For an inclusion $q_1 \ISA q_2$, we say that $\I$ 
  \emph{satisfies} $q_1 \ISA q_2$ and write $\I \models q_1 \ISA q_2$ 
 if
$q_1^{\I}\subseteq q_2^{\I}$.
  For an assertion $\beta$ of the form $q(\vec{a})$, we say that $\I$
  \emph{satisfies} $q(\vec{a})$, in symbols $\I
  \models q(\vec{a})$,  
 if $(\vec{a})^{\I} \in q^{\I}$.  
  For $\Gamma$ a TBox or ABox, we write $\I\models
  \Gamma$ if $\I\models \alpha$ for  all $\alpha \in \Gamma$. 
The notion of satisfaction extends naturally to plain KBs:
$\I\models (\T,\A)$ if $\I\models \T$ and $\I\models \A$.
For KBs with closed predicates, we need the following notion.
Let $\A$ be an ABox and $\Sigma\,{\subseteq}\,\conceptnames\cup\rolenames$. We write $\I\models_{\Sigma}\A$ if: \begin{compactenum}[\it (a)] 
\item $\I\models \A$,
\item for all $A \in \Sigma \cap \conceptnames$, if $e \in A^\I$, then
  $A(e) \in \A$, and
\item for all $r \in \Sigma\cap \rolenames$, if $(e_1, e_2) \in r^\I$,
  then $r(e_1, e_2) \in \A$.
\end{compactenum} 
Then, for a KB $\K= (\T, \Sigma, \A)$, we write $\I\models \K$ if the following hold: 
\begin{compactenum}[\it(i)] 
\item $a \in \dom{\I}$ and $a^\I = a$ for each  $a \in \indivnames$ occurring in $\K$, \item $\I\models
\T$, and 
\item $\I\models_{\Sigma} \A$. 
\end{compactenum}
Note that we make the \emph{standard name assumption (SNA)} for the individuals occurring in $\K$ (condition (i)), which is common for closed predicates. 
For $\Gamma$ a plain KB or a KB with closed predicates, or a TBox $\T$, $\Gamma$ is consistent if there is some $\I$ that satisfies $\Gamma$.   For a TBox $\T$ and an inclusion $q_1 \ISA q_2$, we write $\T \models q_1 \ISA q_2$, if for every interpretation $\I$, $\I \models \T$ implies $\I \models q_1 \ISA q_2$.
In what follows, if no confusion arises, we may simply say KB to refer to a KB with closed predicates. 
   \hfill $\triangleleft$
\end{definition}

\begin{table}
  {   \centering\small\renewcommand{\arraystretch}{1.3}
   $\begin{array}{r@{~~=~~}l}
     \toprule
     \multicolumn{2}{l}{\text{Concept constructors in $\ALCHOI$}:}\\
     \top^\I & \dom{\I}\\
     \bot^\I & \emptyset\\
     \{a\}^\I & a^\I\\
     (\NOT C_1)^\I & \dom{\I} \setminus C_1^\I\\
     (C_1 \AND C_2)^\I &  C_1^\I \cap C_2^\I\\
     (C_1 \OR C_2)^\I & C_1^\I \cup C_2^\I\\
     (\SOME{r}.{C})^\I & \{e_1 \mid \text{ for some }
       e_2 \in \dom{\I}, (e_1, e_2)\in r^\I \text{ and } e_2\in C^\I\}\\
     (\ALL{r}.{C})^\I & \{e_1 \mid \text{ for all }
       e_2 \in \dom{\I}, (e_1, e_2)\in r^\I \text{ implies } e_2\in C^\I\}\\
     \midrule
     \multicolumn{2}{l}{\text{Role constructors in $\ALCHOI$}:}\\
     {(r^-)}^\I & \{(e_1, e_2) \mid (e_2, e_1 )\in r^\I\}\\   
     \bottomrule
   \end{array}$
  }
  \caption{Semantics of $\ALCHOI$ concepts and roles}
\label{tab:semantics}

\end{table}
We let $\conceptnames^+=\conceptnames \cup
\{\{a\}\mid a\in\indivnames\}\cup\{\top,\bot\}$ be the \emph{basic concepts} and the roles $r$ of the form $p$, $p^-$ with $p\in \rolenames$ be the \emph{basic roles}.
For $\Gamma$ a TBox, ABox, or KB, we denote by $\indivnames(\Gamma)$,
$\rolenames(\Gamma)$,  $\conceptnames(\Gamma)$, $\conceptnames^+(\Gamma)$, and
$\conceptnames^+(\T)$, the set of individuals, role names, concept names, and
basic concepts that occur in $\Gamma$, respectively.  
We write $r^-$ to mean $p^-$ if $r = p$ is a role name, and $p$ if $r = p^-$
is the inverse of a role name. 
We also write $r^{(-)} \in \Sigma$ to mean $r \in \Sigma$ or 
$r^- \in \Sigma$.

As usual, we use $\suprr$  to denote 
 the smallest relation that satisfies: 
\begin{inparaenum}[(1)]
	\item\label{it:sup1}  $r \suprr r$ for every basic role $r$ in $\T$, 
\item\label{it:sup4} if $r \suprr s$, then $r^- \suprr s^-$, 
\item\label{it:sup2}  if $r \suprr r_1$ and $r_1 \ISA s \in \T$, then   $r
  \suprr s$. 
\end{inparaenum} 

\subsection{Normal Form}
Our results apply to arbitrary TBoxes, but 
to simplify presentation, we consider TBoxes with a restricted syntactic structure:

\begin{definition}  A TBox $\T$ is in \emph{normal form} if each inclusion in
  $\T$ has one of the following forms: 
\begin{align*}
 &  \mathbf{(N1)}~~ B_1\AND \cdots \AND B_n \ISA B_{n+1}\OR\cdots \OR B_{k} \\[1pt]
& \mathbf{(N2)}~~ A\ISA \Some{r}{A'} \qquad \mathbf{(N3)}~~  A\ISA \All{r}{A'} \qquad \mathbf{(N4)}~~  r \ISA s
\end{align*}
where $r$ and $ s$ are basic roles,
$\{B_1, \ldots,B_k\}\subseteq \conceptnames^+ $, and $\{A, A'\} \subseteq
\conceptnames$. 
\end{definition}
 
The inclusions of type (N2) are often called \emph{existential inclusions} in this paper. 
It is a standard result that, in plain KBs, the TBox can be transformed into this normal form in polynomial time, while preserving the answers to any query that uses symbols from the input KB only. 
This does not change in the presence of closed predicates. 

\begin{proposition}  
Consider a KB with closed predicates $\K = (\T,\Sigma,\A)$. 
Then $\T$ can be transformed in polynomial time into a TBox 
$\hat{\T}$ in normal form in such a way that, for every interpretation $\I$, the following hold:
\begin{enumerate}[(1)]
\item If $\I \models (\hat{\T},\Sigma,\A)$, then $\I\models (\T,\Sigma,\A)$. 
\item If $\I \models (\T,\Sigma,\A)$, then $\I$ can be extended into $\I'$
  such that \begin{compactenum}[(a)] 
  \item $\I' \models (\hat{\T},\Sigma,\A)$, and 
\item $q^{\I} = q^{\I'}$ for all symbols $q$ in $\indivnames(\K) \cup \rolenames(\K) \cup
  \conceptnames(\K)$.

    \end{compactenum} 
\end{enumerate}
\end{proposition}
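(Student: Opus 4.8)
The plan is to use the classical structural (renaming) transformation into normal form and then observe that closed predicates do not interfere with the argument. First I would preprocess every concept inclusion $C_1\sqsubseteq C_2$ by pushing negations inward, working with an equivalent inclusion $\top\sqsubseteq E$ in which $E=\mathrm{NNF}(\neg C_1\sqcup C_2)$ is in negation normal form; role inclusions are already of type (N4) because $\ALCHOI$ permits only basic roles there, and both $\A$ and $\Sigma$ are left untouched. Then I would apply structural renaming, processing subconcepts bottom-up: for every ``one-level'' subconcept $E$ (a Boolean combination, an $\exists r.(\cdot)$ or $\forall r.(\cdot)$, a nominal, $\top$ or $\bot$) that is not already a concept name, introduce a fresh concept name $A_E\notin\conceptnames(\K)\cup\rolenames(\K)\cup\indivnames(\K)$, replace the occurrence of $E$ by $A_E$, and record a definitional inclusion oriented according to the polarity of the occurrence (namely $A_E\sqsubseteq E^{\downarrow}$ for a positive occurrence and $E^{\downarrow}\sqsubseteq A_E$ for a negative one, where $E^{\downarrow}$ is $E$ with its immediate subconcepts already renamed). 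A Boolean one-level subconcept then yields an inclusion of type (N1) after distributing the conjunction on the left and the disjunction on the right; an $\exists$-restriction becomes (N2) when it occurs positively and, via the contrapositive $\exists r.A'\sqsubseteq A_E\iff\neg A_E\sqsubseteq\forall r.\neg A'$ routed through fresh complement concept names $\bar A$ (each governed by the two (N1)-inclusions $A\sqcap\bar A\sqsubseteq\bot$ and $\top\sqsubseteq A\sqcup\bar A$), reduces to (N3) when it occurs negatively, and dually for $\forall$-restrictions; nominals $\{a\}$ are basic concepts, so they may remain inside (N1)-inclusions and only need a name (with the two trivial (N1)-inclusions $A_{\{a\}}\sqsubseteq\{a\}$ and $\{a\}\sqsubseteq A_{\{a\}}$) when they occur directly under a quantifier. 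Each original subconcept contributes only a bounded number of fresh names and inclusions, so $\hat{\T}$ has size linear in $|\T|$ and is computable in polynomial time, and crucially $\indivnames(\hat{\T})=\indivnames(\T)$ since no new individuals are introduced.

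For part (1), suppose $\I\models(\hat{\T},\Sigma,\A)$. Since the definitional inclusions are polarity-oriented, a routine induction on subconcept structure shows that every renamed occurrence satisfies the one-sided containment matching its polarity ($A_E^{\I}\subseteq E^{\I}$ in the positive case, $E^{\I}\subseteq A_E^{\I}$ in the negative case), and that each complement name satisfies $\bar A^{\I}=\Delta^{\I}\setminus A^{\I}$, forced by its two inclusions; substituting these containments back into the renamed form of each original axiom gives $\I\models C_1\sqsubseteq C_2$, so $\I\models\T$. The ABox of the transformed KB is still $\A$ and its closed predicates are still $\Sigma$, and the clauses defining $\I\models_\Sigma\A$ as well as the standard name assumption mention only $\A$, $\Sigma$ and $\indivnames(\K)$ — all unchanged — so they are inherited verbatim. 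Hence $\I\models(\T,\Sigma,\A)$.

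For part (2), let $\I\models(\T,\Sigma,\A)$. Define the expansion $\I'$ to coincide with $\I$ on every concept name, role name and individual of the original signature, and set $A_E^{\I'}:=E^{\I}$ for each fresh name (and $\bar A^{\I'}:=\Delta^{\I}\setminus A^{\I}$ for each complement name), where the defining subconcept is evaluated with the already-fixed values — well defined by the bottom-up order. Then $\I'$ satisfies every definitional and complement inclusion by construction (indeed both orientations hold, a fortiori the oriented one), and the renamed form of each original axiom holds in $\I'$ because $\I$ satisfied the original axiom and the fresh names denote exactly the intended subconcepts; thus $\I'\models\hat{\T}$. Condition (2b) holds by construction, and since $\I'$ agrees with $\I$ on all symbols of $\indivnames(\K)\cup\rolenames(\K)\cup\conceptnames(\K)$ we also get $\I'\models_\Sigma\A$ and the SNA, so $\I'\models(\hat{\T},\Sigma,\A)$.

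The only genuinely delicate point is the structural renaming step — realizing the polarity-oriented definitions inside the narrow forms (N1)--(N4), in particular shunting $\exists$/$\forall$ occurrences that sit on the ``wrong'' side of an inclusion through freshly named complements without incurring more than a linear blow-up; this is exactly the classical structural-transformation bookkeeping and is by now routine. Closed predicates add essentially nothing: the transformation never alters $\A$, never mentions a symbol of $\Sigma$, and the expansion in part (2) preserves the interpretation of every original symbol, so the clauses (a)--(c) defining $\models_\Sigma$ and the standard name assumption transfer trivially in both directions.
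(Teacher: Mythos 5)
Your proposal is correct and follows essentially the same route as the paper's (sketched) proof: the classical structural transformation introducing fresh names for subconcepts, with the observation that $\A$, $\Sigma$, and the original signature are untouched, so condition $\models_\Sigma$ and the standard name assumption transfer in both directions. The paper defers the renaming bookkeeping to the standard literature (handling wrong-polarity existentials via inverse roles rather than your complement names), but this is an inessential variation; your spelled-out argument fills in exactly the details the paper omits.
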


\begin{proof}[Proof (Sketch).]
The proof relies on the availability of fresh concept and role names, which are
 introduced in the place of complex expressions. 
 One can show the following:
 \begin{compactenum}[(i)] 
 \item Every model  of $\hat{\T}$ is a model of $\T$. 
 \item If $\I$ is a model of $\T$, we can obtain an $\I'$ with $\I' \models
   \hat{\T}$ by suitably
   interpreting the fresh  concept names, while  preserving 
   $q^{\I} = q^{\I'}$ for all symbols $q$ in $\indivnames(\K) \cup \rolenames(\K) \cup
  \conceptnames(\K)$. 
 \end{compactenum} 
We omit the detailed proof of (i) and (ii), since it is standard 
 (e.g., see the proof for $\mathcal{ALCHI}$ in
 \cite{DBLP:conf/ijcai/SimancikKH11},  accommodating nominals is
 straightforward).\footnote{The normal form in
   \cite{DBLP:conf/ijcai/SimancikKH11} allows for axioms $\exists r.A
   \sqsubseteq B$, which can be rewritten as $A \sqsubseteq \forall r^-.B$.}
Now (1) follows  from (i). 
For (2), since the interpretation in $\I'$ of individuals and of all predicates 
in $\A$ and $\Sigma$ is 
the same as in $\I$, it follows that $\I \models_\Sigma \A$ implies  $\I'
\models_\Sigma \A$. From the latter and from $\I' \models \hat{\T}$ we infer (a), while (b) is
a consequence of the fact that $\I$ and $\I'$ only differ in the freshly introduced~predicates. 
\end{proof}

\newcommand{\naf}{\mathit{not}~}
\newcommand{\ground}{\mathsf{ground}}

\section{Ontology-Mediated Queries}\label{sec:omq} 

We are now ready to introduce the  query languages that are the object of our
study. 
In this paper we  consider OMQs of the
form $Q=(\T, \Sigma, \q)$, where $\T$ is a TBox and $\Sigma
\subseteq \conceptnames \cup \rolenames$ is a set of closed predicates. 
A  natural candidate language for $\q$ is the language of \emph{conjunctive
  queries (CQs)}, which are essentially first-order formulas that use only existential quantification and conjunction.


\begin{definition} \label{def:cqs}
Let $ \varnames$ be a countably infinite set of \emph{variables} disjoint from $\conceptnames$, $\rolenames$, $\indivnames$. 
A \emph{(DL) atom} $\alpha$ is an expression of the form $A(x_1)$ or $r(x_1, x_2)$ 
 with $A \in \conceptnames$, $r \in \rolenames$ and $\{x_1, x_2\} \subseteq
 \varnames$. By $\vars(\Gamma)$ we denote the variables occurring in a set
 $\Gamma$ of atoms. If no confusion arises, a tuple
 of variables may be identified  with the set of its elements.  

A \emph{conjunctive query} $\q$  is an expression of the form 
 \[\exists \vec{y}.\alpha_1   \wedge \cdots \wedge \alpha_n \]
where each $\alpha_i$ is an atom and $ \vec{y}\subseteq \vars(\{\alpha_1, 
\cdots , \alpha_n \})$.
We may treat a CQ as a set of atoms. 
The  variables in  $\vec{x} = \vars(\{\alpha_1, 
\cdots \alpha_n \}) \setminus \vec{y}$ are called \emph{answer variables}, and
the arity of $\q$ is defined as the arity of $\vec{x}$. 
We may write $\q(\vec{x})$ for a CQ with answer variables $\vec{x}$. 
If there are no answer variables, then $\q()$ is a \emph{Boolean CQ}.  If $\q(\vec{x})$
consists of only one atom $\alpha$ and $\vec{x} = \vars(\{\alpha\})$, that is,
there are no existentially quantified variables, 
then $\q(\vec{x})$ is an \emph{instance query}. 

Let $\q(\vec{x})$ be a CQ and $\I$ an interpretation. 
For a mapping $\pi: \vars(\q) \rightarrow 
  \dom{\I}$, we write $\I \models \pi(\q(\vec{a}))$ if $\pi(\vec{x}) = \vec{a}$ and $\I
\models \pi(\alpha)$ for each atom $\alpha$ in $\q$ (where we slightly abuse
notation and apply $\pi$ to tuples  of variables and atoms). 
If $\I
\models \pi(\q(\vec{a}))$ for some $\pi$, 
we write $\I \models \q(\vec{a})$ and say that $\I$
\emph{satisfies} $\q(\vec{a})$.
For $\K$ a plain KB, or a KB with closed
predicates  
and for a tuple of constants $\vec{a}$, we say $\K$ \emph{entails} $\q(\vec{a})$, written $\K \models \q(\vec {a})$, if $\I
\models \q(\vec {a})$ for every $\I$ such that $\I\models \K$.

Let $Q=(\T, \Sigma, \q(\vec{x}))$ be an OMQ, where $ \q(\vec{x})$ is a CQ of
arity $n$. For an ABox $\A$ and $\vec{a} \in \indivnames^n$, 
 $\vec{a}$ is a \emph{certain answer} to $Q$ over $\A$ if 
 $(\T,\Sigma,\A) \models \q(\vec{a})$. 
We denote with $\cert(Q, \A)$ the set of
 certain answers of $Q$ over $\A$. Note that if $\q$ 
is a Boolean CQ, then 
   we have either $\cert(Q, \A) = \{()\}$ if $(\T,\Sigma,\A)
   \models \q()$, or $\cert(Q, \A) = \{\}$ otherwise. 
\end{definition}

The decision problem associated to answering OMQs is the following: given  an OMQ $Q$, a (possibly empty)
tuple of individuals $\vec{a}$, and an ABox $\A$, decide whether $\vec{a} \in \cert(Q, \A)$. All complexity bounds in this paper are for this
decision problem. 

We remark that individuals are not allowed in queries, but this is not a
limitation.  To simulate them in an OMQ $Q = 
  (\T,\Sigma,\q)$ we can use a 
  fresh $A_a \in \conceptnames$ and a fresh $x_a \in \varnames$ 
for each $a \in \indivnames$ that
  occurs in  $\q$. If the atom $A_a(x_a)$ is added to $\q$, and the axiom 
  $\{a\} \ISA A_a$ is added to $\T$, then each occurrence of $a$ can be
  replaced by $x_a$.

In the OMQ literature, CQs are very prominent, and 
 most research
so far has focused on such OMQs and their extensions. 
It is well known that answering such OMQs is a 2\exptime complete problem
whenever the TBox $\T$ is in any fragment of $\ALCHOI$ containing
$\mathcal{ALCI}$ \cite{DBLP:conf/cade/Lutz08}  or $\mathcal{ALCO}$
\cite{DBLP:conf/kr/NgoOS16}. 
In the presence of closed predicates, 2\exptime-hardness holds already for
TBoxes in the so-called $\mathcal{EL}$ and DL-Lite logics, both of which are
very  restricted fragments of $\mathcal{ALCHI}$. 

\begin{theorem}\label{thm-2exphard}\cite{DBLP:conf/cade/Lutz08,DBLP:conf/kr/NgoOS16}
Answering an OMQ $Q = (\T , \Sigma, \q)$, where $\q$ is a CQ is 2\exptime-hard in the following cases: 
\begin{enumerate}[(i)]
\item\label{thm2exp:1} $\Sigma = \emptyset$ and $\T$ is written in any DL that contains   $\mathcal{ALCI}$ or $\mathcal{ALCO}$. 
\item\label{thm2exp:2} $\Sigma \neq \emptyset$ and $\T$ is written in any DL that contains
  $\mathcal{EL}$ or DL-Lite$_\mathcal{R}$. 
\end{enumerate}
\end{theorem}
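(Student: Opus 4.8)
The plan is to prove each item by a polynomial-time reduction from the word problem of an alternating Turing machine (ATM) running in exponential space, which is complete for \twoexptime\ by the Chandra--Kozen--Stockmeyer characterization $\twoexptime=\textsc{AExpSpace}$. Fix such an ATM $M$ and an input $w$ of length $n$; we construct, in time polynomial in $|M|+n$, an OMQ $Q=(\T,\Sigma,\q)$ and an ABox $\A$ (in fact a constant-size $\A$, with all the work done in $\T$ and $\q$) such that a fixed tuple lies in $\cert(Q,\A)$ iff $M$ accepts $w$. Since \twoexptime\ is closed under complement, it is harmless to instead arrange \emph{non}-entailment to correspond to acceptance, which is the more natural polarity below. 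Because each item asks for hardness in \emph{any} DL containing the named logic, it suffices to use only the constructs of the minimal logic: a TBox written in $\mathcal{ALCI}$ (resp. $\mathcal{ALCO}$, $\mathcal{EL}$, DL-Lite$_\mathcal{R}$) is also an $\ALCHOI$ TBox and a TBox of every extension, so hardness lifts upward for free.

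For item~(\ref{thm2exp:1}) with $\mathcal{ALCI}$ I would follow Lutz's encoding~\cite{DBLP:conf/cade/Lutz08}. The TBox forces every model to contain a tree mirroring the computation tree of $M$ on $w$: each tree node represents a configuration, and a configuration is in turn a chain of $2^{p(n)}$ tape cells whose addresses are stored in a $p(n)$-bit binary counter realized by concept names $B_1,\dots,B_{p(n)}$. Existential inclusions (type $\mathbf{(N2)}$) generate the next cell within a configuration and the successor configuration(s) in the tree, value restrictions (type $\mathbf{(N3)}$) propagate counter bits and transition constraints, the branching of the tree simulates the existential/universal choices of $M$, and an ``accepting'' flag is propagated bottom-up. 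The query $\q$ is a Boolean CQ of size polynomial in $n$ whose job is to detect a \emph{defect}: a cell at address $i$ of a configuration together with the cell at the same address $i$ of a child configuration whose contents violate the transition relation of $M$, or a root that carries no accepting flag. Comparing the two $p(n)$-bit addresses bit by bit is exactly what forces $\q$ to be a long conjunction, and inverse roles are what let a single match of $\q$ travel from a child configuration back up to the corresponding cell of the parent. Arranging the encoding so that no spurious homomorphism of $\q$ exists, one obtains $(\T,\emptyset,\A)\not\models\q()$ iff some model is a genuine accepting computation of $M$ on $w$, i.e.\ iff $M$ accepts $w$; complementing gives \twoexptime-hardness of the certain-answer problem. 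For the $\mathcal{ALCO}$ case the same plan works, replacing the backward navigation that inverse roles provide by a nominal ``spy point'' to which every configuration is linked and which $\q$ uses as a fixed anchor~\cite{DBLP:conf/kr/NgoOS16}.

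For item~(\ref{thm2exp:2}) the point is that closed predicates restore, in a weak DL like $\mathcal{EL}$ or DL-Lite$_\mathcal{R}$, enough case-analysis power to run essentially the same encoding. The key gadget: declare a concept $N$ (and/or a role) closed and place a constant number $k$ of named individuals $n_1,\dots,n_k$ in $N$ via $\A$; then an inclusion $A\ISA\exists r.N$ forces the $r$-successor of any $A$-element to be one of the $n_j$, which is precisely a $k$-ary disjunction. Using such gadgets one builds the binary counter, the cells, the tree of configurations and the bottom-up accepting flag as before, and, by routing navigation through a fixed closed individual, recovers the ``backward'' reach that $\mathcal{ALCI}$ got from inverse roles; the query $\q$ is again a polynomial-size conjunction that fires exactly on an encoding/transition defect or a non-accepting root. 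The non-monotonicity of closed predicates is no obstacle, as we only ever use a single ABox. Taking $\T$ and $\Sigma\neq\emptyset$ in the minimal logic, hardness again lifts to every extension; details are in~\cite{DBLP:conf/kr/NgoOS16}.

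The main obstacle, common to all cases, is designing $\q$ together with $\T$ so that homomorphisms of $\q$ into models of $\K$ correspond \emph{only} to the intended defects --- ruling out the ``shortcut'' matches that fold the query onto unintended parts of the model --- while keeping $\q$ and $\T$ polynomial in $n$; this is the delicate part of the bit-by-bit address comparison and of faithfully reflecting the tree-of-configurations structure (and is also why the query here is genuinely not c-acyclic, consistently with the \exptime\ upper bound obtained later for c-acyclic OMQs). In the closed-predicate setting there is the further subtlety of simulating inverse-role navigation through only the finitely many closed elements, since $\mathcal{EL}$ and DL-Lite$_\mathcal{R}$ offer neither inverse roles nor negation. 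The remaining pieces --- the counter arithmetic, the transition constraints, and the accepting flag --- are routine, if lengthy, bookkeeping.
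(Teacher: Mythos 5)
This theorem is not proved in the paper at all: it is imported verbatim from the cited works of Lutz (for $\mathcal{ALCI}$) and Ngo, Ortiz, and \v{S}imkus (for $\mathcal{ALCO}$ and for closed predicates with $\mathcal{EL}$/DL-Lite$_\mathcal{R}$), so there is no in-paper argument to compare against. Your sketch reproduces exactly the reductions used in those references (exponential-space ATM, tree of configurations with a $p(n)$-bit cell counter, a polynomial-size ``defect-detecting'' CQ, and the closed-predicate gadget that turns $A\ISA\exists r.N$ with finitely many named $N$-elements into disjunction), so it is correct in approach and essentially the same as the cited proofs; the only nit is that for DL-Lite$_\mathcal{R}$ the qualified existential $A\ISA\exists r.N$ must be simulated via a fresh subrole and an inverse-role inclusion, since it is not directly in the syntax.
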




In this paper we want to show rewritability in polynomial time into \datalog variants
for classes of OMQs.   
But even the richest \datalog variants we consider allow to decide  
entailment 
in co\nexptimenp, so the existence of a polynomial rewriting for
the classes of OMQs mentioned in Theorem~\ref{thm-2exphard} 
would imply $\twoexptime \subseteq$ co\nexptimenp, 
contradicting usual assumptions in complexity theory. 
This rules out these classes  as potential candidates for 
polynomial rewritings; 
note that,  in particular, by item~(\ref{thm2exp:2}), we cannot have CQs for
any non-trivial 
DL if we want to have closed predicates. 
For this reason, we need to consider more restricted classes of 
OMQs.  We consider  below a relaxation of the class of \emph{acyclic CQs}, but
first we define a class of queries---all whose variables are mapped to
individuals---that we call \emph{c-safe}, and that strictly generalizes
instance queries. 
 
In what follows, we may write \rin as a shorthand for $r\suprr s$ for some $s^{(-)}\in \Sigma$, that is, $r$ is subsumed by some closed predicate in $\T$.  

\begin{definition}[c-variables, c-safe CQs] 
Consider an OMQ $Q=(\T,\Sigma, \q)$ where $\q$ is a CQ. 
We call $x$ a \emph{c-variable (in $Q$)} if
it is an answer variable of $\q$, or if at least one of the following holds:
\begin{compactitem}[-]
\item there exists some atom $r(x,y)$ or $r(y,x)$ in $\q$ such that \rin, or
\item there exists some atom $A(x)$ in $\q$ such that $A \in \Sigma$.
\end{compactitem}
We call $Q$ a \emph{c-safe query} if all its variables are c-variables. 
\end{definition}

\newcommand{\modelsind}{\models}

The two conditions in the definition above ensure that to answer a c-safe OMQ we only need to consider
mappings of the variables to the individuals that occur explicitly in the input ABox.
We also consider a more general class of queries that allows for conjunctions and existentially quantified variables provided they only participate in
cycles in a restricted way. It is a slight generalization of the usual
\emph{acyclicity} condition for CQs
\cite{Yannakakis:1981:AAD:1286831.1286840}.

 \begin{definition}[c-acyclic CQs]\label{def:cacyclic}
Consider an OMQ $Q=(\T,\Omega,\q)$, where
  $\q$ is a CQ such that each $\alpha \in \q$ is an atom with terms from $\varnames$.
The query graph (a.k.a. the Gaifman graph) $G(\q)$ of $\q$ is the undirected graph whose nodes are the variables of $\q$, and that has an edge between $x$ and $y$ if they occur together in  some atom in  $\q$. 
The connected components of $\q$ are those of  $G(\q)$, and $\q$ is \emph{connected} if $G(\q)$ is. 
We call a CQ $\q$ \emph{acyclic} if $G(\q)$ is  acyclic, and for every edge $(x,y)$ in $G(\q)$, there is exactly one atom in $ \q$ of the form $r(x, y)$ or $r(y,x)$.

We  call  $Q=(\T,\Omega,\q)$  \emph{acyclic  modulo c-variables} (c-acyclic for short) if the query $\q^-$ obtained by dropping all atoms $r(x,y)$ where both $x$ and $y$ are $c$-variables is acyclic,  
and every connected component in $G(\q^-)$ 
has \emph{one} c-variable. 
 \end{definition}

To answer  c-acyclic OMQs we reduce them to c-safe OMQs with the same
answers, using the so-called \emph{rolling up} technique,  
which essentially replaces by a complex
concept each acyclic query component that contains one c-variable.

\begin{definition}
 Let $\q$ be a connected, acyclic CQ, and let $x_0 \in \vars(\q)$. 
We denote by $T_{q,x_0}$ the tree that results from
 $G(\q)$ by taking $x_0$ as a root.  
 We inductively assign to each variable $x \in \vars(\q)$ a 
 concept $C_{\q,x}$ as follows:
\begin{itemize}
	\item[-] if $x$ is a leaf node then $C_{\q,x} = \bigsqcap_{C(x) \in \q}C$,
	\item[-] if the children of the node $x$ are $x_1, \ldots, x_n$, then 
	\[C_{\q,x} = \bigsqcap_{C(x) \in \q}C \AND  \bigsqcap_{1\leq i \leq n \atop r(x,x_i)\in \q \text{ or } r^-(x_i,x)\in \q} \exists r.C_{\q,x_i}\]
\end{itemize}
We call $C_{\q,x_0}$ the \emph{query concept of $\q$ w.r.t. $x_0$}.
\end{definition}

The concept $C_{\q,x}$ is in a fragment of $\ALCHOI$ called
$\mathcal{ELI}$, and it 
 correctly  captures the semantics
of the CQ $\q$. 

\begin{lemma}\label{lem:treecon}\cite{Horrocks99querycontainment} Let $\q$ be a connected, acyclic CQ. Let $x \in \vars(\q)$, and 
let $C_{\q,x}$ be the query concept of $\q$
  w.r.t. $x$. Then for every interpretation
  $\I=\tuple{\dom{\I},\Int{\I}{\cdot}}$ and any object $d \in \dom{\I}$, 
the following are equivalent: 
\begin{compactitem}[-]
\item
$d \in C_{\q,x}^\I$,
\item there is some $\pi$ such that $\I \models \pi(\q)$ with  $\pi(x) = d$.
\end{compactitem}
\end{lemma}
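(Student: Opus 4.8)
The plan is to prove both implications simultaneously by structural induction on the rooted tree $T_{\q,x}$ — equivalently, on the number of variables of $\q$ — using that $\q$ is connected and acyclic, so that $G(\q)$ is a tree and $T_{\q,x}$ is well defined for every choice of root $x$.

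For the base case, suppose $x$ is a leaf of $T_{\q,x}$, i.e.\ $x$ has no children. Since $G(\q)$ is connected and $x$ is its root, this forces $\vars(\q)=\{x\}$, so $\q$ consists only of concept atoms $C(x)$ and $C_{\q,x}=\bigsqcap_{C(x)\in\q}C$. Then $d\in C_{\q,x}^\I$ iff $d\in C^\I$ for every atom $C(x)\in\q$, iff the assignment $\pi=\{x\mapsto d\}$ is a match of $\q$ in $\I$ with $\pi(x)=d$; this is exactly the claimed equivalence (the empty conjunction giving $\top$, which every $d$ satisfies, matching the case with no concept atoms).

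For the inductive step, let $x_1,\dots,x_n$ be the children of $x$ in $T_{\q,x}$. By the (strong) acyclicity condition, for each $i$ there is exactly one atom $\beta_i\in\q$ on the edge $\{x,x_i\}$, of the form $r_i(x,x_i)$ or $r_i^-(x_i,x)$; in either case an assignment with $x\mapsto d$, $x_i\mapsto d_i$ satisfies $\beta_i$ in $\I$ precisely when $(d,d_i)\in r_i^\I$. Let $\q_i$ be the set of atoms of $\q$ all of whose variables lie in the subtree of $T_{\q,x}$ rooted at $x_i$. Because $T_{\q,x}$ is a tree, these variable sets are pairwise disjoint and disjoint from $\{x\}$, each $\q_i$ is connected and acyclic with $x_i$ as its root, and $\q$ is the disjoint union of $\{C(x)\in\q\}$, $\{\beta_1,\dots,\beta_n\}$, and $\q_1,\dots,\q_n$; moreover the concept attached to $x_i$ while building $C_{\q,x}$ depends only on the subtree rooted at $x_i$, hence equals the query concept $C_{\q_i,x_i}$ of $\q_i$ w.r.t.\ $x_i$. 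Therefore, by the semantics of $\sqcap$ and of $\exists r_i.C$, $d\in C_{\q,x}^\I$ iff $d\in C^\I$ for all $C(x)\in\q$ and, for each $i$, there is $d_i$ with $(d,d_i)\in r_i^\I$ and $d_i\in C_{\q_i,x_i}^\I$. Applying the induction hypothesis to each $\q_i$ replaces ``$d_i\in C_{\q_i,x_i}^\I$'' by ``there is a match $\pi_i$ of $\q_i$ in $\I$ with $\pi_i(x_i)=d_i$''. Gluing the $\pi_i$ together with $x\mapsto d$ (well defined since the domains are pairwise disjoint) yields a match of $\q$ in $\I$ sending $x$ to $d$; conversely any match of $\q$ sending $x$ to $d$ restricts to matches of the $\q_i$ and witnesses the required $d_i$. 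This closes the induction.

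The semantic unfolding (of $\exists r.C$, of inverse-role atoms, and the restriction/union bookkeeping for the $\pi_i$) is entirely routine. The one point that needs care — and the only place the hypotheses on $\q$ are genuinely used — is the decomposition of $\q$ in the inductive step: connectedness together with the strong acyclicity assumption (a unique atom per edge of $G(\q)$) is exactly what guarantees that $\q$ splits into the concept atoms on $x$, one edge atom to each child, and pairwise variable-disjoint connected acyclic subqueries, so that $C_{\q,x}$ has precisely one existential conjunct per child and there are no stray atoms linking different subtrees. I would state and justify that decomposition explicitly, after which the rest follows mechanically.
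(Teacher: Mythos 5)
Your proof is correct. Note, though, that the paper does not prove this lemma at all: it is imported from the cited work of Horrocks and Tessaris (\cite{Horrocks99querycontainment}), so there is no in-paper argument to compare against. What you have written is the standard argument behind that citation: a structural induction on the rooted tree $T_{\q,x}$, where the only non-routine point is exactly the one you isolate, namely that connectedness plus the ``one atom per edge'' acyclicity condition lets $\q$ be split into the concept atoms on $x$, one role atom per child (oriented as $r_i$ or $r_i^-$ so that satisfaction becomes $(d,d_i)\in r_i^{\I}$), and pairwise variable-disjoint connected acyclic subqueries $\q_i$ whose rolled-up concepts coincide with the subconcepts $C_{\q,x_i}$ built inside $C_{\q,x}$; gluing and restricting the matches then gives both directions. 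Two small housekeeping remarks if you write this out in full: handle the degenerate case where some $\q_i$ is empty (then $C_{\q_i,x_i}=\top$ and the ``match'' is just the assignment $x_i\mapsto d_i$), and note that, as in the paper's definition of acyclicity, atoms of the form $r(y,y)$ are implicitly excluded, since the rolling-up construction cannot express them and your decomposition tacitly assumes every role atom spans a tree edge between distinct variables.
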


Every c-acyclic OMQ $Q$ can be transformed into a c-safe OMQ $Q'$ that contains the c-variables of $Q$ and that has the same certain answers as $Q$.

\begin{lemma}\label{l:acytogr} Consider a c-acyclic OMQ $Q= (\T, \Sigma,\q)$ and let $X$ be the set of c-variables of $Q$. Then $Q$ can be transformed in polynomial time into an OMQ  $Q' = (\T', \Sigma, \q')$, 
such that $\q'$ is a CQ with $\vars(\q') = X$ and for every ABox $\A$ over the concept and role names occurring in $Q$ the following holds: 
\[\cert(Q,\A) = \cert(Q',\A)\]
\end{lemma}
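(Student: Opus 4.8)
The plan is to apply the rolling-up technique from Lemma~\ref{lem:treecon} component by component to the query graph of $\q^-$, which by c-acyclicity is a forest each of whose connected components contains exactly one c-variable. First I would split $\q$ into its subquery $\q^-$ (obtained by deleting every atom $r(x,y)$ whose endpoints are both c-variables) and the set $\q^c$ of deleted atoms; note that every variable of $\q^c$ is a c-variable. For each connected component $\q_i$ of $\q^-$, let $x_i$ be its unique c-variable. Since $\q_i$ is connected and acyclic, Lemma~\ref{lem:treecon} gives us the query concept $C_i := C_{\q_i,x_i}$, an $\mathcal{ELI}$-concept such that for every interpretation $\I$ and every $d\in\dom{\I}$, we have $d\in C_i^\I$ iff there is a homomorphism $\pi_i$ of $\q_i$ into $\I$ with $\pi_i(x_i)=d$.

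\textbf{Construction of $Q'$.} For each component $\q_i$ I would introduce a fresh concept name $A_i\in\conceptnames$ and add to $\T$ the inclusion $A_i\equiv C_i$, i.e. the pair $A_i\ISA C_i$ and $C_i\ISA A_i$ (in normal form this costs only polynomially many fresh names and axioms, by the Proposition on normal forms — and crucially the fresh names lie outside the input signature, so certain answers over ABoxes using only input symbols are unaffected). Then define $\q'$ to have variable set $X=\{x_1,\dots,x_k\}$, with atoms: $A_i(x_i)$ for each component $i$, together with all the atoms of $\q^c$ (which already use only variables in $X$). The answer variables of $\q'$ are exactly the answer variables of $\q$; these are among the $x_i$ since every answer variable is a c-variable and, being a c-variable of some component, must be \emph{the} c-variable of that component — here one needs the observation that an answer variable cannot be a non-distinguished vertex of a component, because a component has only one c-variable and answer variables are c-variables, so distinct answer variables lie in distinct components. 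Set $\T' := \T \cup \{A_i \equiv C_i \mid i\}$ (normalized), and $Q' := (\T',\Sigma,\q')$. This transformation is clearly polynomial: each $C_i$ has size linear in $\q_i$, and normalization is polynomial.

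\textbf{Equivalence of certain answers.} Fix an ABox $\A$ over the input signature. For the inclusion $\cert(Q',\A)\subseteq\cert(Q,\A)$: if $\I\models(\T,\Sigma,\A)$, extend $\I$ to $\I'$ interpreting each $A_i$ as $C_i^\I$; then $\I'\models(\T',\Sigma,\A)$ and agrees with $\I$ on input symbols, so from $(\T',\Sigma,\A)\models\q'(\vec a)$ we get $\I'\models\q'(\vec a)$, hence a witnessing map sending each $x_i$ into $A_i^{\I'}=C_i^\I$; Lemma~\ref{lem:treecon} then yields, for each $i$, a map $\pi_i$ of $\q_i$ into $\I$ extending the assignment of $x_i$, and gluing the $\pi_i$ along the shared c-variables (they only share c-variables, so this is consistent) together with the $\q^c$-atoms being satisfied gives $\I\models\q(\vec a)$. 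Conversely, for $\cert(Q,\A)\subseteq\cert(Q',\A)$: any $\I'\models(\T',\Sigma,\A)$ is in particular a model of $(\T,\Sigma,\A)$, so it satisfies $\q(\vec a)$ via some $\pi$; restricting $\pi$ to each $\q_i$ and applying Lemma~\ref{lem:treecon} in the other direction shows $\pi(x_i)\in C_i^{\I'}=A_i^{\I'}$, and since $\pi$ also satisfies the $\q^c$-atoms, $\pi\restriction X$ witnesses $\I'\models\q'(\vec a)$. Since $\I'$ was arbitrary, $\vec a\in\cert(Q',\A)$.

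\textbf{Main obstacle.} The delicate points are bookkeeping rather than deep: (i) verifying that the shared-variable structure between components is \emph{exactly} the c-variables, so that the component-wise homomorphisms glue consistently — this uses that the deleted atoms $\q^c$ only connect c-variables and that each $\q^-$-component has a unique c-variable; and (ii) checking that rolling up respects the direction of role atoms when building the $\mathcal{ELI}$-concept, including inverse roles $r^-$, which is already handled inside the definition of $C_{\q,x}$ and Lemma~\ref{lem:treecon}. One should also double-check that a c-variable shared by two $\q^-$-components is handled correctly: strictly, distinct $\q^-$-components are vertex-disjoint, so a variable appearing in two of them is impossible; shared variables arise only via $\q^c$-atoms, which is fine since those are retained verbatim in $\q'$. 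I expect the write-up effort to concentrate on item (i), making the gluing argument precise.
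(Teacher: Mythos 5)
Your proposal follows essentially the same route as the paper's own proof: decompose $\q^-$ into its connected components, roll each one up at its unique c-variable into a query concept via Lemma~\ref{lem:treecon}, introduce a fresh concept name per component, retain the dropped role atoms between c-variables, and transfer certain answers in both directions using Lemma~\ref{lem:treecon} and the freedom to interpret the fresh names as the rolled-up concepts. The only cosmetic difference is that the paper adds just the inclusion $C_{T_i,x_i} \ISA A_{T_i}$ rather than a full equivalence; the extra direction you add is harmless (and still normalizes polynomially) but unnecessary, since in the direction where it would matter you get to choose the extension with $A_i^{\I'}=C_i^{\I}$ anyway.
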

\begin{proof}[Proof (Sketch).]
The transformation of $Q$ to $Q'$ is as follows. We associate to $\q$ the set of connected components $\{T_1, \ldots, T_n\}$ of $G^-(\q)$.  Because $Q$ is c-acyclic, every $T_i$ is a connected acyclic undirected graph containing exactly one node $x_i$ that is a c-variable. 
We designate $x_i$ as the root of $T_i$, allowing us to view $T_ i$ as a tree and we denote by $C_{T_i, x_i}$ the query concept of $T_i$ w.r.t. $x_i$. 

The new OMQ $Q' = (\T', \Sigma, \q'(\vec{x}))$ is defined as follows: 
 \begin{align*} 
\T' & = \T \cup \{C_{T_i,x_i} \ISA A_{T_i}\mid T_i \in \{T_1, \ldots, T_n\}\}  \\\\
  \q'(\vec{x}) & =  \bigwedge_{ T_i\in \{T_1, \ldots, T_n\}} A_{T_i}(x_i) \wedge \bigwedge_{\{x_i,x_j\} \subseteq X, r(x_i,x_j) \in \q} r(x_i,x_j)
\end{align*} 
where each $A_{T_i}$ is a fresh concept name. 
Using Lemma \ref{lem:treecon} it can be easily verified that for any ABox
$\A$ over the concept and role names that appear in the OMQ $Q$, the queries $Q$ and $Q'$ produce the same certain answers.
\end{proof}

%

\section{Game Characterization of Countermodels}\label{sec:game}

Consider a c-safe OMQ $Q=(\T,\Sigma,\q(\vec{x}))$, where $\T$ is in
normal form, $\A$ is an ABox over the concept and role names occurring
in $\T$, and let $\K=(\T,\Sigma,\A)$.  To decide
$\K\not\modelsind \q(\vec{a})$ for some tuple of individuals $\vec{a}$
occurring in $\K$, we will use the notion of a \emph{core}, which is
essentially an interpretation whose domain is $\indivnames(\K)$
extended with a small number of additional individuals.  Cores fix how
the individuals of $\K$ participate in concepts and roles, ensuring
the non-entailment of $\q(\vec{a})$, the satisfaction of $\A$ and
$\Sigma$, and a partial satisfaction of $\T$.  
To decide whether there exists an interpretation $\I$ such that
$\I\models \K$ and $\I\not \modelsind \q(\vec{a})$, we will proceed in two steps: 
\begin{enumerate}[(1)]
\item Guess a \emph{core} $\I_c$ for $\K$ such that
  $\I_c \not\models \q(\vec{a})$.
\item Check that $\I_c$ can be extended to satisfy all axioms in $\T$.
\end{enumerate}

We start by formally defining cores. 

\begin{definition}\label{def:coreint} 
 For each individual $c$, and an  existential inclusion $\alpha$ of type (N2), we assume a special
  constant $c^{\alpha}$ that is prohibited to occur in ABoxes.  Such individuals are called \emph{fringe individuals}. 
  Assume a KB $\K=(\T,\Sigma,\A)$. We let $\mathit{cdom}(\K)$ contain
  the individuals $c,c^{\alpha}$ for all
  $c\in \indivnames(\K) $, and existential inclusions $\alpha$ in $\T$. A \emph{core} for $\K$ is an interpretation $\I_c = (\dom{\I_c},\cdot^{{\I_c}})$ such that
  
  \begin{enumerate}[(c1)]
  \item $\dom{\I_c} \subseteq \mathit{cdom}(\K)$, $\indivnames(\K)\subseteq \dom{\I_c}$, and $a^{\I_c}=a$
    for all $a \in \indivnames(\K)$,
  \item $\I_c \models_\Sigma \A$,
  \item $\I_c\models \alpha$ for each $\alpha \in \T$ such that
    \begin{enumerate}[(c3.1)]
    \item $\alpha$ is of the form $B_1\AND \cdots \AND B_n \ISA B_{n+1}\OR\cdots \OR B_{k}\in \T$,  
    \item $\alpha$ is of the form $A\ISA \All{r}{A'}$, 
    \item $\alpha$ is of the form $r\ISA s$, 
    \item  $\alpha$ is of the form $A \ISA \Some{r}{ A' }$ and \rin, or

    \end{enumerate}
        \item \label{core:tree} if $(d,d')\in r^{\I_c}$, then one of the
    following holds:
    \begin{enumerate}[-]
    \item $\{d,d'\}\subseteq \indivnames(\K)$,
    \item $d\in \indivnames(\K)$ and $d'=d^{\alpha}$ for some
      $\alpha\in \T$, or
    \item $d' \in \indivnames(\K)$ and $d =d'^{\alpha}$ for some
      $\alpha\in \T$.
       \end{enumerate}
  \item \label{core:innersat-n2} if $d \in \Delta^{\I_c}$ is not a
    fringe individual, then
    $d\in A^{\I_c}$ implies $d\in (\exists r.A')^{\I_c}$ for every $A\ISA\exists r.A'\in \T$.

  \end{enumerate}

\end{definition}
 
The conditions in the above definition can be explained as
follows. The domain of a core interpretation $\I_c$ for a KB $\K$
always contains the individuals in $\K$, and possibly some special
fringe individuals (condition (c1)). The condition (c2) tells us that
$\I_c$ must satisfy the ABox assertions in $\K$ and respect the closed
predicates. The condition (c3) requires all TBox inclusions to be
satisfied, with the possible exception of existential axioms
$A \ISA \Some{r}{ A' }$ such that \rnotin. Using the condition (c4) we
restrict the possible connection between the domain elements. In
particular, we allow a fringe individual $c^{\alpha}$ to be connected
by a role only to its ``parent'' individual $c$. Intuitively, the
condition (c5) makes sure that the original ABox individuals satisfy
\emph{all} TBox inclusions. Thus, intuitively, the only reason for
$\I_c$ to not be a model of the TBox is a fringe individual that
triggers an existential inclusion $A \ISA \Some{r}{ A' }$ with
\rnotin. The goal of the game-like characterization is to see how a
proper core interpretation can be extended to satisfy all existential
inclusions. The notion of an \emph{extension} is as expected:

\begin{definition}\label{def:extension} Consider a core  $\I_c$ for $\K=(\T,\Sigma,\A)$. An interpretation $\J$ is called an \emph{extension} of $\I_c$, if the following hold:
  \begin{enumerate}[(i)]
       \item $\Delta^{\I_c}\subseteq \Delta^{\J}$, 
  		\item for all concept names $A$, $A^{\I_c} = A^\J \cap \dom{\I_c}$, 
  		\item for all role names $p$, $p^{\I_c} = p^\J \cap (\dom{\I_c}\times \dom{\I_c})$, and
  		 \item  $q^\J = q^{\I_c}$ for all $q \in   \Sigma$.
  \end{enumerate}
\end{definition}


\begin{figure}
 \centering
\begin{tabular}{c@{\qquad\qquad}c} 
     \begin{tikzpicture}[-,scale=0.75,label distance=-1mm]
      \tikzstyle{every node}=[inner sep=0pt,minimum size=6mm];
      \tikzstyle{every path} = [draw,font=\footnotesize];
      \tikzstyle{every label}=[inner sep=0pt,font=\footnotesize];

             \node[draw,shape=circle,label={$A_1,A_4$},outer sep=0pt] (a) at (-14,0) {$a$};
             \node[draw,shape=circle,label={$A_1,A_3$},outer sep=0pt] (b) at (-11,0) {$b$};
             \node[draw,shape=circle] (c) at (-9,0) {$c$};        

             \node[draw,shape=circle,double,label=below:{$A_2,A_3$},outer sep=0pt]
                 (a1) at (-14,-2) {$a^{\alpha_1}$}; 

             \node[draw,shape=circle,double,label=below:{$A_2,A_3$},outer sep=0pt] (b1) at (-12,-2) {$b^{\alpha_1}$};
             \node[draw,shape=circle,double,label=below:{$A_2,A_3$},outer sep=0pt] (b2) at (-10,-2) {$b^{\alpha_3}$};         

   	      \tikzstyle{every node} = []; 
                   \draw (a) [left,->] to node {$r_1,r_2^-$} (a1); 
                   \draw (b) [left,->,midway,above] to node {$r_2$} (c); 
                   \draw (b) [left,->] to node {$r_1,r_2^-$} (b1); 
                   \draw (b) [left,->] to node {$r_2$} (b2); 

           \end{tikzpicture} 
& 
     \begin{tikzpicture}[-,scale=0.8,label distance=-1mm]
      \tikzstyle{every node}=[inner sep=0pt,minimum size=6mm];
      \tikzstyle{every path} = [draw,font=\footnotesize];
      \tikzstyle{every label}=[inner sep=0pt,font=\footnotesize];

           \node[draw,shape=circle,label={$A_1,A_2,A_4$},outer sep=0pt] (a) at (-14,0) {$a$};
           \node[draw,shape=circle,label={$A_1,A_3$},outer sep=0pt] (b) at (-12,0) {$b$};
           \node[draw,shape=circle,label={$A_2,A_3$},outer sep=0pt] (c) at (-10,0) {$c$};        

   	      \tikzstyle{every node} = []; 
                   \draw (a) [left,->,out=270,in=180,loop] to node {$r_1,r_2^-$} (a1); 
                   \draw (b) [left,->,out=290,in=230,midway] to node {$r_1,r_2,r_2^-$} (c); 
                   \draw (c) [left,->,out=0,in=270,loop] to node {$r_2$} (c); 
           \end{tikzpicture}
\end{tabular}
 \caption{Example cores  $\I_c^1$ and  $\I_c^2$ for the KB in Example \ref{ex:main}.} 
 \label{fig:cores}
\end{figure}
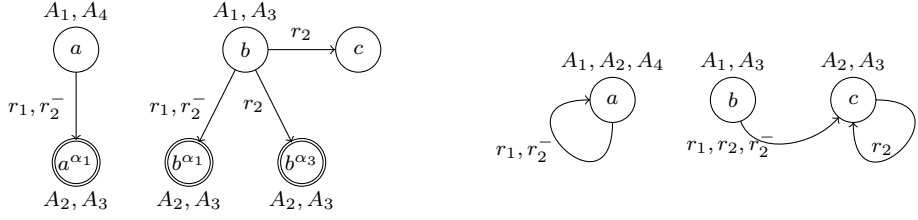

\begin{example}\label{ex:main} Consider the KB $\K=(\T,\A,\Sigma)$, where
  $\Sigma=\{A_1,A_4\}$, the TBox $\T$ contains the following 
  inclusions:

\begin{tabular}{p{4cm}@{\qquad}p{4cm}}
{\begin{align*}
\alpha_1   = A_1 & \ISA  \Some{r_1}{A_2} \\ 
\alpha_2   = 	A_2 & \ISA  A_3 \OR A_4  \\ 
\alpha_3   = 	A_3 & \ISA  \Some{r_2}{A_2}   
 \end{align*}}
&
{\begin{align*}
\alpha_4   = 	A_4 & \ISA  \All{r_2}{A_1}  \\ 
\alpha_5   =     A_3 & \ISA  \Some{r_2}{\{c\}}  \\ 
\alpha_6   =     r_1^- &\ISA r_2 
 \end{align*} }
\end{tabular}\\
and the ABox $\A$ is as follows:
\begin{align*}
  \A= & \{ A_1(a), \quad  A_4(a), \quad A_1(b), \quad A_3(b) \}
\end{align*} 
The interpretations  $\I_c^1$ and  $\I_c^2$ in Figure~\ref{fig:cores} are both cores for 
 $\I_c$. The fringe individuals are depicted with double circles.

The core $\I_c^2$ on the right is itself a model. 
The core  $\I_c^1$ is not, since the fringe individuals do not satisfy
$\alpha_3$ and $\alpha_5$, but we can extend it into a model. 
For example, we can take its infinite 
extension by adding to each fringe individual an $r_2$-child that
satisfies $A_2$ and $A_3$, which is connected via $r_2$ to $c$, and repeating the same way for all the introduced successors, see Figure \ref{fig:model}. 
The $r_2$-connections to $c$ are dashed grey and unlabeled for readability.  
\end{example}

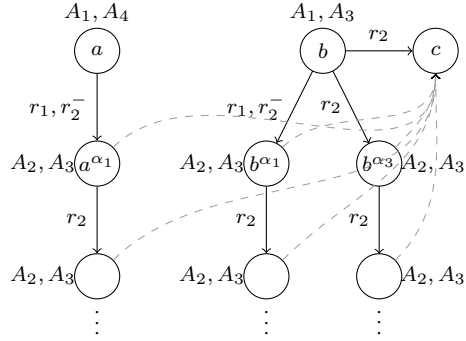
\begin{figure}
 \centering
     \begin{tikzpicture}[-,scale=0.75,label distance=-1mm]
      \tikzstyle{every node}=[inner sep=0pt,minimum size=6mm];
      \tikzstyle{every path} = [draw,font=\footnotesize];
      \tikzstyle{every label}=[inner sep=0pt,font=\footnotesize];

             \node[draw,shape=circle,label={$A_1,A_4$},outer sep=0pt] (a) at (-14,0) {$a$};
             \node[draw,shape=circle,label={$A_1,A_3$},outer sep=0pt] (b) at (-10,0) {$b$};
             \node[draw,shape=circle] (c) at (-8,0) {$c$};        

             \node[draw,shape=circle,label=left:{$A_2,A_3~$\quad}]
                 (a1) at (-14,-2) {$a^{\alpha_1}$}; 
             \node[draw,shape=circle,label=left:{$A_2,A_3~$\quad}] (b1) at (-11,-2) {$b^{\alpha_1}$};
             \node[draw,shape=circle,label=right:{~$A_2,A_3$},outer sep=0pt] (b2) at (-9,-2) {$b^{\alpha_3}$};         

             \node[draw,shape=circle,label=left:{$A_2,A_3~$\quad},label=below:{$\vdots$}]
                 (a2) at (-14,-4) {}; 
             \node[draw,shape=circle,label=left:{$A_2,A_3~$\quad},label=below:{$\vdots$}] (b12) at (-11,-4) {};
             \node[draw,shape=circle,label=right:{~$A_2,A_3$},outer sep=0pt,label=below:{$\vdots$}] (b22) at (-9,-4) {};         

   	      \tikzstyle{every node} = []; 
                   \draw (a) [left,->] to node {$r_1,r_2^-$} (a1); 
                   \draw (b) [left,->,above] to node {$r_2$} (c); 
                   \draw (b) [left,->] to node {$r_1,r_2^-$} (b1); 
                   \draw (b) [left,->] to node {$r_2$} (b2); 

                   \draw (a1) [left,->] to node {$r_2$} (a2); 
                   \draw (b1) [left,->] to node {$r_2$} (b12); 
                   \draw (b2) [left,->] to node {$r_2$} (b22); 

                   \draw (a1) [->,dashed,in=270,draw=gray!75] to node {} (c); 
            \draw (a2) [->,dashed,in=270,draw=gray!75] to node {} (c); 
            \draw (b1) [->,dashed,in=270,draw=gray!75] to node {} (c); 
            \draw (b12) [->,dashed,in=270,draw=gray!75] to node {} (c); 
            \draw (b2) [->,dashed,in=270,draw=gray!75] to node {} (c); 
            \draw (b22) [->,dashed,in=270,draw=gray!75] to node {} (c);

           \end{tikzpicture} 
 \caption{A model of $\K$ that extends the core $\I_c^1$.}
 \label{fig:model}
\end{figure}

A core and its extensions coincide on the
assertions they satisfy over the individuals occurring in the input
KB, and the non-entailment of a set of assertions from a KB is always
witnessed by some core that can be extended into a model.

\begin{lemma}\label{thm:decomp} Assume a c-safe OMQ
  $Q=(\T,\Sigma, \q(\vec{x}))$ and a KB $\K = (\T,\Sigma, \A)$.  Then
  $\K\not\modelsind \q(\vec{a})$ for some tuple of individuals
  $\vec{a}$ occurring in $ \K$ iff there exists a core $\I_c$ for $\K$
  such that
  \begin{enumerate}[\it(1)]
  \item \label{en:1} $\I_c \not\modelsind \q(\vec{a}) $, and
  \item \label{en:2} there exists an extension $\J$ of $\I_c$ such
    that $\J \models \K$.
  \end{enumerate}
\end{lemma}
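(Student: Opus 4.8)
The plan is to prove both directions of the equivalence by relating an arbitrary model of $\K$ to a suitable core, and conversely. For the ``if'' direction, suppose we are given a core $\I_c$ satisfying conditions~(\ref{en:1}) and~(\ref{en:2}); then by~(\ref{en:2}) there is an extension $\J$ of $\I_c$ with $\J \models \K$, so in particular $\K$ is consistent. It remains to show $\K \not\modelsind \q(\vec{a})$. Here I would use the fact that $Q$ is \emph{c-safe}: since every variable of $\q$ is a c-variable, any match $\pi$ of $\q$ into $\J$ must send every variable to an element of $\indivnames(\K)$ (this is the property announced right after the definition of c-safe queries; it follows because c-variables are forced, via closed predicates on concepts or roles subsumed by closed predicates, to lie in the closed part of the interpretation, which by condition~(iv) of Definition~\ref{def:extension} is unchanged from $\I_c$, whose non-fringe closed elements are exactly $\indivnames(\K)$). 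Since $\J$ and $\I_c$ agree on all concept and role names restricted to $\indivnames(\K) \times \indivnames(\K)$ (Definition~\ref{def:extension}(ii),(iii)), any match of $\q$ into $\J$ would induce a match into $\I_c$, contradicting $\I_c \not\modelsind \q(\vec{a})$. Hence $\J \not\modelsind \q(\vec{a})$, and since $\J \models \K$, we get $\K \not\modelsind \q(\vec{a})$.

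For the ``only if'' direction, assume $\K \not\modelsind \q(\vec{a})$, so there is an interpretation $\I$ with $\I \models \K$ and $\I \not\modelsind \q(\vec{a})$; by the standard name assumption we may take $a^{\I} = a$ for all $a \in \indivnames(\K)$. From $\I$ I would construct a core $\I_c$ as follows: take $\dom{\I_c} = \mathit{cdom}(\K)$, interpret the ABox individuals exactly as in $\I$, and for each fringe individual $c^{\alpha}$ with $\alpha = (A \ISA \exists r.A')$ \emph{not} subsumed by a closed predicate, choose a witness $e_{c,\alpha} \in \dom{\I}$ such that $(c, e_{c,\alpha}) \in r^{\I}$ and $e_{c,\alpha} \in A'^{\I}$ whenever $c \in A^{\I}$ (such a witness exists since $\I \models \alpha$), and let $c^{\alpha}$ ``copy'' the concept-name membership of $e_{c,\alpha}$; set the roles so that $c^{\alpha}$ is connected only to $c$ (via the roles forced by $\alpha$ and its role-subsumption consequences), discarding all other role edges incident to fringe individuals. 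One then checks conditions~(c1)--(c5) of Definition~\ref{def:coreint}: (c1) is by construction; (c2) holds because $\I_c$ agrees with $\I$ on $\indivnames(\K)$ and on all closed predicates, and $\I \models_\Sigma \A$; (c3.1)--(c3.3) hold because those axiom types are preserved when passing to a substructure that copies types faithfully (for (c3.2), the $\forall$-axioms, one needs that fringe individuals have no ``extra'' role edges — which is exactly why we pruned them, though care is needed that the forced edge $c \to c^{\alpha}$ and its inverse do not violate a $\forall$-axiom, which holds because $c^{\alpha}$ copies the type of a genuine $r$-successor of $c$ in $\I$); (c3.4) and (c5) hold because every non-fringe element of $\I_c$ is an ABox individual, which in $\I$ satisfies all existential axioms, and we can arrange $\I_c$ to reflect this (for closed-predicate existentials, the witness must itself be an ABox individual by condition (c) of $\models_\Sigma$, so it survives in $\dom{\I_c}$); (c4) is by construction. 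Finally $\I_c \not\modelsind \q(\vec{a})$ follows again from c-safety: a match into $\I_c$ would use only ABox individuals, hence would be a match into $\I$, contradicting $\I \not\modelsind \q(\vec{a})$. For condition~(\ref{en:2}), the original $\I$ restricted appropriately is \emph{almost} an extension of this $\I_c$; more precisely I would argue that $\I_c$ can be extended to a model — one clean way is to observe that $\I$ itself witnesses extendability after identifying each $c^{\alpha}$ with its chosen witness $e_{c,\alpha}$ (a homomorphism-style argument), or to directly build $\J$ by a chase-like completion of $\I_c$ that adds, below each fringe individual, fresh witnesses for the missing existentials, iterating forever; since all other axiom types are already satisfied and adding existential witnesses with fresh successors preserves $\forall$-axioms when done carefully, the limit $\J$ models $\T$, and it models $\A$ and respects $\Sigma$ because it extends $\I_c$ without touching $\dom{\I_c}$ or closed predicates.

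The main obstacle I expect is the bookkeeping around condition~(c3.2), the universal axioms $A \ISA \forall r.A'$, in the construction of $\I_c$: when we prune all role edges from a fringe individual except the single forced edge back to its parent (and the role-subsumption closure of that edge), we must ensure (i) that the retained edges, together with the copied concept types, still satisfy every $\forall$-axiom at the parent and at the fringe individual, and (ii) that the role-subsumption relation $\suprr$ is respected so that condition~(c3.3) holds simultaneously. The delicate point is that a fringe individual $c^{\alpha}$ generated for the existential $A \ISA \exists r.A'$ inherits from $\I$ not just an $r$-edge from $c$ but possibly $s$-edges for every $s$ with $r \suprr s$, and the type copied onto $c^{\alpha}$ must be consistent with every $\forall s.B$ axiom applicable at $c$; this works precisely because $e_{c,\alpha}$, being a real successor in $\I \models \T$, already has a type consistent with all of these, so copying its full concept-name type (not just membership in $A'$) onto $c^{\alpha}$ suffices. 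Making this argument airtight — choosing the witness $e_{c,\alpha}$, copying its type, and verifying that the pruned role structure still validates all (N3) and (N4) axioms at both endpoints — is the technical heart of the proof; everything else is routine.
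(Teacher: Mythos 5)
Your ``$\Leftarrow$'' direction is fine and is essentially the paper's argument (c-safety forces every match into individuals of $\K$, where $\J$ and $\I_c$ agree). The problems are in the ``$\Rightarrow$'' direction. First, a fixable but real flaw in your core construction: you create a fringe element $c^{\alpha}$ and copy the witness's concept-name type for \emph{every} triggered existential, with no case distinction on whether the chosen witness $e_{c,\alpha}$ is itself an ABox individual. If it is, its type may contain a closed concept (then $c^{\alpha}$ inherits it and violates condition (c2), since $A(c^{\alpha})$ can never be in $\A$), or it may satisfy an $\mathbf{(N1)}$ axiom only via a nominal disjunct, which the copy $c^{\alpha}$ cannot satisfy, breaking (c3.1). (Taking $\dom{\I_c}=\mathit{cdom}(\K)$ wholesale also leaves untriggered fringe elements with unconstrained types that can violate axioms with $\top$ on the left.) The remedy is exactly the case split the paper builds into its unravelling: when the witness is an individual of $\K$, keep the role edge to that individual (allowed by (c4)) and create no fringe copy; fresh fringe elements are introduced only for anonymous witnesses, whose types can contain neither closed concepts nor nominals because of $\models_\Sigma$ and the SNA.

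The more serious gap is condition~(\ref{en:2}): neither of your two routes to extendability works as stated, and this is where the paper's proof does real work that your sketch replaces with hand-waving. ``Identifying $c^{\alpha}$ with $e_{c,\alpha}$'' does not produce an extension in the sense of Definition~\ref{def:extension}: that definition requires $p^{\I_c} = p^{\J}\cap(\dom{\I_c}\times\dom{\I_c})$ exactly, but in $\I$ the witness $e_{c,\alpha}$ may have role edges to other core elements that you pruned from $\I_c$ (and the identification need not even be injective), so any $\J$ obtained this way disagrees with $\I_c$ on the core domain. Your alternative, a chase-like completion ``adding fresh witnesses for the missing existentials,'' fails precisely for the existentials whose targets are forced to be nominals or to satisfy closed predicates (e.g., $\alpha_5 = A_3 \ISA \Some{r_2}{\{c\}}$ in the paper's running example): fresh anonymous witnesses are not permitted there, and arranging the back-links to ABox individuals consistently is exactly the nontrivial content that the paper defers to the game machinery of Proposition~\ref{p:non-losing}. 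Within Lemma~\ref{thm:decomp} the paper sidesteps both issues with one move you are missing: it unravels $\I$ into a model $\mathcal{U}$ whose copied elements carry only the intended parent links among core elements, takes the core to be the depth-$\leq 1$ part of $\mathcal{U}$, and takes $\mathcal{U}$ \emph{itself} as the extension $\J$, so no separate completion argument is needed. Without that (or an equivalent duplication argument), your proof of item~(\ref{en:2}) is a genuine gap, not mere bookkeeping.
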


\begin{proof}
  The ``$\Leftarrow$'' direction is not difficult. Observe that if
  $\J$ is an extension of $\I_c$, then $\J$ and $\I_c$ agree on the
  participation of constants from $\K$ in all concept and role
  names. I.e., for all constants $c,d\in \indivnames(\K)$ and all
  concept names $A$ and all role names $r$, we have $c\in A^{\J}$ iff
  $c\in A^{\I_c}$, and $(c,d)\in r^{\J}$ iff $(c,d)\in r^{\I_c}$. This
  observation, together with the facts that $Q$ is c-safe and
  $\J \models \K$, implies that $\J \not\modelsind q(\vec{a})$, and,
  hence, $\K\not\modelsind \q(\vec{a})$.

  
  For the ``$\Rightarrow$'' direction, let $\I$ be an interpretation
  such that $\I \models \K$ and $\I \not\modelsind \q(\vec{a})
  $. W.l.o.g.\,we assume $q^{\I} = \emptyset$ for all
  $q \not\in \conceptnames(\K) \cup \rolenames(\K)$. For every element
  $e\in \Delta^{\I}$ and every inclusion
  $\alpha= A\ISA \exists r.A'\in \T$ with $e\in A^{\I}$, let
  $wit(e,\alpha)$ denote an arbitrary but fixed element
  $e'\in \Delta^{\I}$ with $(e,e')\in r^{\I}$ and $e'\in
  (A')^{\I}$. Note that $e'$ always exists because $\I$ is a model of
  $\K$. We build now the ``unravelling'' of $\I$. For a word $w$ of
  form $c\cdot(\alpha_1,e_1)\cdots (\alpha_n,e_n)$, we let
  $\mathit{tail}(w)=c$ in case $n=0$, and $\mathit{tail}(w)=e_n$ in
  case $n>0$. Let $\mathit{Paths}$ be the $\subseteq$-minimal set of
  words that includes $\indivnames(\K)$, and satisfies the following rule: if $w \in \mathit{Paths}$, $\alpha$ is an existential inclusion
    in $\T$, $e=wit(\mathit{tail}(w),\alpha)$ is defined, and
    $e\not\in \indivnames(\K)$, then
    $w\cdot (\alpha,e)\in \mathit{Paths}$.
We unravel  $\I$ into an interpretation $\mathcal{U}$. More precisely,  $\mathcal{U}$ is defined  as follows:
  \begin{enumerate}[-]
  \item $\Delta^{\mathcal{U}}= \mathit{Paths}$, and
    $c^{\mathcal{U}} = c$ for all $c \in \indivnames(\K)$,
  \item for all concept names $A$,
    $A^{\mathcal{U}}= \{w \in \Delta^{\mathcal{U}}\mid \mathit{tail}(w)\in A^{\I}\}  $,

  \item for all role names $r$, $r^{\mathcal{U}}=\{(w,w') \in \Delta^{\mathcal{U}} \times \Delta^{\mathcal{U}} \mid (\mathit{tail}(w), \mathit{tail}(w') )\in r^{\I}\}$.
  

  \end{enumerate}
  It is standard to see that $\mathcal{U} \models \K$ and
  $\mathcal{U} \not\modelsind \q(\vec{a}) $. A core $\I_c$ can be obtained from
  $\mathcal{U}$ in two steps. First, we restrict the domain of $\mathcal{U}$ to paths of
  length at most 1, i.e.\,to domain elements of the form $c$ and
  $c\cdot (\alpha_1,v_1)$. Second, all domain
  elements $c\cdot (\alpha_1,v_1)$ are replaced by the constant
  $c^{\alpha_1}$. Note that since
  $w\cdot (\alpha,v_1)\in\mathit{Paths}$ and
  $w\cdot (\alpha,v_2)\in\mathit{Paths}$ implies $v_1=v_2$, the second
  step is simply a renaming of domain elements, i.e.\,the second step
  produces an isomorphic interpretation. The satisfaction of the
  conditions (c1-c6) by $\I_c$ is ensured because $\mathcal{U}$ is a model of
  $\K$. Since $\I_c$ is obtained by restricting $\mathcal{U}$ and $\mathcal{U} \not\modelsind \q(\vec{a}) $, we have $\I_c \not\modelsind \q(\vec{a}) $ (condition
  {(\ref{en:1})}). Furthermore, $\mathcal{U}$ is the desired extension of
  $\I_c$ that is a model of $\K$ (condition {(\ref{en:2})}).
\end{proof}

By this lemma, deciding non-entailment of a c-safe query 
amounts  to deciding  whether there is a core that does not satisfy it, and
that can be extended into a model.
Since the domain of cores is bounded, it is not hard to 
achieve this with a polynomially sized program.  However, 
verifying whether a core can be extended into a full model is hard,  
as it corresponds to 
testing consistency (of $\I_c$ viewed as an ABox) 
with respect to $\T$,
an \exptime-hard problem already for fragments of $\ALCHOI$ such as $\mathcal{ALC}$ \cite{Schild1991}.
In order  to obtain a polynomial set of rules that solves this
\exptime-hard problem, 
we first characterize it as  a game,  revealing  a simple algorithm for it that admits an elegant implementation in non-monotonic disjunctive \datalog. %
For this we use \emph{types}, which we define as follows:

\begin{definition} 
A \emph{type $\tau $(over a TBox $\T$)}  is a subset of
  $\conceptnames^+(\T)$ such that $\bot\not\in \tau $ and $\top\in
  \tau$. 
 We denote by \types the set of all types over  $\T$.

 We say that $\tau$ \emph{satisfies} an inclusion $\alpha = B_1\AND \cdots
  \AND B_n \ISA B_{n+1}\OR\cdots \OR B_k $ of type $\mathbf{(N1)}$, if $\{B_1,
  \ldots, B_n\} \subseteq \tau$ implies $\{B_{n+1}, \ldots, B_k\} \cap
  \tau \neq \emptyset$; otherwise $\tau$ violates $\alpha$. 
Let $\I$ be an interpretation.
For an element
  $e\in\Delta^{\I}$, 
  we let
  $\type(e, \I) = \{B\in \conceptnames^+(\T)\mid e\in B^\I\} $.
 A type $\tau$ is \emph{realized by $e$ (in $\I$)} if 
$\type(e,\I) =  \tau$; we say that $\tau$ is realized in $\I$ if it is
  realized by some $e\in \dom{\I}$.
\end{definition}

\subsection{The Model Building Game}

We now describe a simple game to decide whether a given core $\I_c$ can be
extended into a model of a KB $\K$. The game is played by Bob (the
builder), who wants to extend $\I_c$ into a model, and Sam (the
spoiler), who wants to spoil all Bob's attempts. Intuitively speaking,
Sam starts the game by picking a fringe individual $a$ in $\I_c$,
whose type $\type(a,\I_c)$ becomes the \emph{current type}. Next, Sam
chooses an inclusion of the form $A\ISA \Some{r}{A'}$ that is
``triggered'' by the current type. It is then the turn of Bob to
respond by selecting a type for the corresponding $r$-successor that
satisfies $A'$. The game continues for as long as Bob can respond to
the challenges of Sam, where at each further round the response by Bob is set to be the current type in the game.

We use the term \emph{c-type} to refer to a type $\tau \in \types $ such that $\tau$ contains a nominal $\{a\}$,  $\tau \cap \Sigma \neq \emptyset$, or $A\in \tau$ for some $A\ISA \Some{r}{A'}\in \T$ with  \rin. 
Note that, in any model, these types can only be realized by the interpretation of individuals in
$\indivnames$.
  
 For a TBox $\T$, a set
  $\Sigma\subseteq\conceptnames\cup\rolenames$ and a core
  $\I_c$ for $\K = (\T, \Sigma, \A)$, where $\A$ is an ABox over the concept and role names occurring in $\T$, we define the \emph{locally consistent} set $\LC(\T,\Sigma,\I_c)$ as the set of types $\tau \in \types$ 
   such that:                 
  \begin{enumerate}
  \item[\emph{(LC$_{\mathrm{N}{1}}$)}] $\tau$ satisfies all inclusions of type $ \mathbf{(N1)}$ in $\T$.
  \item[\emph{(LC$_{{\Sigma}}$)}] If $\tau$ is a c-type, then $\tau$ must be
    realized in $\I_c$. 
\end{enumerate}

The game is played using types from $\LC(\T,\Sigma,\I_c)$ only. 
Note that, since $\I_c$ is a core, a c-type  
can only be realized by an individual in  $\indivnames(\K)$ and, 
by definition, the individuals in a  core already satisfy all axioms. 
We are now ready to describe the game. 

\paragraph{\bf The model building game} 
 Consider a core $\I_c$ for $\K$.
The game on $\I_c$ starts by Sam choosing a fringe individual 
$a \in \dom{\I_c}$, and 
$\tau = \type(a,\I_c)$  is set to be the \emph{current type}. Then:
\begin{itemize}
\item[(\ding{71})] 
 Sam chooses an inclusion $A\ISA \Some{r}{A'}~\in~\T$ such that $A \in \tau$.  
If there is no such inclusion, the game is over and Bob wins. Otherwise, Bob chooses a type $\tau'\in \LC(\T,\Sigma,\I_c)$
such that: 
\begin{enumerate}[(C1)] 
\item $A'\in \tau'$, and
\item for all inclusions $A_1\ISA \All{s}{A_2} \in \T$:
  \begin{itemize}
  \item if $r \suprr s $ 
  and $A_1\in \tau$ then
    $A_2\in\tau'$, 
  \item if 
  $r^- \suprr s$ 
  and $A_1 \in \tau'$ then $A_2 \in
    \tau$.
  \end{itemize} 

\end{enumerate}
If $\tau'$ does not exist, then  Sam wins the game.  If
$\tau'$ exists and  is a c-type, then  Bob wins the game. If $\tau'$ exists but  is not a c-type, then
$\tau'$ is set to be the current type and the game continues with a
new round, i.e.\,we go back to \ding{71}.
\end{itemize}

We now illustrate the game. To avoid clutter, we omit $\top$ from all types.   

\begin{example}
Sam and Bob play on $\I_c^1$ as follows. 
Assume that Sam starts by picking the fringe individual $a^{\alpha_1}$, 
hence $\tau = \{A_2,A_3\}$ is the current type.
The cases of $b^{\alpha_1}$ and $b^{\alpha_3}$ are the same. 

From $\tau = \{A_2,A_3\}$, Sam can choose either $\alpha_3$ or $\alpha_5$  
(see Figure~\ref{fig:nonlos-str}).
In the latter case, Bob can win the game by picking the
  type $\{\{c\}\}$. 
In the former case, Bob can pick $\tau = \{A_2,A_3\}$, which contains $A_2$ and
satisfies $\LC(\T,\Sigma,\I_c^1)$.
Note that $\tau$ is Bob's only good choice: by $\alpha_2$, any type
in $\LC(\T,\Sigma,\I_c^1)$ that contains $A_2$ must contain $A_3$ or $A_4$,
but $A_4 \in \Sigma$, and there is no type containing both $A_2$ and $A_4$
 realized in $\I_c^1$. So $\tau$ is the only type in
$\LC(\T,\Sigma,\I_c^1)$ containing $A_2$. 

Since  $\tau = \{A_2,A_3\}$ is not a c-type, the game continues from this
type, and we are back to the same situation. 
Sam can pick $\alpha_3$ or $\alpha_5$. 
If he picks $\alpha_5$ Bob can win  by picking $\{\{c\}\}$. 
If he picks $\alpha_3$, Bob can again respond with $\tau = \{A_2,A_3\}$. 
The game continues for as long as Sam keeps choosing  $\alpha_3$, and Bob
never loses. 
\end{example}

\begin{figure}
 \centering
\usetikzlibrary{shapes.multipart}     
\begin{tikzpicture}[->,scale=2, every text node part/.style={align=center}]]

           \tikzstyle{every node}=[minimum size=4mm];
      \tikzstyle{every path} = [draw,font=\footnotesize];

      \node (q1) at (0,0) {
      \begin{tikzpicture}[-,scale=0.75,label distance=-3mm]
      \tikzstyle{every node}=[draw,shape=rectangle,minimum size=4mm];
      \tikzstyle{every path} = [draw,font=\footnotesize];
           \node (1) at (-12.5,0) {$\{A_2,A_3\}$};
	   \node (2) at (-14,-2) {$A_3\sqsubseteq \Some{r_2}{A_2}$};         
		   \node (3) at (-11,-2) {$A_3\sqsubseteq \Some{r_2}{\{c\}}$};    
		   \node[label=right:{\quad B wins}] (5) at (-11,-4) {$\{\{c\}\}$};    	      
           \tikzstyle{every node} = []; 
            	\draw (1) [left,->] to node {\text{S}} (2);
            	\draw (1) [right,->] to node {\text{S}} (3);
            	\draw (2) [bend left=30,->,above] to node {\text{B}~} (1);
            	\draw (3) [left,->] to node {\text{B}} (5);
           \end{tikzpicture}
           };
          
\end{tikzpicture}
 \caption{Non-losing strategy for Bob on the core $\I_c^1$. (S stands for Sam,
   B for Bob)} 
 \label{fig:nonlos-str}
\end{figure}
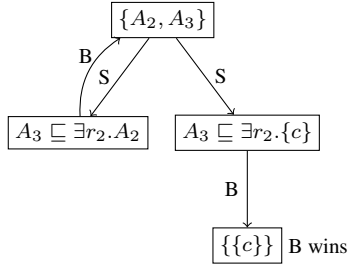  

\subsection{Runs and strategies}

We define \emph{runs} and \emph{strategies}.  Intuitively, a run
starts at a fringe individual $a$ picked initially by Sam, and
then it comprises a (possibly infinite) sequence of inclusions $\Ax_i$
picked by Sam in round $i$, and types $\tau_i$ picked by Bob in
response.  A \emph{strategy} gives a move for Bob in response to each
possible previous move of Sam, and it is called \emph{non-losing} if
it guarantees that Sam does not defeat him.
\begin{definition}
  A \emph{run} of the game on a core $\I_c$ for $\K$ is a (possibly
  infinite) sequence 
  \[ {a \Ax_1 \tau_1 \Ax_2 \tau_2 \ldots} \]
   where $i\geq 0$,  $a$ is a fringe individual in $\I_c$, each $\alpha_i$ is an
  existential inclusion in $\T$, each $\tau_i$ is a type over
  $\T$, and either there are no c-types in it, or 
${a \Ax_1 \tau_1 \Ax_2 \tau_2 \ldots \Ax_\ell \tau_\ell}$ is finite and
$\tau_\ell$ is the only  c-type.

  A \emph{strategy for Bob} is a partial function \str that maps each 
   pair of a type $\tau$ that is not a c-type  
   and an inclusion
    $A \ISA \Some{r}{A'}\in \T$ with $A \in \tau$,
to a type $\tau'$ that satisfies (C1) and (C2).
A run $a \Ax_1 \tau_1 \Ax_2 \tau_2 \ldots $ \emph{follows a
      strategy \str}, if $\tau_0=\type(a,\I_c)$, and
$\tau_{i} = \str(\tau_{i-1},\Ax_i)$ for every
    $i > 0$.

For a finite run $w$, we let $\tail(w) = \type(a,\I_c)$ if $w=a$, and 
$\tail(w) = \tau_\ell$ if $w = a \ldots \Ax_\ell \tau_\ell$ with $\ell \geq 1$. 
The strategy $\str$ is called \emph{non-losing} on $\I_c$ if
for every finite run $w$ that follows \str, $\tail(w) \in \LC(\T,\Sigma,\I_c)$
and $\str(\tail(w),A \ISA \Some{r}{A'})$  is defined
for every $\tail(w)$ that is not a c-type and every $A \ISA \Some{r}{A'} \in \T$ such that $A \in \tail(w)$. 
\end{definition}

Note that there are no runs on a core that has no fringe individuals, and therefore, every strategy is non-losing on it. 



\begin{example}\label{ex:main2}
Figure~\ref{fig:nonlos-str} depicts the following non-losing strategy \str for
Bob $\I_c^1$:
\begin{align*}
\str(\{A_2,A_3\},A_3\sqsubseteq
                          \Some{r_2}{A_2}) & {} = \{A_1,A_2,A_4\}\\  
\str(\{A_2,A_3\},A_3\sqsubseteq \Some{r_2}{\{c\}}) & {} = \{\{c\}\}. 
\end{align*}
Note that for $\I_c^2$ every strategy is winning since there are no fringe
 individuals. 
\end{example}

\subsection{Correctness of the game characterization} 

The following proposition establishes that the existence of a  non-losing
  strategy for Bob on a given core correctly characterizes the existence of a
  model of $\K$ extending that core. 


\begin{proposition}\label{p:non-losing}
 Consider a KB $\K$ and a core $\I_c$ for $\K$. There is a non-losing strategy \str for Bob on $\I_c$ iff  
$\I_c$ can be extended into a model $\J$ of $\K$.
\end{proposition}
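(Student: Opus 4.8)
The plan is to prove the two directions separately, in both cases by a construction that threads runs/strategies through an interpretation.

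For the direction ``$\I_c$ extendable into a model $\J$ of $\K$ implies there is a non-losing strategy'', I would first take such a $\J$. The key observation is that since $\J \models \T$, for every element $d \in \Delta^{\J}$ and every existential inclusion $A \ISA \Some{r}{A'} \in \T$ with $d \in A^{\J}$, there is a witness $d'$ with $(d,d') \in r^{\J}$ and $d' \in (A')^{\J}$; fix one such witness for each pair. I would then define a strategy \str by mapping a pair $(\tau, A\ISA\Some{r}{A'})$ to the type of a suitably chosen witness, where the witness is found by ``tracking'' a chain through $\J$ that realizes $\tau$. Concretely: one shows by induction that for every finite run $w = a\,\Ax_1\,\tau_1\cdots\Ax_\ell\,\tau_\ell$ following \str there is a corresponding element $d_w \in \Delta^{\J}$ with $\type(d_w,\J) = \tail(w)$, obtained by iterating the witness function starting at the fringe individual $a$ (more precisely, at $a^{\I_c}=a$, using that $\J$ extends $\I_c$ so $\type(a,\I_c) = \type(a,\J)$ restricted to $\conceptnames^+(\T)$, and since cores already put the fringe individual into the right concepts this matches). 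Since $d_w \in B^{\J}$ for $B$ ranging over $\tail(w)$ and $\J \models \T$, the type $\tail(w)$ satisfies all (N1) inclusions, i.e.\ $\tail(w) \in \LC(\T,\Sigma,\I_c)$ once we also check (LC$_\Sigma$): a c-type is realized in $\J$ only by (the interpretation of) a named individual, and since $\J$ extends $\I_c$ and agrees with $\I_c$ on named individuals and on $\Sigma$, any c-type appearing is already realized in $\I_c$. The (C1), (C2) conditions for the chosen successor type follow directly from $\J$ satisfying the existential and the $\forall$-inclusions along the $r$-edge. Hence \str is non-losing.

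For the converse, ``non-losing strategy implies $\I_c$ extendable into a model'', I would build the model $\J$ explicitly by unravelling the strategy. Start with $\I_c$, and for each fringe individual $a$ that triggers some existential inclusion not satisfied locally, attach a tree of fresh elements whose branches are exactly the finite runs following \str that start at $a$: an element for each run $w$, carrying the type $\tail(w)$, connected to its predecessor by the role (and all super-roles) demanded by the last existential inclusion $\Ax_\ell$. When a run reaches a c-type, instead of creating a fresh element we redirect the edge to the (unique) individual of $\I_c$ realizing that c-type. Then I would verify $\J \models \K$: conditions on individuals and on $\Sigma$ and $\A$ hold because $\J$ extends $\I_c$ and the new part involves only fresh elements and $\Sigma$-predicates are untouched on them; the (N1) inclusions hold because every element's type lies in $\LC(\T,\Sigma,\I_c)$, which satisfies (LC$_{\mathrm{N}1}$); the role inclusions (N4) hold because we close each new edge under $\suprr$; the $\forall$-inclusions (N3) hold by condition (C2) of the strategy (checking both the ``forward'' direction into the child and the ``backward'' direction from the child via $r^-$, which is why (C2) has two bullets); and the existential inclusions (N2) hold at named individuals because $\I_c$ is a core (condition (c5)) and at fresh/fringe elements because the strategy is non-losing, so from every non-c-type $\tail(w)$ and every triggered $A\ISA\Some{r}{A'}$ the move $\str(\tail(w),\cdot)$ is defined and supplies a child with $A'$, while at c-types reached in the tree the redirection to a core individual which already satisfies all of $\T$ closes the obligation.

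The main obstacle, and the place needing the most care, is the bookkeeping in both directions around the interaction between fringe individuals, c-types, and the ``backward'' $\forall$-conditions. In the forward direction one must be careful that the run one tracks in $\J$ genuinely starts at the right element and that c-types encountered correspond to named individuals (using that $\J$ is an \emph{extension}, not an arbitrary model). In the backward direction the subtle point is checking (N3) inclusions $A_1 \ISA \All{s}{A_2}$ whose role $s$ is a super-role of the \emph{inverse} $r^-$ of the edge just created: the second bullet of (C2) is exactly tailored for this, and one must also handle the edges that go into core individuals via redirection, where satisfaction of such inclusions is inherited from $\I_c$ being a core together with (C2) ensuring the redirected type is compatible. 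I would isolate these verifications as a sequence of short claims, one per axiom shape (N1)--(N4) and per location (named individual, fresh tree element, redirected edge), rather than attempt a single monolithic argument.
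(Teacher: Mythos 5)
Your proposal is correct and follows essentially the same route as the paper's proof: extracting the strategy from the types realized in the extending model (using witnesses for the existential inclusions, with (C1)/(C2) inherited from the model's satisfaction of the universal and role inclusions), and, conversely, unravelling a non-losing strategy into a tree of runs attached to the fringe individuals, redirecting runs that reach a c-type to a core individual realizing that type, exactly as in the paper's $\run$/$\srun$ construction. The case analysis you sketch per axiom shape (N1)--(N4) and per location (core elements, fresh run elements, redirected edges), including the two bullets of (C2) for the forward and inverse $\forall$-constraints and the role of core condition (c5) and of c-types never being triggered along runs for keeping closed predicates untouched, is precisely the bookkeeping carried out in the paper's proof and appendix.
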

\begin{proof}
For the ``$\Rightarrow$'' direction, from an arbitrary non-losing \str for Bob
on $\I_c$, we build an interpretation $\J$ as follows.  

For each c-type $\tau$ that is realized in $\I_c$, we let $a_\tau$ denote
a fixed, arbitrary individual realizing $\tau$; 
in particular, $a_\tau = b$ if $\{b\} \in \tau$.

Let $S$ be the  set of all finite runs that follow $\str$ on $\I_c$.
We consider two subsets  $\run$ and  $\srun$ of $S$ defined as follows: 
\begin{align*}
\run(\I_c,\str) =  & \{ a\Ax_1\tau_1\cdots \Ax_\ell\tau_\ell \in S \mid  \mbox{each $\tau_i$ for $1 \leq i \leq \ell$ is \emph{not} a c-type}  \} \\
 \srun(\I_c,\str) = & \{ a\Ax_1\tau_1\cdots\Ax_\ell\tau_\ell \in S \mid 
 \mbox{~$\tau_\ell$ is a c-type, and for each $1 \leq i < \ell$,} \\ 
    & \phantom{\{ a\Ax_1\tau_1\cdots\Ax_\ell\tau_\ell \in S \mid~} \mbox{$\tau_i$ is
 not a c-type} \} . 
\end{align*} 
Roughly, $\run$ collects all the finite runs from $S$ that do not contain any c-types and $\srun$ collects all those that end in a c-type.

\smallskip
We define the domain of the desired $\J$: 
\[ \dom{\J} = \indivnames(\K) \cup  \run(\I_c,\str)  \]
As $\cdot^{\J}$ we take an interpretation function that satisfies  the following 
 equalities, for all $a \in \indivnames(\K)$, all $A \in \conceptnames$, and all  $p \in \rolenames$: 
\begin{align*} 
a^\J & = a \\ 
A^\J &  = A^{\I_c} \cup  \{ w \in \dom{\J}  \mid A \in \textit{tail}(w) \} \\  
p^\J & = p^{\I_c} \cup \{ (w,w\Ax\tau)\in  \dom{\J} \times \dom{\J} \mid r \suprr p, \Ax = A\ISA \Some{r}{A'} \in \T \} \cup {} \\ 
  &  \phantom{{} = p^{\I_c} \cup {}}  \{ (w\Ax\tau,w)\in  \dom{\J} \times \dom{\J}   \mid  r^- \suprr p, \Ax = A\ISA \Some{r}{A'} \in \T \} \cup {} \\
 &  \phantom{{} = p^{\I_c} \cup {}}   \{ (w,a_{\tau})\in  \dom{\J} \times \dom{\J}   \mid  w{\Ax}\tau \in \srun(\I_c,\str),r \suprr p \} \cup {} \\ 
 &  \phantom{{} = p^{\I_c} \cup {}}   \{ (a_{\tau},w)\in  \dom{\J} \times \dom{\J}  \mid  w{\Ax}\tau \in \srun(\I_c,\str),   r^- \suprr p \} 
\end{align*}  
Note that taking such an interpretation is possible, since $w{\Ax}\tau \in
\srun(\I_c,\str)$ implies that $\tau$ is a  c-type in
$\LC(\T,\Sigma,\I_c)$, which guarantees that $\tau$ is realized in $\I_c$
and that $a_{\tau}$ exists.  We also note that each $w \in \dom{\J}$ is such
that $\type(w, \J) = \tail(w)$,  and, in particular, for fringe individuals from $a \in
\dom{\I_c}$, it is the case that $\type(a, \I_c) = \type(a,\J) = \tail(a)$.

It is left to prove that  $\J$ is an extension of $\I_c$, and a model of $\K$. 
For the former,  we need to show that $\J$ satisfies the following conditions:

\begin{enumerate}[(i)]
 \item $\dom{\I_c} \subseteq \dom{\J}$, 
 \item  $A^\J \cap \dom{\I_c} = A^{\I_c}$,  
 \item $r^\J \cap (\dom{\I_c} \times \dom{\I_c}) = r{^{\I_c}}$, and 
 \item  $q^\J = q^{\I_c}$ for all $q \in   \Sigma$.
\end{enumerate}
 Clearly,  $\dom{\I_c} \subseteq \dom{\J}$ since for all fringe individuals $a \in \dom{\I_c}$,
 $w=a$ is a run in $\run(\I_c,\str) = \dom{\J}$, while for the individuals occurring in $\K$ the claim holds by construction and since $a^{\I_c} = a$ by the definition of a core. 
Next, for (ii) and (iii) we observe that,  by construction of $\J$,  $p^{\I_c} \subseteq p^\J$ for all role names $p$ and $A^{\I_c} \subseteq A^\J$ for all concept names $A$. 
That no other individual or fringe individual is added to $A$ is ensured by construction of $\run(\I_c,\str)$, and the fact that  $\tail(a) = \type(a,\I_c)=\type(a,\J)$. Moreover, for any other pair $(w_1,w_2)$ added to $p^\J$,
we have that at least  one $w_i$ in the pair is of the form $a\ldots\alpha_i\tau_i$
for some $\tau_i$, with $i\geq 1$, that is not a c-type, 
and thus $w_i \not\in \dom{\I_c}$. The latter is ensured by construction of $\J$ and the fact that runs of the form  $a\alpha_1\tau_1$, where $\tau_1$ is a c-type, are not allowed in  $\srun(\I_c,\str)$. 

Finally we show that $q^\J = q^{\I_c}$ for all role names and concept names $q \in   \Sigma$. More precisely we need to show that:
 \begin{enumerate}[\it (a)] 
\item for all $A \in \Sigma \cap \conceptnames$, $e \in A^\J$ iff
  $e \in A^{\I_c}$, and
\item for all $r \in \Sigma\cap \rolenames$,  $(e_1, e_2) \in r^{\J}$ iff $r(e_1, e_2) \in r^{\I_c}$.
\end{enumerate} 
 By (ii) and (iii) we know that (a) and (b) hold for all $e$ in
 $\dom{\I_c}$. 
It is thus left to show that no other domain object is added to any of the closed concept names or closed role names in $\J$. First,
assume $w \in  \dom{\J} \setminus \dom{\I_c} $ is an object outside the core,
that is, $w$ is of the form $w = a\Ax_1\tau_1\ldots\Ax_\ell\tau_\ell$ with $\ell
\geq 1$, and $\tau_{\ell}$ is not a c-type.  Thus, $A \in \Sigma$
  implies $A \not\in\tau_{\ell}$, and thus by construction, $w \not \in A^\J$.
For a role name $r\in \Sigma$, that no domain object other than individuals belong to $r$ in $\J$ is ensured by the fact that inclusions of the form $\Ax_i = A\ISA \Some{r_i}{A'} \in \T$  with
$r_i\,{\in_\T}\,\Sigma$, that is $r_i\suprr r$ with $r^{(-)}\in \Sigma$, do not appear in any run in $\srun$ or $\run$. Note that by definition a type containing $A$ is a c-type, and runs in $\srun$ cannot continue after a c-type. Moreover, runs start from fringe individuals which, by definition of a core, cannot realize c-types.

It remains to prove that $\J$ models $\K$, that is  $\J \models_\Sigma\A$ and  $\J \models \T$. First, $\J \models_\Sigma\A$ is a direct consequence of the fact that  $\I_c \models_\Sigma\A$  and that $\J$ is an extension of $\I_c$.
To prove $\J \models \T$, we show that $\J$ satisfies all inclusions of the
forms $\mathbf{(N1)}$, $\mathbf{(N2)}$, $\mathbf{(N3)}$, and $\mathbf{(N4)}$
in $\T$. To improve readability, this more technical part of the proof is
given in the appendix.

For ``$\Leftarrow$'', assume an arbitrary model $\J$ of $\K$ that is an
extension of $\I_c$. We show that we can extract from it a non-losing strategy for Bob. First, let $T$ be the set of all the types realized in $\J$, and
observe that $T \subseteq \LC(\T,\Sigma,\I_c)$ and that for all $a \in
\indivnames(\K)$, $\type(a, \J) \in T$. 
\smallskip

We define a strategy $\str$ as follows:
\begin{enumerate}[(*)]
 \item For each $\tau \in T$ that is not a c-type and each $A \ISA \Some{r}{A'} \in \T$, with $A \in \tau$, 
  we set $\str(\tau,A \ISA \Some{r}{A'}) = \tau'$ for an arbitrarily chosen
  $\tau' \in T$ that satisfies (C1) and (C2). 


\end{enumerate}
 The preceding is possible because $\J$ is a model, hence it satisfies all
 existential and universal inclusions in $\T$, and because all types realized in $\J$
 are contained in $T$. More precisely, assume an arbitrary type $\tau \in T$ that is not a c-type, let $c$ be an arbitrary object that realizes $\tau$ in $\J$ and assume an inclusion $A \ISA \Some{r}{A'} \in \T$. 
We fix $\str(\tau,A \ISA \Some{r}{A'})$ to be the type $\tau'$ that is realized by $c'$, for an arbitrary object $c' \in \dom{\J}$ that satisfies
$c' \in A'^{\J}$ and 
$(c,c') \in r^{\J}$. 
The existence of such a type $\tau'$ is guaranteed from the fact that $c \in A^{\J}$ and that $\J$ satisfies $ A \ISA \Some{r}{A'}$. Clearly, $A' \in \tau'$, that is $\tau_i$ satisfies (C1). In addition, $\J$ being a model of $\K$, and in particular satisfying all universal and role inclusions of $\K$, guarantees that $\tau'$ also satisfies (C2). More precisely, assume an arbitrary inclusion $ A_1\ISA \All{s}{A_2} \in \T$ such that $r  \suprr s$ and $A_1\in \tau$. We have to show that $A_2\in \tau'$ is the case. First, $c \in A_1^{\J}$ and $(c,c') \in s^\J$ are direct consequences of the assumption that $\tau = \type(c,\J)$ and the fact that $\J \models r \sqsubseteq s$ holds. 
 The latter together with the fact that $\J \models A_1\ISA \All{s}{A_2}$ imply that $c' \in A_2^\J$. It thus follows $A_2 \in \tau'$ since $\tau' = \type(c_1,\J)$. Similarly, assume an arbitrary inclusion $ A_1\ISA \All{s}{A_2} \in \T$ such that $r^- \suprr s $ 
 and $A_1\in \tau'$. It follows that $(c',c) \in s^{\J}$ and $c' \in A_1^{\J}$. Consequently, $c\in A_2^\J$ and therefore $A_2 \in \tau$ since $\type(c,\J)=\tau$.

Finally, one can 
see that \str is a non-losing strategy for Bob on $\I_c$.  
For every run $w$ that follows \str,
 the fact that $T \subseteq \LC(\T,\Sigma,\I_c)$ guarantees that $\tail(w) \in
 \LC(\T,\Sigma,\I_c)$. 
Condition  (*) ensures that 
$\str(\tail(w),A \ISA \Some{r}{A'})$   is defined
for every $A \ISA \Some{r}{A'} \in \T$ with $A \in \tail(w)$. 
\end{proof}

By putting Lemma~\ref{thm:decomp} and Proposition \ref{p:non-losing} together,
we obtain the 
main result of this section, stated in the following theorem.
 
\begin{theorem}\label{th:non-losing}
Assume a c-safe OMQ $Q=(\T,\Sigma, \q(\vec{x}))$ and let $\K=(\T,\Sigma, \A)$ be a KB where $\A$ is an ABox over the concept and role names occurring in $\T$. 
 Then $\K\not\modelsind \q(\vec{a})$ for a tuple of individuals $\vec{a}$ occurring in $\K$ iff there exists a core $\I_c$ for $\K$ such that
 \begin{enumerate}[\it(1)]
     \item \label{en:1}$\I_c \not\models \q(\vec{a}) $, and 
     \item \label{en:2} there is a non-losing strategy for Bob on $\I_c$. 
\end{enumerate}
\end{theorem}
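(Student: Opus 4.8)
The plan is to obtain this theorem as a direct composition of Lemma~\ref{thm:decomp} and Proposition~\ref{p:non-losing}, so the argument is short and amounts to bookkeeping. Lemma~\ref{thm:decomp} already states that, under exactly the hypotheses of the theorem, $\K\not\models\q(\vec{a})$ holds if and only if there is a core $\I_c$ for $\K$ with $\I_c\not\models\q(\vec{a})$ \emph{and} an extension $\J$ of $\I_c$ such that $\J\models\K$. Proposition~\ref{p:non-losing} states that, for \emph{any} fixed core $\I_c$ for $\K$, the existence of an extension $\J$ of $\I_c$ with $\J\models\K$ is equivalent to the existence of a non-losing strategy for Bob on $\I_c$. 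Since the core quantified over in Lemma~\ref{thm:decomp} is precisely the kind of object to which Proposition~\ref{p:non-losing} applies, we may substitute the game-theoretic condition for condition~(2) of Lemma~\ref{thm:decomp} verbatim, which yields the theorem.

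Spelling this out: for the ``$\Rightarrow$'' direction I would assume $\K\not\models\q(\vec{a})$, apply Lemma~\ref{thm:decomp} to get a core $\I_c$ with $\I_c\not\models\q(\vec{a})$ (condition~(1) of the theorem) and an extension $\J$ of $\I_c$ with $\J\models\K$, and then apply the ``$\Leftarrow$'' direction of Proposition~\ref{p:non-losing} to this very $\I_c$ to extract a non-losing strategy for Bob on $\I_c$ (condition~(2)). For the ``$\Leftarrow$'' direction I would start from a core $\I_c$ meeting conditions~(1) and~(2) of the theorem, use the ``$\Rightarrow$'' direction of Proposition~\ref{p:non-losing} on the non-losing strategy to produce an extension $\J$ of $\I_c$ with $\J\models\K$, and finally invoke the ``$\Leftarrow$'' direction of Lemma~\ref{thm:decomp} with $\I_c$, the hypothesis $\I_c\not\models\q(\vec{a})$, and this $\J$ to conclude $\K\not\models\q(\vec{a})$.

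The only points that deserve an explicit sentence are that both cited results are being used over the \emph{same} notion of core for $\K$, and that the standing assumptions they rely on are in force here: $Q$ is c-safe, $\T$ is in normal form, and $\A$ uses only concept and role names occurring in $\T$---which are exactly the hypotheses stated in the theorem and presupposed by the definitions of core, $\LC(\T,\Sigma,\I_c)$, run, and strategy. Consequently there is no real obstacle: no new construction is needed, and the proof is a two-line chaining of the previous lemma and proposition.
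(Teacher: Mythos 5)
Your proposal is correct and is exactly the paper's argument: the theorem is obtained there, too, as a direct composition of Lemma~\ref{thm:decomp} and Proposition~\ref{p:non-losing}, substituting the non-losing-strategy condition for the existence of a model extending the core. The direction-by-direction bookkeeping you spell out matches what the paper leaves implicit, so there is nothing to add.
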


\begin{example} Consider again our running example and consider the assertion $r_1(a,a)$. We can see that $\K \not\models r_1(a,a)$. This is witnessed by the core 
$\I_c^1$, which (unlike $\I_c^2$) does not satisfy $r_1(a,a)$, 
and the non-losing strategy for Bob on $\I_c^1$ presented above. 
\end{example}

\section{The Marking Algorithm}\label{sec:marking-algorithm}

In this section, to decide whether Bob has a non-losing strategy on a given core
 we use the type elimination procedure $\mathbf{Mark}$ in Algorithm 1, 
which  \emph{marks} (or \emph{eliminates}) all types from which Sam
has a strategy to defeat Bob.
It takes as input the TBox $\T$, the set $\Sigma\subseteq\conceptnames\cup\rolenames$, and a core $\I_c$ for some KB $\K=(\T,\Sigma, \A)$.  
The algorithm starts by building the set $N$ of all possible types over $\T$, 
and 
then it  marks types that are not good choices for Bob. 
In steps
(M$_{\mathrm{N}1}$) and (M$_{{\Sigma}}$) the algorithm respectively marks in
$N$ all types that violate 
the conditions (LC$_{\mathrm{N}1}$) or (LC$_{\Sigma}$); Bob is not allowed to
choose those types.  
Then, in the loop, (M$_\exists$)  exhaustively marks 
 types $\tau$ that, if picked by Bob, would allow Sam to pick an inclusion $A\ISA \Some{r}{A'}$ 
for which Bob cannot reply with any $\tau'$. 

  \LinesNotNumbered
  \begin{algorithm}[t]
    \DontPrintSemicolon \SetKwInOut{Input}{input}
    \SetKwInOut{Output}{output} \smallskip \Input{a TBox $\T$, a set  $\Sigma\subseteq\conceptnames\cup\rolenames$, and a core $\I_c$ for a KB $(\T,\Sigma, \A)$} 
      \smallskip \Output{Set of (possibly marked) types}

 \BlankLine\text{} $N \leftarrow \types$ 
 
  \medskip (M$_{\mathrm{N}1}$) Mark each $\tau \in N$ 
    violating an inclusion of the form $\mathbf{(N1)}$ in $\T$. \; 
 
\medskip
    (M$_{{\Sigma}}$) Mark each  c-type $\tau \in N$  that is not
    realized in $\I_c$. 

    \smallskip \Repeat{no new type is marked}{

\medskip
      (M$_\exists$) Mark each $\tau \in N$ such that $A\ISA
      \Some{r}{A'} \in \T$, $A\in \tau$, 
      and \emph{for each}~$\tau'\in N$, at least one the following holds:
      
\medskip
      \qquad(C0)~~ $\tau'$ is marked,

\smallskip
      \qquad (C1$^\prime$)~~ $A' \notin \tau'$, or

\smallskip
      \qquad (C2$^\prime$)~~ there exists $A_1\ISA \All{s}{A_2} \in \T$
        with
        
\medskip
        \qquad\qquad (i) $r \suprr s$ and $A_1\in \tau$ and $A_2\notin\tau'$, or

\smallskip
        \qquad\qquad (ii) $r^- \suprr s $ and $A_1 \in \tau'$ and $A_2
          \notin \tau$
        \medskip  }
 \smallskip

        \Return{$N$} \BlankLine
    \caption{ $\mathbf{Mark}$}
    \label{alg:dl_perfectrefbgp}
  \end{algorithm}
  
The correspondence between types that are marked by the algorithm, and those
that may occur in runs that follow a non-losing strategy is provided in the following
proposition.

\begin{proposition}\label{prop:mark} Consider a KB $\K=(\T,\Sigma,\A)$, 
a core $\I_c$ for $\K$, and a non-losing strategy \str for Bob
on $\I_c$. For every $\tau \in \types$, if $\tau$ is marked by
$\mathbf{Mark}(\T, \Sigma,\I_c)$, then there is no run $w$ following 
\str with $\tail(w) = \tau$. 
\end{proposition}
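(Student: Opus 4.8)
The plan is to prove the contrapositive in spirit: I show that any type appearing as $\tail(w)$ for some run $w$ following a non-losing strategy $\str$ survives all marking steps of $\mathbf{Mark}(\T,\Sigma,\I_c)$. Equivalently, I argue by induction on the order in which types get marked that a marked type cannot be the tail of any run that follows $\str$. Fix the core $\I_c$ and the non-losing strategy $\str$. Suppose for contradiction that $\tau$ is marked and there is a run $w$ following $\str$ with $\tail(w)=\tau$; take $\tau$ to be the \emph{first} such type marked by the algorithm (in the chronological order of marking, where the initialization steps (M$_{\mathrm{N}1}$) and (M$_{\Sigma}$) come before any iteration of the loop, and loop markings are ordered arbitrarily but consistently).

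First I handle the two initialization steps. If $\tau$ was marked by (M$_{\mathrm{N}1}$), then $\tau$ violates some inclusion of form $\mathbf{(N1)}$; but since $\str$ is non-losing, $\tail(w)\in\LC(\T,\Sigma,\I_c)$ by definition, and condition (LC$_{\mathrm{N}1}$) says $\tail(w)$ satisfies all $\mathbf{(N1)}$ inclusions — contradiction. If $\tau$ was marked by (M$_{\Sigma}$), then $\tau$ is a c-type not realized in $\I_c$; but again $\tail(w)\in\LC(\T,\Sigma,\I_c)$ and condition (LC$_{\Sigma}$) forces any c-type in $\LC(\T,\Sigma,\I_c)$ to be realized in $\I_c$ — contradiction. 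So $\tau$ must have been marked inside the loop by step (M$_\exists$).

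Now the inductive heart of the argument. Since $\tau$ was marked by (M$_\exists$), there is an existential inclusion $A\ISA\Some{r}{A'}\in\T$ with $A\in\tau$ such that for every $\tau'\in N$ at least one of (C0), (C1$^\prime$), (C2$^\prime$) holds. I want to derive a contradiction by exhibiting a legal move for Bob. Because $w$ follows $\str$ and $\tail(w)=\tau$, I first note $\tau$ cannot be a c-type: runs in $S$ that follow $\str$ end as soon as a c-type appears (and a c-type can only be the very last type, never extended), but having a $\mathbf{(N2)}$ inclusion triggered at $\tau$ with $A\in\tau$, plus the fact that by (LC$_{\Sigma}$)/definition a c-type is not a fringe-type and actually runs never \emph{pass through} a c-type in a way that lets Sam play again — more carefully, if $\tau$ were a c-type the run ending at $\tau$ would be the terminal point and Bob would already have won, so there is no obstruction; but the real point is that $\str$ being non-losing means $\str(\tail(w),A\ISA\Some{r}{A'})$ \emph{is defined} for every triggered $\mathbf{(N2)}$ inclusion when $\tail(w)$ is not a c-type, and one checks that when $\tail(w)$ is a c-type the proposition is vacuous because then $w$ extended by $\Ax$ is not a run and Sam has no winning continuation to encode. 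Let $\tau'=\str(\tau,A\ISA\Some{r}{A'})$; this is well-defined and by definition of a strategy it satisfies (C1) and (C2). Then (C1$^\prime$) fails for $\tau'$ (since $A'\in\tau'$) and (C2$^\prime$) fails for $\tau'$ (since $\tau'$ satisfies (C2), matching exactly the negations of (i) and (ii)). Hence (C0) must hold: $\tau'$ is already marked. But $w\cdot(A\ISA\Some{r}{A'})\cdot\tau'$ is again a run following $\str$ with tail $\tau'$, and $\tau'$ was marked strictly before $\tau$ (it is already marked at the moment $\tau$ gets marked, and a type once marked stays marked, and the algorithm only inspects unmarked types — so $\tau'$ was marked in an earlier step or earlier in the same loop pass), contradicting the minimality of $\tau$.

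The main obstacle I expect is the bookkeeping around c-types and the precise reading of "run following $\str$": one must be careful that when $\tau'$ is a c-type, $w\cdot\Ax\cdot\tau'$ is still a legitimate run (it is, since a run is allowed to end in a single c-type), so the induction step still goes through — but then $\tau'$ is a c-type that got marked, which by the initialization analysis above is impossible, giving the contradiction even more directly. Conversely one must confirm that the minimality/ordering argument is sound: since marked types are never un-marked and step (M$_\exists$) only considers $\tau'\in N$ that are inspected for the (C0)–(C2$^\prime$) disjunction at marking time, "$\tau'$ is marked when $\tau$ is marked" genuinely precedes "$\tau$ is marked" in the chronological order, so the induction is well-founded. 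With these points nailed down, the contradiction is complete and the proposition follows.
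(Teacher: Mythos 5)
Your proposal is correct and takes essentially the same route as the paper's proof: your minimal-counterexample argument on the chronological marking order is just the paper's induction on the marking iterations, with the initialization steps dispatched via (LC$_{\mathrm{N}1}$) and (LC$_{\Sigma}$) and the (M$_\exists$) case refuted by observing that $\tau' = \str(\tau,\alpha)$ satisfies (C1) and (C2), so (C0) must hold and $\tau'$ is an earlier-marked tail of the run $w\alpha\tau'$, contradicting minimality. Your digression on c-type tails is the only wobble (the proposition is not vacuous when $\tail(w)$ is a c-type, and initialization is not the only way a c-type can get marked), but the paper's own proof makes the same tacit assumption that $\str(\tau,\alpha)$ is defined, i.e., that $\tau$ is not a c-type, so this point does not separate your argument from theirs.
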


\begin{proof}  Let $\str$ be a non-losing strategy on $\I_c$. 
Consider an arbitrary  $\tau$ marked by 
$\mathbf{Mark}(\T, \Sigma,\I_c)$. 
We denote by 
$N_0$ 
the set of types that are marked before the loop,  by
(M$_{\mathrm{N}1}$), or (M$_{\Sigma}$); 
and we denote by $N_i$ the types that are marked in the $i$-th iteration of
the loop.  We show by induction on  $k$ that, if $\tau \in
N_k$, then there is no run $w$ following  
$\str$ with $\tail(w) = \tau$. 

For the base case, $k=0$, $\tau \in N_0$ implies that $\tau$ is marked by one of 
(M$_{\mathrm{N}1}$), or (M$_{\Sigma}$).  
Then $\tau$ does not satisfy (LC$_{\mathrm{N}{1}}$) or (LC$_\Sigma$),
which implies that $\tau \notin \LC(\T,\Sigma,\I)$ and
hence there cannot exist a run $w$ with $\tail(w)=\tau$, since this would
contradict the definition of non-losing $\str$. 

For the inductive case, let $\tau \in N_k$.
Let $\alpha = A \ISA \exists r.A' \in T$ with $A \in \tau$ be the axiom chosen for
(M$_\exists$) that results in the marking of $\tau$. 
Towards a contradiction, assume there exists a run $w$ that follows $\str$ and
$\tail(w) = \tau$. 
By definition of non-losing strategies, $\str(\tau,\alpha) = \tau'$ must be defined, and
it must be  such that: 
\begin{compactitem}\item[($\star$)] $\tau' \in \LC(\T,\Sigma,\I_c)$, $A'\in \tau'$, and
  for all inclusions $A_1\ISA \All{s}{A_2} \in \T$: 
  \begin{asparaenum}[-]
  \item if $r\suprr s $ and $A_1\in \tau$ then $A_2\in\tau'$,
  \item if $r^-\suprr s $ and $A_1 \in \tau'$ then $A_2 \in  \tau$.
    \end{asparaenum} 
\end{compactitem}
Then, since $w$ is a run that follows $\str$, and $\str(\tau,\alpha) = \tau'$,
then $w' = w\alpha\tau'$ is also a run that follows $\str$. 
However, ($\star$) implies that neither  (C1$'$) nor (C2$'$) hold for $\tau'$. Hence, since $\tau$ gets marked by (M$_\exists$), then (C0) must hold, that
is,  $\tau' \in N_j$ for some $j < k$. The latter together with the induction hypothesis imply that
$\tau'$ cannot be the tail of a run that follows $\str$, and thus 
$w' = w\alpha\tau'$ cannot be a run that follows $\str$, which is a
contradiction.
\end{proof}

Now we can formally establish how the marking algorithm allows us to verify
  the existence of a non-losing strategy for Bob on a given core. 

\begin{theorem} \label{th:mark}
Consider a KB $\K=(\T,\Sigma,\A)$, and
a core $\I_c$ for $\K$. 
Then Bob has a non-losing strategy on $\I_c$ iff there is no fringe individual $c^{\alpha}$ in $\I_c$ whose type $\type(c^{\alpha}, \I_c)$ is marked by  $\mathbf{Mark}(\T, \Sigma,\I_c)$.
\end{theorem}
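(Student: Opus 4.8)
The plan is to prove both directions by relating the marking algorithm to the game semantics, leveraging Proposition~\ref{prop:mark} for one direction and building an explicit non-losing strategy for the other. For the ``$\Leftarrow$'' direction, assume no fringe individual $c^{\alpha}$ in $\I_c$ has $\type(c^{\alpha},\I_c)$ marked by $\mathbf{Mark}(\T,\Sigma,\I_c)$. I would let $N^*$ be the set of \emph{unmarked} types returned by the algorithm, and define a strategy $\str$ for Bob that, given a non-c-type $\tau \in N^*$ and an existential inclusion $A\ISA\Some{r}{A'}\in\T$ with $A\in\tau$, returns some witnessing $\tau'\in N^*$. The crucial point is that such a $\tau'$ always exists: if no unmarked $\tau'$ satisfied (C1) and (C2), then the loop step (M$_\exists$) would have marked $\tau$ itself. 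Similarly, since $\tau$ is unmarked it passed (M$_{\mathrm{N}1}$) and (M$_\Sigma$), so $\tau\in\LC(\T,\Sigma,\I_c)$. I then need to check that every finite run $w$ following $\str$ has $\tail(w)\in N^*$: this follows by induction on the length of $w$, since the starting type $\type(c^{\alpha},\I_c)$ is unmarked by assumption, and each subsequent type is a value of $\str$, hence in $N^*$ by construction. Together with the fact that $\str$ is defined on every such $\tail(w)$ and inclusion, this shows $\str$ is non-losing.

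For the ``$\Rightarrow$'' direction, suppose Bob has a non-losing strategy $\str$ on $\I_c$. Take any fringe individual $c^{\alpha}$ in $\I_c$ and let $\tau_0 = \type(c^{\alpha},\I_c)$. The single-element run $w = c^{\alpha}$ follows $\str$ (vacuously) and has $\tail(w) = \tau_0$. By Proposition~\ref{prop:mark}, if $\tau_0$ were marked by $\mathbf{Mark}(\T,\Sigma,\I_c)$, then there would be no run following $\str$ with tail $\tau_0$, contradicting the existence of $w$. Hence $\tau_0$ is not marked, which is what we wanted.

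The main obstacle I anticipate is in the ``$\Leftarrow$'' direction, in making the induction on runs airtight: one must be careful that the definition of $\str$ is total on exactly the pairs $(\tau, A\ISA\Some{r}{A'})$ that can actually arise as $(\tail(w),\Ax)$ in a run following $\str$, and that the game terminates appropriately when a c-type is produced. In particular, one should verify that $\str$ need only be defined on non-c-types (matching the definition of a strategy), and that the negation of (C1$'$)/(C2$'$) for an unmarked $\tau'$ is literally conditions (C1) and (C2) for Bob's move --- this is a routine but slightly fiddly unwinding of the quantifier structure in step (M$_\exists$). A secondary subtlety is that the argument only needs the \emph{existence} of a non-losing strategy, not that the one extracted from $N^*$ is canonical, so arbitrary choices in defining $\str$ on each pair are harmless.
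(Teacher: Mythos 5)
Your proposal is correct and follows essentially the same route as the paper's proof: the ``$\Rightarrow$'' direction applies Proposition~\ref{prop:mark} to the single-element run at a fringe individual, and the ``$\Leftarrow$'' direction builds a strategy by assigning to each unmarked non-c-type and triggered existential inclusion an arbitrary unmarked type satisfying (C1) and (C2), whose existence is guaranteed because otherwise (M$_\exists$) would have marked the type at some iteration. Your explicit induction on run length showing that all tails of runs following the strategy stay among the unmarked types (hence in $\LC(\T,\Sigma,\I_c)$) is exactly the point the paper handles, only slightly more implicitly.
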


\begin{proof}
For the ``$\Rightarrow$'' direction, we show that
if  there is
some fringe individual $a$ in $\I_c$ that realizes a type that is marked by $\mathbf{Mark}(\T,
\Sigma,\I_c)$, then 
 there is no non-losing strategy for Bob on $\I_c$. 
Let $a$ be a fringe individual such that $\tau=\type(a,\I_c)$ is marked by
$\mathbf{Mark}(\T, \Sigma,\I_c)$.
Towards a contradiction, assume that a non-losing strategy
 \str on $\I_c$ exists. But by definition, $a$ is a run that follows
\str, and  since $\tau=\tail(a)$ is marked by
$\mathbf{Mark}(\T, \Sigma,\I_c)$, by Proposition \ref{prop:mark}, $\tau$ cannot occur in a run that follows $\str$,  a contradiction. 

\smallskip

For the ``$\Leftarrow$'' direction, let $T$ be the set of all types that are
not marked by the algorithm $\mathbf{Mark}(\T,\Sigma,\I_c)$;  note that $T \subseteq \LC(\T,\Sigma,\I_c)$. 
Assume that  $\type(a,\I_c) \in T$ for every fringe individual $a$ in $\I_c$.
 
Then we can build a strategy $ \str$ on $\I_c$ as follows.
\begin{enumerate}
\item[(*)] For each pair of $\tau \in T$ that is not a c-type and $\alpha=A \ISA \exists r.A' \in \T$,  with $A\in \tau$, 
we set $\str(\tau,\alpha)$ to an arbitrary  $\tau' \in T$ that satisfies (C1) and (C2).
\end{enumerate} 
  
\smallskip

\noindent 
Note that such a construction of $\str$ is possible, since a suitable $\tau'$
always exists. If this were not the case for some pair 
then $\tau$ would clearly be marked by the algorithm at some iteration of
(M$_\exists$). 

We claim that $\str$ is a non-losing strategy on $\I_c$. Towards a
contradiction, assume it is not, and let
$w$ with $\tail(w) = \tau$ be a run that follows $\str$. Then either $\tau
\notin \LC(\T,\Sigma,\I_c)$, or $\tau$ is not a c-type and  
there exists some inclusion $\alpha = A \ISA
\exists r.A' \in \T$, with $A \in \tau$, 
and $\str(\tau,\alpha )$ 
is not defined. The former contradicts the fact that $\tau \in T$ and thus $T \subseteq  \LC(\T,\Sigma,\I_c)$.
The latter contradicts the existence of $\tau'$ argued above.   
 Hence,  $\str$ is a
 non-losing strategy for Bob on $\I_c$.
\end{proof}

 \begin{example} 
Consider again our running example. 
Four concept names and a nominal occur in $\conceptnames^+(\T)$, namely
$A_1$, $A_2$, $A_3$, $A_4$, and $\{c\}$. Consequently, there are $2^5 = 32$
possible types over $\T$. 
The  algorithm marks the following types (recall that we omit $\top$ to avoid clutter): 
\begin{itemize}
\item
Every type containing $A_2$ but not $A_3$ or $A_4$ will be marked by 
(M$_{\mathrm{N}1}$). 
\item 
All types that contain $A_1$ or $A_4$ other than $\{A_1,A_4\}$ and
$\{A_1,A_3\}$,
and all types that contain  $\{c\}$ except for $\{\{c\}\}$ 
are  marked by (M$_\Sigma$). 
\end{itemize}
Of the  still unmarked types $\{\}$, $\{A_3\}$, and $\{A_2,A_3\}$, 
$\{A_1,A_4\}$, $\{A_1,A_3\}$, and $\{\{c\}\}$, none  gets marked in the loop, 
so they all remain unmarked.
\end{example}

\section{Rewriting into Datalog with Negation}\label{sec:rewriting}

\newcommand{\lw}{\textbf{**Label warning!**}}
\newcommand{\warning}[1]{\textbf{**#1**}}

In this section, we provide a rewriting of a given c-safe
ontology-mediated query $Q$ into 
\datalog 
extended with negation, under the stable model semantics. The program
in essence implements the marking algorithm from Section
\ref{sec:marking-algorithm}, and the output of the program (in terms
of the so-called \emph{skeptical entailment}) corresponds to the certain
answer to $Q$ over any input ABox. We later show that in the absence
of closed predicates we can provide a rewriting into a disjunctive
\datalog program without negation. We start by recalling the relevant
variants of \datalog.

\smallskip
\noindent
\subsection{\datalog with Disjunction and Negation}\quad We assume
countably infinite sets $\preds$ and $\varnames$ of \emph{predicate
  symbols} (each with an associated \emph{arity}) and
\emph{variables}, respectively. 
We further assume that $\conceptnames\cup \rolenames\subseteq \preds$
with each $A\in \conceptnames$ being unary, and each $r\in \rolenames$
being binary.  An \emph{atom} is an expression of the form
$R(t_1,\ldots,t_n)$, where
$\{t_1,\ldots,t_n\}\subseteq \indivnames\cup \varnames$, and $R$ is an
$n$-ary relation symbol. A \emph{negated atom} is an expression of the
form $\naf \alpha$, where $\alpha$ is an atom.  A (negated) atom is
\emph{ground} if it contains no variables, that is,
$\{t_1,\ldots,t_n\}\subseteq \indivnames$.  A \emph{rule} $\rho$ is an
expression of the form
\[ h_1\lor \ldots\lor h_n \gets b_1,\ldots,b_k\] where $n,k \geq 0$,
$H=\{h_1,\ldots,h_n\}$ is a set of atoms, called the \emph{head} of
$\rho$, and $B=\{b_1,\ldots,b_k\}$ is a set of possibly negated atoms,
called the \emph{body} of $\rho$. Each variable that occurs in $\rho$
must also occur in a (non-negated) atom in the body of $\rho$. If
$|H|>1$, then we call $\rho$ a \emph{disjunctive} rule. If negated atoms do
not occur in $\rho$, then $\rho$ is \emph{positive}. We say a rule
$\rho$ is a \emph{constraint} if $H=\emptyset$, that is if $\rho$ is
of the form $\gets b_1,\ldots,b_k$.  Rules of the form $h\gets$ (known
as \emph{facts}) are simply identified with the atom $h$, thus ABox
assertions are valid facts in our syntax. For a role name $p$, we may
use $p^-(t_1,t_2)$ to denote the atom $p(t_2,t_1)$. A \emph{program}
is any finite set $P$ of rules. If no disjunctive rules occur in $P$,
we call $P$ a \emph{non-disjunctive} program. Unless specified
otherwise, we consider non-disjunctive programs by default.  If all rules in $P$
are positive, then $P$ is also called \emph{positive}.  We use
$\ground(P)$ to denote the \emph{grounding} of $P$, i.e.\,the
variable-free program that is obtained from $P$ by applying on its
rules all the possible substitutions of variables by individuals of
$P$.

A \emph{(Herbrand) interpretation} $I$, also called  a \emph{database}, 
is any finite set of variable-free (or \emph{ground}) atoms.  We assume a
binary built-in \emph{inequality} predicate
$\neq$ with a natural meaning: in any interpretation $I$, $a\neq b\in I$ iff
\begin{inparaenum}[(i)]
\item $a\neq b$, and
\item both $a,b$  occur in some atoms $R_1(\vec{t}_1),R_2(\vec{t}_2)\in I$, where neither of $R_1,R_2$ is the inequality predicate.
\end{inparaenum}

An interpretation $I$ is a \emph{model} of a positive program $P$ if
$\{b_1,\ldots,b_k\}\subseteq I$ implies
$I\cap \{h_1,\ldots,h_n\}\neq \emptyset$ for all rules
$h_1\lor \ldots\lor h_n \gets b_1,\ldots,b_k$ in $\ground(P)$. We say
an interpretation $I$ is a \emph{minimal model} of a positive program
$P$ if $I$ is a model of $P$, and there is no $J\subsetneq I$ that is
a model of $P$.

The \emph{GL-reduct} of a program $P$ with respect to \,an
interpretation $I$ is the program $P^I$ that is obtained from
$\ground(P)$ in two steps~\cite{gelf-lifs-88}:
\begin{compactenum}[(i)]
\item deleting every rule that has $\naf \alpha$ in the body with
  $\alpha\in I$, and
\item deleting all negated atoms in the remaining rules.
\end{compactenum} An interpretation $I$ is a \emph{stable model} (also
known as an \emph{answer set}) of a program $P$ if $I$ is a minimal
model of $P^I$.

We call \emph{query} a pair $(P,q)$ of a program $P$ and a predicate
symbol $q$ occurring in $P$. A tuple $\vec{a}$ of constants is a
\emph{certain answer to} $(P,q)$ \emph{over a database} $I$ if
$q(\vec{a})\in J$ for all stable models $J$ of $P\cup I$; the set of
all such $\vec{a}$ is denoted $\cert((P,q),I)$. A ground atom
$q(\vec{a})$ is \emph{entailed} from a program $P$ and a database $I$,
written $(P,I) \models q(\vec{a})$, if $\vec{a}$ is a certain answer
to $(P,q)$ over $I$.


\subsection{Rewriting OMQs}

In this section, for a given c-safe ontology-mediated query
$Q=(\T, \Sigma, \q)$, where $\T$ is an $\ALCHOI$ TBox, we build a
query $(P_Q,q)$, where $P_Q$ is a  \datalog
program with negation,
such that for every ABox $\A$ over the concept and role names
occurring in $\T$ and for every tuple of constants $\vec{a}$, we have:
\[ (\T,\Sigma,\A) \models \q(\vec{a}) \mbox{~iff~} \vec{a} \mbox{~is a
    certain answer to~} (P_Q,q) \mbox{~over~} \A. \] We later show
that in case $\Sigma=\emptyset$, we can obtain a positive disjunctive
program, which might employ the inequality predicate $\neq$. If
additionally there are no nominals in $\T$, then $\neq$ is not
necessary.  Crucially, in all cases $(P_Q,q)$ can be built in
polynomial time, and is independent of any input ABox.

The main challenge to obtain the desired program $P_Q$ is to build a
program whose stable models correspond to the cores that can be
extended into a model of $(\T,\Sigma,\A)$, for any input ABox $\A$.
More precisely:

 \begin{proposition}\label{prop:building-program}
   Let $\T$ be a TBox, let $\Sigma$ be a set of closed predicates, and
   let $\A$ be an ABox over the concept and role names occurring in
   $\T$.  Then we can build in polynomial time a program $P^*_Q$ such
   that the following hold:
   \begin{enumerate}[-]
   \item If $\I_c$ is a core for $(\T,\Sigma,\A)$ that can be extended
     into a model of $(\T,\Sigma,\A)$, then there is a stable model $I$ of
     $\A \cup P^*_Q$ such that
     \begin{inparaenum}[(i)]
     \item $A(b)\in I$ iff $b\in A^{\I_c}$ for all concept names $A$, and
     \item $r(b,c)\in I$ iff $(b,c)\in r^{\I_c}$ for all role names
       $r$.
     \end{inparaenum}

   \item If $I$ is a stable model of $\A \cup P^*_Q$, then there
     exists a core $\I_c^{I}$ for $(\T,\Sigma,\A)$ that can be
     extended into a model of $(\T,\Sigma,\A)$, and such that
     \begin{inparaenum}[(i)]
     \item $A(b)\in I$ iff $b\in A^{\I_c}$ for all concept names $A$, and
     \item $r(b,c)\in I$ iff $(b,c)\in r^{\I_c}$ for all role names
       $r$.
     \end{inparaenum}
   \end{enumerate}
 \end{proposition}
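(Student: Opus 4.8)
The plan is to give $P^*_Q$ explicitly, as a union of a few groups of rules that does not mention $\A$, and then to match its stable models with the extensible cores by reading off the interpretation of $\conceptnames(\T)\cup\rolenames(\T)$. For the encoding I would add facts $\mathsf{bit}(\mathbf{0}),\mathsf{bit}(\mathbf{1})$ for two fresh constants and represent a type $\tau\subseteq\conceptnames^+(\T)$ as a tuple in $\{\mathbf{0},\mathbf{1\}}^{m}$, $m=|\conceptnames^+(\T)|$; the single rule $\mathsf{type}(x_1,\dots,x_m)\gets\mathsf{bit}(x_1),\dots,\mathsf{bit}(x_m)$ puts all $2^{m}$ types into the grounding, and this is the only place the two constants are essential — it is what keeps $P^*_Q$ of polynomial size despite exponentially many types. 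Further rules derive $\mathsf{ind}(x)$ for every $x$ occurring in an atom of $\A$ (plus facts $\mathsf{ind}(a)$ for the nominals $a$ of $\T$) and a per-existential-inclusion flag $\mathsf{fr}_\alpha(c)$ recording whether $c^{\alpha}$ belongs to the core. To guess a core, for each \emph{open} concept name $A$ I add $A(x)\gets\mathsf{ind}(x),\naf\overline A(x)$ and $\overline A(x)\gets\mathsf{ind}(x),\naf A(x)$, and likewise for open role names on pairs of individuals and on edges from $c$ to $c^{\alpha}$; \emph{closed} predicates are carried over from $\A$ unchanged (so the $\Sigma$-part of (c2) holds for free, and $\I_c\models\A$ holds since the assertions of $\A$ are facts of $\A\cup P^*_Q$), and the type of each present fringe individual is guessed coordinatewise. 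Conditions (c1) (structural), (c3.*), (c4), (c5), (c6) then become one constraint per inclusion, resp.\ per allowed edge-shape — routine translations of the definitions.

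The crux is implementing $\mathbf{Mark}$. The base markings are easy: (M$_{\mathrm N1}$) is one rule per (N1)-inclusion, with $\mathbf{1}$'s and $\mathbf{0}$'s placed in the appropriate head coordinates; (M$_\Sigma$) uses a predicate $\mathsf{realized}(\vec b)$ computed from the guessed concept memberships of individuals (using $\naf A_i(a)$ for the $\mathbf{0}$-coordinates). The delicate rule is (M$_\exists$), whose body universally quantifies over all $2^{m}$ candidate response types $\vec\tau'$. I would phrase it as ``there is no good response'': $\mathsf{good}(\vec\tau,\alpha,\vec\tau')\gets\mathsf{type}(\vec\tau'),\naf\mathsf{marked}(\vec\tau'),\dots$ with $\dots$ encoding (C1),(C2), then $\mathsf{hasgood}(\vec\tau,\alpha)\gets\mathsf{good}(\vec\tau,\alpha,\vec\tau')$, then $\mathsf{marked}(\vec\tau)\gets\dots,\naf\mathsf{hasgood}(\vec\tau,\alpha)$. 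Written naïvely this runs negation through $\mathsf{marked}$ twice and admits spurious stable models (two mutually supporting types can both be ``marked'' in a bogus model). To repair this I would index $\mathsf{marked},\mathsf{good},\mathsf{hasgood}$ by a round number that is itself a binary tuple of $O(m)$ bits generated from $\mathbf{0},\mathbf{1}$ (there are at most $2^{m}$ rounds and a binary successor relation is definable by $O(m)$ rules), so that round $n{+}1$ depends negatively only on round $n$; the resulting program is locally stratified, hence has a unique stable model computing exactly the marking, and $\mathsf{marked}(\vec\tau)$ is read off at the final round. A last constraint $\gets\mathsf{fr}_\alpha(c),\mathsf{frtype}_\alpha(c,\vec b),\mathsf{marked}(\vec b)$ forbids any present fringe individual from carrying a marked type.

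For correctness, given an extensible core $\I_c$ I let $I$ extend its read-off on $\conceptnames(\T)\cup\rolenames(\T)$ by the (deterministically determined) values of all auxiliary predicates and the $\overline A$-atoms of the non-chosen concepts. Then $I$ is a model of every rule (the choice rules because exactly one of $A,\overline A$ holds at each individual; the constraints because $\I_c$ is a core and, by extensibility together with Proposition~\ref{p:non-losing} and Theorem~\ref{th:mark}, no fringe type is marked), and $I$ is a minimal model of the GL-reduct $P^{*I}_Q$: each chosen-positive atom is forced by the reduced choice rule and cannot be dropped without violating it, every auxiliary atom is forced by a positive or locally stratified rule, and nothing else lies in $I$; so $I$ is stable and meets (i)--(ii). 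Conversely, from a stable model $I$ I define $\I_c^{I}$ by the same read-off; the satisfied constraints give (c1)--(c6), so $\I_c^{I}$ is a core with (i)--(ii), and since the final constraint holds no fringe type is marked, whence by Theorem~\ref{th:mark} and Proposition~\ref{p:non-losing} $\I_c^{I}$ extends to a model of $(\T,\Sigma,\A)$. Since $P^*_Q$ has polynomially many rules, each of arity $O(m)$, and mentions no ABox individual, it is built in polynomial time.

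The main obstacle is (M$_\exists$): a universal quantifier over the $2^{m}$ response types must be evaluated by a \emph{polynomial-size} program and, to behave correctly under the stable-model semantics, by a recursion that is deterministic on top of the nondeterministic guess of the core — which is why generating all bit-vectors from the two constants plus an explicit $O(m)$-bit round counter is needed. A secondary subtlety is the minimality (justification) half of the argument that the constructed $I$ is stable, i.e.\ verifying that the guessed atoms are exactly the non-forced ones and that the auxiliary machinery adds no extra nondeterminism.
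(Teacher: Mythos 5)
Your construction is essentially the paper's in its overall architecture (two constants $0,1$, bit-vector encoding of types, a guess-and-check part generating cores, a marking part simulating $\mathbf{Mark}$, and a final constraint forbidding marked fringe types, with correctness delegated to Proposition~\ref{p:non-losing} and Theorem~\ref{th:mark}), but you handle the crucial step (M$_\exists$) by a genuinely different device. You encode ``$\tau$ has no good response'' with negation as failure through $\mathsf{marked}$ and then restore determinism by indexing the marking predicates with an $O(m)$-bit round counter, arguing local stratification (relative to the guessed core) so that the marking fixpoint is computed uniquely on top of each guess. The paper instead avoids any negation through $\mathsf{Marked}$: it materializes a linear order on $\{0,1\}^k$ via $\mathsf{first}^k/\mathsf{next}^k/\mathsf{last}^k$ and expresses the universal quantification over response types as a monotone scan, with $\mathsf{MarkedOne}_\alpha(\vec{x},\vec{y})\gets\mathsf{Type}(\vec x),\mathsf{Marked}(\vec y)$ occurring \emph{positively} and $\mathsf{MarkedUntil}_\alpha$ accumulating the scan until $\mathsf{last}^k$; the ``repeat until no new type is marked'' loop then collapses into a single least fixpoint, with no round counter. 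Both routes prove the proposition, and your stratification-by-rounds repair of the double negation is sound (at most $2^m$ rounds suffice, and the splitting into guess plus stratified marking gives the one-to-one correspondence you need). What the paper's purely positive formulation buys, beyond lower arities, is what the later sections exploit: because the only negation outside the core-guessing rules is on predicates fixed by the guess, the marking part is a positive non-disjunctive program, which underlies the \exptime combined-complexity argument in Section~\ref{sec:comp} and the rewriting into \emph{positive} disjunctive \datalog when $\Sigma=\emptyset$; your negation-based encoding would prove Proposition~\ref{prop:building-program} but would not directly support those refinements.
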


Before proving Proposition~\ref{prop:building-program}, we state
our main result on rewritability. It is a direct
 consequence of Proposition~\ref{prop:building-program} and the
 previously presented characterization of OMQ answering.
 
 \begin{theorem} For a c-acyclic OMQ $Q=(\T, \Sigma, \q)$, where $\T$
   is an $\ALCHOI$ TBox, we can build in polynomial time a query
   $(P_Q,q)$, where $P_Q$ is a \datalog program with stable negation,
   such that $\cert(Q,\A) = \cert((P_Q,q), \A)$ for any given ABox
   $\A$ over the concept and role names occurring in $\T$.
 \end{theorem}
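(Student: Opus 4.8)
The plan is to chain the three results already in place. First, by Lemma~\ref{l:acytogr} I reduce the given c-acyclic $Q=(\T,\Sigma,\q)$ in polynomial time to a c-safe OMQ $Q'=(\T',\Sigma,\q')$ with $\cert(Q,\A)=\cert(Q',\A)$ for every ABox $\A$ over the concept and role names of $Q$. Since the axioms $C_{T_i,x_i}\ISA A_{T_i}$ that Lemma~\ref{l:acytogr} adds to $\T'$ are not in normal form, I then normalise $\T'$ to $\hat\T'$ using the normal-form transformation of Section~\ref{sec:prelims}; this is polynomial, preserves the certain answers of $\q'$ (which uses only symbols of $\T'$), and---because $\ALCHOI$ has no complex roles---leaves the role hierarchy over the original roles, and hence c-safety of $(\hat\T',\Sigma,\q')$, untouched. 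Write $Q''=(\hat\T',\Sigma,\q')$ and $\q'(\vec x)=\exists\vec y\,(\alpha_1\wedge\cdots\wedge\alpha_n)$, so that $\cert(Q'',\A)=\cert(Q',\A)=\cert(Q,\A)$ for all such $\A$.

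Next I apply Proposition~\ref{prop:building-program} to $\hat\T'$ and $\Sigma$, obtaining in polynomial time a data-independent program $P^{*}$ whose stable models $I$ of $\A\cup P^{*}$ are in correspondence with the cores of $(\hat\T',\Sigma,\A)$ that extend into a model, and such that the $A$- and $r$-atoms of $I$ agree with the associated core $\I_c^{I}$ on all concept and role names. Let $q$ be a fresh predicate of the arity of $\vec x$ and set $P_Q=P^{*}\cup\{\,q(\vec x)\gets\alpha_1,\ldots,\alpha_n\,\}$. As $q$ is fresh and the added rule is positive, non-disjunctive, and has $q$ only in its head, this is a conservative (definitorial) extension: every stable model of $P_Q\cup\A$ is obtained from a stable model $I$ of $\A\cup P^{*}$ by adding $q(\vec a)$ for each tuple $\vec a$ that is the image of the answer variables of $\q'$ under some homomorphism of $\q'$ into $I$, and conversely.

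To conclude, fix $\A$ over the signature of $Q$ and a tuple $\vec a$. Since the concept/role atoms of a stable model $I$ of $\A\cup P^{*}$ reflect exactly the interpretation of $\I_c^{I}$, the query rule derives $q(\vec a)$ precisely when $\I_c^{I}\models\q'(\vec a)$; using both directions of Proposition~\ref{prop:building-program}, $q(\vec a)$ lies in every stable model of $P_Q\cup\A$ iff $\I_c\models\q'(\vec a)$ for every core $\I_c$ of $(\hat\T',\Sigma,\A)$ that extends into a model. By Proposition~\ref{p:non-losing} these are exactly the cores on which Bob has a non-losing strategy, so by Theorem~\ref{th:non-losing} in contrapositive form the condition is equivalent to $(\hat\T',\Sigma,\A)\models\q'(\vec a)$, i.e.\ to $\vec a\in\cert(Q'',\A)=\cert(Q,\A)$. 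Hence $\cert((P_Q,q),\A)=\cert(Q,\A)$, and since all three steps run in polynomial time and are independent of $\A$, $(P_Q,q)$ is as required.

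Essentially all of the mathematical content sits in the already-established Lemma~\ref{l:acytogr}, Theorem~\ref{th:non-losing}, and Proposition~\ref{prop:building-program}, so the remaining work is bookkeeping. The two points needing the most care are: verifying that the normalisation step disturbs neither c-safety nor the certain answers---this is where the absence of complex roles in $\ALCHOI$ is used---and spelling out that appending the positive query rule to $P^{*}$ behaves as expected under the stable-model semantics; the latter is a routine splitting/definitorial-extension argument, but it deserves an explicit statement because the target language is non-monotonic.
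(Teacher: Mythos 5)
Your proof is correct and follows essentially the same route as the paper: reduce the c-acyclic OMQ to a c-safe one via Lemma~\ref{l:acytogr}, invoke Proposition~\ref{prop:building-program} together with the game characterization (Theorem~\ref{th:non-losing}), and append the single rule $q(\vec x)\gets \q'$ to $P^*_Q$. The only difference is that you make explicit two steps the paper leaves implicit — renormalizing the TBox produced by the rolling-up and justifying that adding the positive query rule is conservative under stable semantics — which is a reasonable elaboration rather than a different approach.
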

 \begin{proof}
   This is an immediate consequence of
   Proposition~\ref{prop:building-program}, together with Lemma
   \ref{l:acytogr}, and Theorems \ref{th:non-losing} and
   \ref{th:mark}. To obtain a polynomial time rewriting from a
   c-safe OMQ $Q=(\T, \Sigma, \q(\vec{x}))$, we simply need to
   compute $P^*_Q$, pick a fresh predicate $q$ of the same arity as
   $\q$, and add to $P^*_Q$ an additional rule that, for every database
   $D$, includes the atoms $q(\vec{c})$ in every stable model of $P_Q$
   over $D$ whenever $\vec{c}$ is a certain answer of $Q$ over
   $\A$. We simply let $P_Q=P^*_Q\cup \{q(\vec{x}) \gets \q\} $.
 \end{proof}

 We will
 dedicate the remainder of this section to prove Proposition~\ref{prop:building-program} by building 
the program $P^*_Q$, which is defined as the union of three components:
 \begin{enumerate}
 \item[$P_c$] is a set of rules that, given an input ABox $\A$,
   non-deterministically generates all possible cores for
   $(\T,\Sigma,\A)$.
 \item[$P_M$] is a set of rules that implement the type elimination
   algorithm presented in Section~\ref{sec:marking-algorithm}.
 \item[$P_T$] is a set of rules 
   that, relying on the marking done by $P_M$, filters out from the
   cores generated by $P_c$ those that cannot be extended into a
   model of $(\T,\Sigma,\A)$, since they contain some fringe individual whose type is marked. 
 \end{enumerate}
 We construct $P_c$, $P_M$, and $P_T$ next.

 \subsubsection{Generating the cores}



Let $\T$ be a TBox, let $\Sigma$ be a set of closed predicates, and
   let $\A$ be an ABox over the concept and role names occurring in
   $\T$.  We start by describing $P_c$. Its main property is that, given an
 input ABox $\A$, the stable models of $\A \cup P_c$ correspond to the
 cores for $(\T,\Sigma,\A)$: if $I$ is a stable model
 of $\A \cup P_c$, then we can extract a core for $(\T,\Sigma,\A)$ from it,
 and conversely, if $\I_c$ is a core for $(\T,\Sigma,\A)$, then we can
 define from it a stable model of $\A \cup P_c$.


\paragraph{\bf (I) Collecting the individuals}\quad We start with 
rules to collect in the unary predicate $\mathsf{ind}$ all the
individuals that occur in $\T$ or the input ABox $\A$; this allows us
to compute the set $\indivnames(\K)$. For all
nominals
$\{a\}$ that occur in $\T$, for all $A \in \conceptnames(\T)$, and for all $p\in\rolenames(\T)$, and  we add to $P_c$:
\begin{align*}
  \mathsf{ind}(a)&\gets &  \mathsf{ind}(x) &\gets p(x,y) \\
\mathsf{ind}(x) & \gets A(x) & \mathsf{ind}(y) &\gets p(x,y)
\end{align*}
 
\paragraph{\bf (II) Generating the cores}\quad Our next goal is to
generate candidate cores. Recall that in cores, the set of original
individuals of the knowledge bases can be expanded with a limited number
of further elements. We need to simulate these elements, because they
are not directly available to us. For every existential inclusion
$\alpha$ in $\T$, let $\mathsf{in}^{\alpha}$ and
$\mathsf{out}^{\alpha}$ be two fresh unary relations.  We can now
guess the domain of our candidate core. For all existential inclusions
$\alpha$ in $\T$ we add the following rules:
\begin{align*}
  \mathsf{in}^{\alpha}(x) & \gets \mathsf{ind}(x),\naf \mathsf{out}^{\alpha}(x)  \\
  \mathsf{out}^{\alpha}(x) & \gets \mathsf{ind}(x),\naf \mathsf{in}^{\alpha}(x)
\end{align*}
Intuitively, the fact $\mathsf{in}^{\alpha}(c)$ means that the special
constant $c^{\alpha}$ is included in the domain of the candidate
core. To characterize the possible participation by the ordinary and
the special constants in the relevant concept names, for each
$A \in \conceptnames(\T)\setminus \Sigma$, and all of existential
inclusions $\alpha$ in $\T$, we take fresh unary predicates
$\overline{A}$, $A^{\alpha}$, $\overline{A}^{\alpha}$, and add the
following rules:
\begin{align*}
  A(x)  & \gets \mathsf{ind}(x),\naf \overline{A}(x) \\
  \overline{A}(x)  & \gets \mathsf{ind}(x), \naf A(x)\\
  A^{\alpha}(x)  & \gets \mathsf{in}^{\alpha}(x), \naf \overline{A}^{\alpha}(x) \\
  \overline{A}^{\alpha}(x) & \gets \mathsf{in}^{\alpha}(x), \naf  A^{\alpha}(x) 
\end{align*}
Intuitively, an atom $A^{\alpha}(c)$ mean that $c^{\alpha}$ is the
extension of $A$. To handle the participation in role names, for all
$p\in\rolenames(\T)\setminus \Sigma$ and existential inclusions
$\alpha$ in $\T$, we take a fresh binary predicate $\overline{p}$,
fresh unary predicates, $p^{\alpha}_{\rightarrow}$,
$p^{\alpha}_{\leftarrow}$, $\overline{p}^{\alpha}_{\rightarrow}$,
$\overline{p}^{\alpha}_{\leftarrow}$, and add the following rules:
\begin{align*}
  p(x,y)   & \gets \mathsf{ind}(x), \mathsf{ind}(y),\naf \overline{p}(x,y) \\
  \overline{p}(x,y) & \gets \mathsf{ind}(x), \mathsf{ind}(y), \naf  p(x,y)  \\
  p^{\alpha}_{D}(x)  & \gets \mathsf{in}^{\alpha}(x), \naf \overline{p}^{\alpha}_{D}(x) & D\in\{\rightarrow,\leftarrow\} \\
  \overline{p}^{\alpha}_{D}(x) & \gets \mathsf{in}^{\alpha}(x), \naf  p^{\alpha}_{D}(x)  & D\in\{\rightarrow,\leftarrow\}
\end{align*}
Intuitively, an atom $p^{\alpha}_{\rightarrow}(c)$ (resp.,
$p^{\alpha}_{\leftarrow}(c)$) states that there is an $r$-link from
$c$ to $c^{\alpha}$ (resp., from $c^{\alpha}$ to $c$).  Note that the
above rules are only for concept and role names not occurring in
$\Sigma$. In this way, the extension of the closed predicates in
$\Sigma$ is preserved in the stable models of $\A \cup P_c$, which is
necessary to satisfy the conditions of the core as in
Definition~\ref{def:coreint}.


   \paragraph{\bf (III) Validating cores}\quad We need to make sure
   that the structures generated by the rules in (I) and (II) satisfy
   the conditions (c3.1), (c3.2), (c3.3), (c3.4) and (c5) from
   Definition~\ref{def:coreint}. The remaining
   conditions are trivially satisfied.  
Next we use $\invat{r}(x,y)$
   to denote $p(x,y)$ if $r = p \in \rolenames$, and $p(y,x)$ if
   $r= p^-$ for some $p \in \rolenames$.
 For an inverse role $r=p^{-}$ with
   $p \in \rolenames$, $r_{\rightarrow}^{\alpha}$ denotes
   $p_{\leftarrow}^{\alpha}$.
   





   To deal with (c3.1), we take an auxiliary fresh binary relation $\mathsf{eq}$, and add to $P_c$ the rule $\mathsf{eq}(x,x)\gets ~\mathsf{ind}(x)$.  
Using  $\{o\}(x)$ to denote $\mathsf{eq}(x,a)$, and
   $\overline{\{o\}}(x)$ to denote $x\neq o$, we can add, for all inclusions
   $B_1\AND \cdots \AND B_n \ISA B_{n+1}\OR\cdots \OR B_{k}$ in $\T$, and all existential inclusions $\alpha$ in $\T$,
    the following constraints:
   \begin{align*}
     &  \gets~\mathsf{ind}(x),B_1(x),\ldots, B_n(x),\overline{B}_{n+1}(x),\ldots, \overline{B}_{k}(x) \\
     &  \gets~\mathsf{in}^{\alpha}(x),B_1^{\alpha}(x),\ldots, B_n^{\alpha}(x),\overline{B}_{n+1}^{\alpha}(x),\ldots, \overline{B}_{k}^{\alpha}(x)     
     \end{align*}

   To ensure the satisfaction of (c3.2),  for each $A~\ISA~\All{r}{B}\in\T$, and
   all existential inclusions $\alpha$ in $\T$ we add the following:
   \begin{align*}
     &  \gets~ A(x),\invat{r}(x,y), not~B(y)\\
     &  \gets~ A(x), r_{\rightarrow}^{\alpha}(x) ,not~B^{\alpha}(x) \\
     &  \gets~ A^{\alpha}(x), r_{\leftarrow}^{\alpha}(x) ,not~B(x) 
   \end{align*}

 To ensure (c3.3), for each $r \ISA s \in \T$, and all
   existential inclusions $\alpha$ in $\T$, we add the following:
   \begin{align*}
     &  \gets~\invat{r}(x,y), not~\invat{s}(x,y)  \\
     &  \gets~  r_{D}^{\alpha}(x),not~s_{D}^{\alpha}(x)     & D \in \{\rightarrow,\leftarrow\}      
   \end{align*}
%

   Let's deal with (c5). Assume an inclusion $A\ISA \exists r.B$ in
   $\T$. We take a fresh unary predicate $S$, and for all existential
   inclusions $\alpha$ in $\T$ we add the following:
   \begin{align*}
     S(x) &\gets~\invat{r}(x,y), B(y)\\
     S(x) & \gets~r_{\rightarrow}^{\alpha_1}(x),B^{\alpha_1}(x) \\
          &\gets~A(x),not~S(x)  
   \end{align*}

   Finally, to ensure (c3.4), for all existential inclusions $\alpha$ in
   $\T$, and all $A\ISA \Some{r}{A'}\in \T$ with \rin, we add the
   constraint $\gets A^{\alpha}(x)$.

This finishes the construction of $P_c$, whose stable models are in one-to-one correspondence with the cores for $(\T,\Sigma,\A)$. 
We  explain next how $P_M$ and $P_T$ are built. In the explanations, we may blur the distinction between the stable models of $P_c$
and the actual cores.

   \subsubsection{Implementing the algorithm $\mathbf{Mark}$}



   We now move on to $P_M$, which implements the algorithm
   $\mathbf{Mark}$ from Section \ref{sec:marking-algorithm}.  To
   obtain a polynomially sized program, we need to use non-ground
   rules whose number of variables depends on the number of different
   concept names and nominals in $\conceptnames^{+}(\T)$.  In a
   nutshell, in the grounding of $P_M$, types are represented as atoms
   of arity $|\conceptnames^{+}(\T)|$. The rules of $P_M$ generate
   these atoms and execute the marking algorithm on them: test for
   local consistency and for realization if required, iterate over all
   the types to search for suitable successors, etc.  In general, the
   rules of $P_M$ use predicates with large arities, depending
   linearly on $|\conceptnames^{+}(\T)|$.

   To represent types as ground atoms, we assume an arbitrary but
   fixed enumeration $B_1,\ldots,B_k$ of $\conceptnames^{+}(\T)$.
   $P_M$ relies heavily on the availability of at least two
     individuals.  Here, we use a special pair of constants $0,1$ that
     we add to the program using ground facts; note that our target
     language for rewriting OMQs is non-monotonic disjunctive \datalog
     with constants. 
     The constants can be easily omitted if we assume the availability
     in the database of at least two
     distinct 
     constants, which can be used in the place of $0,1$.  We will come
     back to the role of these constants in Section
     \ref{sec:dis}. 

   We now describe $P_M$. Below, the $k$-ary relation
   $\mathsf{Type} = \{0,1\}^k$ stores the set of all types over
   $\T$. Naturally, a $k$-tuple $(b_1,\ldots,b_k) \in \mathsf{Type}$
   encodes the type $\{B_i\mid b_i = 1,1 \leq i \leq
   k\}\cup\{\top\}$. The goal is to compute a $k$-ary relation
   $\mathsf{Marked}\subseteq \mathsf{Type}$ that contains precisely
   those types marked by the $\mathbf{Mark}$ algorithm.  $P_M$
   contains rules that compute $\mathsf{Type}$, $\mathsf{Marked}$, and
   some auxiliary relations.

   \paragraph{\bf (IV) A linear order over types}\quad
   \newt{We want the relation $\mathsf{Type}$ to contain all tuples in
     $\{0,1\}^k$. Of course, we could hard-code the list of these
     tuples, but there are exponentially many, thus the resulting
     program would not be polynomial.  What we do instead is to encode
     a linear order over $\{0,1\}^k$, and starting from the first
     element, populate $\mathsf{Type}$ by adding successors.  } To
   this end, for every $1\leq i \leq k$, we inductively define $i$-ary
   relations $\mathsf{first}^i$ and $\mathsf{last}^i$, and a $2i$-ary
   relation $\mathsf{next}^i$, which will provide the first, the last
   and the successor elements from a linear order on $\{0,1\}^i$. In
   particular, given $\vec{u},\vec{v}\in \{0,1\}^{i}$, the fact
   $\mathsf{next}^i(\vec{u},\vec{v})$ will be true if $\vec{v}$
   follows $\vec{u}$ in the ordering of $\{0,1\}^i$. The rules to
   populate $\mathsf{next}^{i}$ are quite standard (see, e.g., Theorem
   4.5 in \cite{DBLP:journals/csur/DantsinEGV01}). For the case $i=1$,
   we simply add the following facts:
   \[\mathsf{first}^1(0)\gets \qquad\quad \mathsf{last}^1(1)\gets
     \quad\qquad \mathsf{next}^1(0,1)\gets \] Then, for all
   $1<i \leq k-1$ we add the following rules:
   \begin{align*}
     \mathsf{next}^{i+1}(0,\vec{x},0,\vec{y}) &~\gets~ \mathsf{next}^i(\vec{x},\vec{y})\\
     \mathsf{next}^{i+1}(1,\vec{x},1,\vec{y}) &~\gets~ \mathsf{next}^i(\vec{x},\vec{y})\\
     \mathsf{next}^{i+1}(0,\vec{x},1,\vec{y}) &~\gets~ \mathsf{last}^i(\vec{x}),\mathsf{first}^i(\vec{y})\\
     \mathsf{first}^{i+1}(0,\vec{x}) &~\gets~ \mathsf{first}^i(\vec{x})\\ 
     \mathsf{last}^{i+1}(1,\vec{x}) &~\gets~ \mathsf{last}^i(\vec{x})
   \end{align*}
   We can now collect in the $k$-ary relation $\mathsf{Type}$ all
   types over $\T$:
   \begin{align*}
     \mathsf{Type}(\vec{x}) &~\gets~ \mathsf{first}^k(\vec{x}) \\
     \mathsf{Type}(\vec{y}) &~\gets~ \mathsf{next}^k(\vec{x},\vec{y})
   \end{align*}

   \newcommand{\bvec}{\mathit{vec}} $\mathsf{Type}$ is the set $N$ of
   the $\mathbf{Mark}$ algorithm.  {In the rest of this section, we
     will write $\bvec(\tau)$ to denote the bit vector encoding the
     type $\tau$.  In a slight abuse of terminology, we may say that
     \emph{we mark $\tau$} to mean that we enforce
     $\mathsf{Marked}(\bvec(\tau))$ to hold, and if no confusion
     arises, we may interchange $\tau$ and $\bvec(\tau)$.}

   Below we describe rules that simulate the algorithm on this $N$.
   Recall that the algorithm runs on a given core $\I_c$.  \newt{To
     implement the marking algorithm, 
     we first mark all the types $\tau$ that do not satisfy the local
     consistency conditions $\LC(\T,\Sigma,\I_c)$.
     This is done with the rules described in items (V) to (VIII)
     below.}

\paragraph{\bf (V) Implementing Step (M$_{\mathrm{N}1}$)}\quad
We use two auxiliary additional unary predicates $\mathsf{T}$ and
$\mathsf{F}$ (intuitively, \emph{True} and \emph{False}), and add two
facts to $P_M$:
\[ \mathsf{F}(0)\gets~ \qquad \mathsf{T}(1)\gets~\] For a $k$-tuple of
variables $\vec{x}$, we let $B\in\vec{x}$ denote the atom
$\mathsf{T}(x_j)$, where $j$ is the index of $B$ in the enumeration of
$\conceptnames^{+}(\T)$. Similarly, we let $B\not\in\vec{x}$ denote
the atom $\mathsf{F}(x_j)$, where $j$ is the index of $B$ in the
enumeration.

The step (M$_{\mathrm{N}1}$) of the algorithm, which marks types
violating inclusions of type $\mathbf{(N1)}$, is implemented using the
following rule in $P_M$, for every inclusion
$B_1\AND \cdots \AND B_n \ISA B_{n+1}\OR\cdots \OR B_k \in \T$:
\[{\mathsf{Marked}(\vec{x})} \gets {
    \mathsf{Type}(\vec{x}),B_1\in\vec{x},\ldots,B_n \in \vec{x}, }{
    B_{n+1}\not\in\vec{x},\ldots,B_k \not\in \vec{x}}\]

\paragraph{\bf (VI) Collecting the realized types}\quad 
Our next goal is to compute a $k$-ary relation $\mathsf{RealizedType}$
such that, for each core
$\I_c$, 
$\mathsf{RealizedType}$ will be populated with precisely the
$\{0,1\}^k$-vectors that represent a type realized in $\I_c$ by some individual in $\indivnames(\K)$. This
relation will be useful in the rest of the marking algorithm.
 
To compute $\mathsf{RealizedType}$, we use auxiliary $(i+1)$-ary
relations $\mathsf{hasType}^{i}$ for all $0 \leq i \leq k$.  We first
put all the individuals into $\mathsf{hasType}^0$; intuitively, every
individual has an `empty' type up to the $0$-th position.  Then we
iteratively take an individual whose type has been stored up to the
$(i-1)$-th position, and expand it to the $i$-th, using $0$ or $1$
according to whether or not the type that it realizes in $\I_c$
contains
$B_i$. 
We first add to $P_M$ the rule:
\[\mathsf{hasType}^0(x) \gets \mathsf{ind}(x)\]

\noindent and then, the following rules for all
$1 \,{\leq}\, i \,{\leq}\, k$:
\begin{align*}
  \mathsf{hasType}^{i}(x,\vec{y},1) & ~\gets~ \mathsf{hasType}^{i-1}(x,\vec{y}),B_i(x)     & B_i\in \conceptnames \\
  \mathsf{hasType}^{i}(a,\vec{y},1) & ~\gets~ \mathsf{hasType}^{i-1}(a,\vec{y})      & B_i =\{a\} \\
  \mathsf{hasType}^{i}(x,\vec{y},0) & ~\gets~ \mathsf{hasType}^{i-1}(x,\vec{y}),x\neq a     & B_i =\{a\} \\
  \mathsf{hasType}^{i}(x,\vec{y},0) & ~\gets~ \mathsf{hasType}^{i-1}(x,\vec{y}),\overline{B}_i(x) &  B_i\in \conceptnames\setminus \Sigma \\
  \mathsf{hasType}^{i}(x,\vec{y},0) & ~\gets~ \mathsf{hasType}^{i-1}(x,\vec{y}),not~B_i(x) &  B_i\in \conceptnames\cap  \Sigma
\end{align*}
\newt{Intuitively, $\mathsf{hasType}^{k}(c,\bvec(\tau))$ says that the
  individual $c$ realizes the type
  $\tau$. 
} We can now project away the individuals and store in the relation
$\mathsf{RealizedType}$ the set of all types realized in $\I_c$:
\[\mathsf{RealizedType}(\vec{y}) \gets \mathsf{hasType}^k(x,\vec{y})\]

\paragraph{\bf (VII) Implementing Step (M$_{\Sigma}$)}\quad In this
step, we mark all the non-realized types that are c-types.   
The following rules are added to $P_M$. First, we use a fresh $k$-ary
  predicate $ \mathsf{ClosedType}$ to collect all the c-types. In
  particular, for each $B$ such that (i) $B$ is a nominal from $\T$, (ii)  $B\in  \conceptnames(\T) \cap \Sigma$, or (iii)
 $\T$ contains an inclusion $ B\ISA \Some{r}{A} $ for some $A$ and  \rin, we add: 
\[ \mathsf{ClosedType}(\vec{x})\gets
  \mathsf{Type}(\vec{x}),B\in\vec{x}\] \newt{We can now mark the
  c-types that are \emph{not realized}, using a simple rule:}
\[ \mathsf{Marked}(\vec{x})\gets \mathsf{ClosedType}(\vec{x}),
  not~\mathsf{RealizedType}(\vec{x})\]

The rules in (V) to (VII) mark, for a core $\I_c$, the types
  over $\T$ that do not satisfy the local consistency conditions
  $\LC(\T,\Sigma,\I_c)$. Now we move to the next step: marking the
  types for which there is no suitable successor type.

\paragraph{\bf(VIII) Implementing Step (M$_\exists$)}\quad 
\newt{Consider an inclusion $\alpha = A\ISA \Some{r}{A'} \in \T$ with
  \rnotin.}
Recall that we need to mark a type $\tau$ if $A\in \tau$, and
\emph{for each} type $\tau' \in N$ at least one of (C0),
(C1$^\prime$), or (C2$^\prime$) holds.  To this aim we use, for each
such $\alpha$, an auxiliary $2k$-ary relation
$\mathsf{MarkedOne}_\alpha$ and collect, \newt{for each type $\tau$,
  all types $\tau'$ that cannot be used to satisfy
  $\alpha$}. \newt{That is, a pair $(\bvec(\tau),\bvec(\tau'))$
  will be in $\mathsf{MarkedOne}_\alpha$ if $\tau'$ is marked for
  $\tau$ by conditions (C0), (C1$^\prime$), (C2$'$).  Then we will
  iterate over our ordered list of types and test whether all $\tau'$
  are marked for $\tau$; in that case, we will mark $\tau$ if
  $A \in
  \tau$.} 

The following rules are added to $P_M$ for each $\alpha$ as above.  We
start with the rules for $\mathsf{MarkedOne}_\alpha$.


\begin{enumerate}[-]
\item 
  \newt{For (C0), we collect all pairs $(\bvec(\tau),\bvec(\tau'))$
    where $\tau'$ is marked:}
  \[\mathsf{MarkedOne}_\alpha(\vec{x},\vec{y})\gets
    \mathsf{Type}(\vec{x}), \mathsf{Marked}(\vec{y})\]

\item For (C1$'$), we collect pairs $(\bvec(\tau),\bvec(\tau'))$ with
  $A'\notin \tau'$:
  \[\mathsf{MarkedOne}_\alpha(\vec{x},\vec{y})\gets
    \mathsf{Type}(\vec{x}),\mathsf{Type}(\vec{y}),A' \not\in \vec{y}\]

\item For (C2$^\prime$), we proceed as follows.
  \begin{itemize}[-]
  \item For all $A_1\ISA~\All{s}{A_2}\in \T$ with $r\suprr s$, we
    collect all $(\bvec(\tau),\bvec(\tau'))$ such that $A_1 \in \tau$
    and $A_2 \notin \tau'$:
  \end{itemize}
  \[\mathsf{MarkedOne}_\alpha(\vec{x},\vec{y}) \gets
    \mathsf{Type}(\vec{x}),\mathsf{Type}(\vec{y}),A_1 \,{\in}\,
    \vec{x}, A_2 \,{\not \in}\, \vec{y}\]
  \begin{itemize}[-]
  \item For all $A_1\ISA~\All{s}{A_2}\in \T$ with $r^-\suprr s$, we
    collect all $(\bvec(\tau),\bvec(\tau'))$ such that $A_1 \in \tau'$
    and $A_2 \notin \tau$: 
  \end{itemize}
  \smallskip
  \[\mathsf{MarkedOne}_\alpha(\vec{x},\vec{y}) \gets
    \mathsf{Type}(\vec{x}),\mathsf{Type}(\vec{y}),A_1 \,{\in}\,
    \vec{y}, A_2 \,{\not\in}\, \vec{x}\]

\end{enumerate}

\noindent \newt{Now we want to infer $\mathsf{Marked}(\bvec(\tau))$ if
  $A$ is set to true in a type $\tau$, and
  $\mathsf{MarkedOne}_\alpha(\bvec(\tau),\bvec(\tau'))$ is true for
  all types $\tau'$.}  To achieve this, we rely on another auxiliary
$2k$-ary relation $\mathsf{MarkedUntil}_\alpha$ for each inclusion
$\alpha$:
\begin{align*}
  \mathsf{MarkedUntil}_\alpha(\vec{x},\vec{z})\gets & 
                                                      \mathsf{MarkedOne}_\alpha(\vec{x},\vec{z}),
                                                      \mathsf{first}^k(\vec{z}) \\
  \mathsf{MarkedUntil}_\alpha(\vec{x},\vec{u}) \gets &
                                                       \mathsf{MarkedUntil}_\alpha(\vec{x},\vec{z}),
                                                       \mathsf{next}^k(\vec{z},\vec{u}),\\
                                                    & \mathsf{MarkedOne}_\alpha(\vec{x},\vec{u})
\end{align*}

\noindent Intuitively, with the above rules we traverse all types
$\tau'$ checking $\mathsf{MarkedOne}_\alpha$ for the pair
$(\bvec(\tau),\bvec(\tau'))$.
We mark $\tau$ if we 
reach the last $\tau'$, and $A \in\ \tau$: 
\[\mathsf{Marked}(\vec{x})\gets
  \mathsf{MarkedUntil}_\alpha(\vec{x},\vec{z}), A \in \vec{x},
  \mathsf{last}^k(\vec{z})\]

\newt{This completes the rules of $P_M$. Now we move to the
  construction of $P_T$ which essentially uses the marked type from
  $P_M$ to forbid the cores, generated by $P_c$, that cannot be
  extended into a model. }

\subsubsection{Filtering out cores that cannot be extended}

Finally, the program $P_T$ filters out from the cores generated by
$P_c$ those that cannot be extended into a model of
$(\T,\Sigma,\A)$. More precisely, $P_T$ forbids a core $\I_c$ when a
type realized in $\I_c$ is marked by the rules in $P_M$.

\paragraph{\bf (IX) Forbidding marked types in the core}\quad
For Theorem \ref{th:mark}, we need to ensure that each type that is
realized in $\I_c$ by a fringe individual is \emph{not} marked by the
algorithm $\mathsf{Marked}$. For this we first compute a $k$-ary
relation $\mathsf{FringeType}$ such that, for each core $\I_c$,
$\mathsf{FringeType}$ will be populated with precisely the
$\{0,1\}^k$-vectors that represent a type realized by a fringe
individual in $\I_c$.

To compute $\mathsf{FringeType}$, we use auxiliary $(i+1)$-ary
relations $\mathsf{hasType}^{i}_{\alpha}$ for all $0 \leq i \leq k$
and all existential inclusions $\alpha$ in $\T$.  For each existential
inclusion $\alpha$ in $\T$, and for all $1 \,{\leq}\, i \,{\leq}\, k$,
we add to $P_M$ the following rules:
\begin{align*}
  \mathsf{hasType}_{\alpha}^0(x) & ~\gets~ \mathsf{ind}(x) \\
  \mathsf{hasType}_{\alpha}^{i}(x,\vec{y},1) & ~\gets~ \mathsf{hasType}_{\alpha}^{i-1}(x,\vec{y}),B_i^{\alpha}(x)     & B_i\in \conceptnames \\
  \mathsf{hasType}_{\alpha}^{i}(x,\vec{y},0) & ~\gets~ \mathsf{hasType}_{\alpha}^{i-1}(x,\vec{y})     & B_i =\{a\} \\
  \mathsf{hasType}_{\alpha}^{i}(x,\vec{y},0) & ~\gets~ \mathsf{hasType}_{\alpha}^{i-1}(x,\vec{y}),\overline{B}^{\alpha}_i(x) &  B_i\in \conceptnames\setminus \Sigma 
\end{align*}
Intuitively, $\mathsf{hasType}_{\alpha}^{k}(c,\bvec(\tau))$ says that
the individual $c^{\alpha}$ realizes the type $\tau$. We can now
project away the individuals and store in the relation
$\mathsf{FringeType}$ the set of all types realized by fringe elements
in $\I_c$. For all existential inclusions $\alpha$ in $\T$ we add:
\[\mathsf{FringeType}(\vec{y}) \gets
  \mathsf{hasType}_{\alpha}^k(x,\vec{y})\]

Finally, prohibiting marked types at fringe individuals is done by
adding to $P_T$ the rule:
\[\gets \mathsf{Marked}(\vec{x}), \mathsf{FringeType}(\vec{x})\]

This concludes the description of $P_T$, and hence of the program
$P^*_Q= P_c \cup P_M \cup P_T $ described in
Proposition~\ref{prop:building-program}.

%

It remains to argue that $P_Q$ is of size polynomial in the
  size of $\T$. Indeed, $P_c$ is linearly bounded by the size of
  $\conceptnames^+(\T)$, the number of role names in $\rolenames(\T)$
  and the number of inclusions that appear in $\T$; $P_M$ is bounded
  polynomially on the size of $\conceptnames^+(\T)$ (in particular the
  rules in (VII)) and in the number of inclusions in $\T$ (the rules
  in (VIII)); similarly as for $P_M$, $P_T$ is also bounded polynomially
  in the number of inclusions in $\T$.  


 \subsection{Complexity of evaluating the program}\label{sec:comp}

 In this section, for a given c-safe OMQ $Q=(\T,\Sigma,\q)$, we
 analyze the data and combined complexity of evaluating the program
 $P_Q$ for a given ABox $\A$. The decision problem associated to
 answering a query $(P_Q,q)$ is analogous to that of OMQs defined in
 Section \ref{sec:omq}, that is given a query $(P,q)$, where $P$ is a
 \datalog program with negation under the stable model semantics, a
 (possibly empty) tuple of individuals $\vec{a}$, and an ABox $\A$,
 decide whether $\vec{a} \in \cert((P,q), \A)$.

 \smallskip 
With our translation, we can obtain the  following upper bounds. The bounds themselves (which are tight) are minor variations of results in the literature
(e.g., \cite{DBLP:conf/ijcai/LutzSW13}), 
and the aim of the proposition is to show the adequacy of our technique.    

 \begin{proposition} Let $Q=(\T, \Sigma, \q)$ be a c-safe OMQ, where
   $\T$ is an $ \ALCHOI$ TBox.  The problem to decide if $\vec{a}$ is
   a certain answer of $(P_Q,q)$ for a given ABox $\A$ over the
   concept and role names that occur in $\T$ is in \exptime
   w.r.t.\,combined complexity, and in co\np w.r.t.\,data complexity.
 \end{proposition}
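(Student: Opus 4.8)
The plan is to read both bounds off the shape of $P_Q$: the data bound is essentially the known data complexity of \datalog with stable negation, while the combined bound requires pinpointing where the exponential blow-up of $P_Q$ sits and where its genuine nondeterminism sits.

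\textbf{Data complexity.} For a fixed OMQ $Q$ the program $P_Q$ is fixed, so the number of its rules and the arities of all its predicates are constants that depend only on $\T$. Hence over an input ABox $\A$ there are only polynomially many ground atoms, and every stable model of $P_Q\cup\A$ has size polynomial in $|\A|$. I would therefore place the complement $\vec a\notin\cert((P_Q,q),\A)$ in \np: guess a polynomially large set $I$ of ground atoms, compute in polynomial time the (unique) minimal model of the positive ground program $(P_Q\cup\A)^I$, and check that it equals $I$ and that $q(\vec a)\notin I$. This is exactly the standard co\np data complexity of cautious reasoning for non-disjunctive programs under the stable model semantics \cite{DBLP:journals/csur/DantsinEGV01}, and membership in co\np follows.

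\textbf{Combined complexity.} The obstacle here is that $P_Q$ contains rules with $\Theta(|\T|)$ variables over predicates of non-constant arity (for instance those in items (IV) and (VIII) of the construction of $P_M$), so $\ground(P_Q)$ over $\A$ already has size $2^{O(|\T|\log(|\T|+|\A|))}$, and the generic bound for \datalog with negation would only yield co\nexptime. To get \exptime I would show that all of $P_Q$'s nondeterminism is confined to a polynomially small sub-signature. Let $U$ be the set of ground atoms over the ``core predicates'', i.e.\ over $\mathsf{ind}$ and the fresh unary/binary predicates introduced in items (I)--(II) when building $P_c$. One checks that $U$ is a splitting set for $P_Q\cup\A$: the only rules with a head in $U$ are the fact-collecting and guessing rules of $P_c$, all of whose literals lie in $U$, and their only use of negation is the family of independent negative two-cycles that guess, for every element of $\mathit{cdom}(\K)$, the truth of each core predicate; consequently the stable models of this bottom program are exactly the cores for $(\T,\Sigma,\A)$ --- polynomially large objects, of which there are at most $2^{\mathrm{poly}(|\T|+|\A|)}$. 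The remaining part of $P_Q$ --- all of $P_M$, all of $P_T$, the constraints of $P_c$, and the rule $q(\vec x)\gets\q$ --- defines only predicates outside $U$, and once the core predicates are fixed to one of these cores it is \emph{stratified}: the recursion $\mathsf{Marked}\to\mathsf{MarkedOne}_\alpha\to\mathsf{MarkedUntil}_\alpha\to\mathsf{Marked}$ uses no negation, the only other negated atom of $P_M$ (the negation of $\mathsf{RealizedType}$ used in item (VII)) is over a relation computed in a strictly lower stratum, and every other negated atom is over a core predicate and hence ground after the split. By the standard modularity (splitting) results for the stable model semantics \cite{DBLP:journals/csur/DantsinEGV01}, the stable models of $P_Q\cup\A$ are then precisely the sets $X\cup Y$ where $X$ is a core satisfying all constraints of $P_Q$ and $Y$ is the unique perfect model of the stratified program obtained from the remaining rules by plugging in $X$.

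\textbf{The resulting algorithm, and the main obstacle.} This yields an \exptime procedure for the complement: enumerate the at most $2^{\mathrm{poly}(|\T|+|\A|)}$ candidate cores $X$ (each describable in polynomial space); for each, compute its unique stratified completion $Y$, whose ground program is only single-exponential and whose perfect model can be computed in time polynomial in that size; then check, within the same time bound, that all constraints hold (so $X\cup Y$ is indeed a stable model) and whether $q(\vec a)\in X\cup Y$. We have $\vec a\notin\cert((P_Q,q),\A)$ iff some $X$ yields a stable model without $q(\vec a)$. The whole computation runs in time $2^{\mathrm{poly}(|\T|+|\A|)}$, and since \exptime is closed under complementation the problem is in \exptime. (Alternatively one can bypass $P_Q$ and argue on the semantic side, via Proposition~\ref{prop:building-program}, Theorem~\ref{th:non-losing} and Theorem~\ref{th:mark}: $\vec a$ fails to be a certain answer iff there is a core $\I_c$ with $\I_c\not\models\q(\vec a)$ on which $\mathbf{Mark}(\T,\Sigma,\I_c)$ marks no fringe type, and enumerating the exponentially many cores and running $\mathbf{Mark}$ --- whose cost is dominated by materialising $\types$, of size $2^{|\conceptnames^{+}(\T)|}$ --- on each is again single-exponential.) I expect the combined bound to be the delicate step: the data bound is routine, and the difficulty is certifying that the exponential part of $P_Q$ is deterministic (stratified), so that only the polynomially small core part is genuinely guessed and the overall cost stays in deterministic --- rather than non-deterministic --- exponential time.
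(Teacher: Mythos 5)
Your proof is correct and follows essentially the same route as the paper's: both confine the nondeterminism of $P_Q$ to the bounded-variable core-guessing part (whose candidate models are polynomially small and can be traversed in exponential time, in fact polynomial space) and observe that, once this guess is fixed, the remaining high-arity rules are negation-stratified and evaluate deterministically as least models of exponentially large ground positive programs, yielding the \exptime bound, while the co\np data bound is the standard one for non-disjunctive programs under stable negation. The only cosmetic slips are that your splitting set $U$ must also include the (non-closed) concept and role names guessed in item (II), since they occur in heads of $P_c$'s rules, and that the stable models of the bottom part are candidate cores, with the validity constraints of item (III) enforced in the top stratum; neither affects the argument.
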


 \begin{proof}
   The co\np bound for data complexity follows easily
   since it is known that query answering for \datalog programs with
   negation under the stable model semantics is co\np-complete in data
   complexity (see
   e.g.,\,\cite{DBLP:journals/csur/DantsinEGV01}).  The result on the combined complexity does not
   follow directly from the complexity of query answering in \datalog
   with negation, which is
   co\nexptime-complete~\cite{DBLP:journals/tods/EiterGM97}. Thus we
   need to argue more carefully about the shape of the program that
   results from our rewriting. In the program $P_Q$ some of
   the predicates have small arities, and the negation is used in a
   restricted way. As we shall see next, for these reasons our
   programs fall into a class of programs that can be evaluated in
   (deterministic) exponential time.


   In the following, we say that a program $P$ defines a relation $R$,
   if $R$ appears in the head of a rule in $P$.  The program $P_Q$ can
   be partitioned into programs $P_1,P_2,P_3$ as follows:
   \begin{enumerate}[-]
   \item $P_1$ consists of all rules in (I), (II), and (III). $P_1$ is a program with at most two variables in
     each rule.
   \item $P_2$ consists of  the
     rules in (VI), which define the relations $\mathsf{hasType}^{i}$
     and the relation $\mathsf{RealizedType}$.
   \item $P_3$ consists of the remaining rules.
   \end{enumerate}
   Note that $P_2$ and $P_3$ do not define any relations used in
   $P_1$.
   The program $P_2$ only depends on $P_1$, that is, none of the
   relation symbols in $P_2$ 
   is defined in $P_3$. The negative atoms of $P_2$ only involve
   relations that are
   only defined in
   $P_1$. 
   Similarly, the negative atoms of $P_3$ only involve relations that
   are only defined by $P_1\cup P_2$. Assume a set $F$ of facts over
   the signature of $P_1$. Due to the above properties, the successful
   runs of the following non-deterministic procedure generate the set
   of all stable models of $P\cup F$:
   \begin{enumerate}[(S1)]
   \item Compute a stable model $I_1$ of $P_1\cup F$.
   \item Compute the least model $I_2$ of $I_1\cup P_2^{I_1}$. If
     $I_2$ does not exist due to a constraint violation, then return
     \emph{failure}.
   \item Compute the least model $I_3$ of $I_2\cup P_2^{I_2}$. Again,
     if $I_3$ does not exist, then return \emph{failure}. Otherwise,
     output $I_3$.
   \end{enumerate}
   Since $P_1$ has at most two variables in every rule, each stable
   model $I_1$ of $P_1\cup F$ is of polynomial size in the size of
   $P_1\cup F$, and the set of all such models can be traversed in
   polynomial space. For a given $I_1$, performing steps (S2) and (S3)
   is feasible in (deterministic) exponential time, because
   $P_2^{I_1}$ and the subsequent $P_2^{I_2}$ are ground
   disjunction-free positive programs of exponential size. It follows
   that computing the certain answers to $(P_Q,q)$ for any given ABox
   $\A$ over the concept and role names of $\T$ requires only
   deterministic exponential time.
 \end{proof}

\subsection{Obtaining Positive Programs}

We now discuss the case of OMQs $Q=(\T, \Sigma, \q)$ without closed
predicates, i.e.\,when $\Sigma=\emptyset$. We argue that in this
restricted case, we can  obtain a rewriting into a positive disjunctive
program. 
In addition, if nominals are not present in $\T$, we do not
even need the $\neq$ predicate. Towards this result, we observe that in case closed predicates are absent, we can  simplify the notion of cores, and thus also the rules required for  generating cores (in particular, the rules in (II) and (III)). We now present a simplified  definition of cores.

\begin{definition}
A \emph{(simple) core} for a KB  $\K=(\T,\A)$ is an interpretation $\I_c =
(\dom{\I_c},\cdot^{{\I_c}})$, where 
  \begin{enumerate}[(c1)]
  \item $\dom{\I_c} = \indivnames(\K)$,  $a^{\I_c} = a$ for all $a\in \dom{\I_c}$, and $\cdot^{{\I_c}}$ is an interpretation function with
  \item $\I_c \models \A$, and
  \item $\I_c\models \alpha$ for each $\alpha \in \T$ of type (N1), (N3) and (N4), i.e.\,for all but existential inclusions in $\T$.
  \end{enumerate}
  The constants from $\indivnames(\K)$ are the \emph{fringe individuals} of $\I_c$.
 \end{definition}
 Using the above definition instead of Definition~\ref{def:coreint},
 and assuming the absence of closed predicates, all proofs of Section
 5 hold as is. It thus remains to appropriately modify the program
 constructed in Section 6.2 to accommodate the simplified definition of
 cores.  As before, given a TBox $\T$, for each
 $A \in \conceptnames(\T)$, we use a fresh unary predicate
 $\overline{A}$, and for each $p\in\rolenames(\T)$, a binary
 $\overline{p}$. We use $\invat{r}(x,y)$ to denote $r(x,y)$, if
 $r \in \rolenames$, and $r(y,x)$ if $r^- \in \rolenames$. Then the
 rules in (II) and (III) are replaced by the following disjunctive
 rules:
     \begin{align*}
   A(x)\lor \overline{A}(x) ~\gets &  ~  \mathsf{ind}(x) & & \mbox{for all~}A \in \conceptnames(\T), \\[1ex]
   p(x,y)\lor \overline{p}(x,y) ~\gets& ~ \mathsf{ind}(x),\mathsf{ind}(y)  & &
                                                                              \mbox{for all~}  p\in\rolenames(\T)\\[1ex]
    ~\gets & ~\mathsf{ind}(x),B_1(x),\ldots, B_n(x)  & &   \mbox{for all~}    B_1\AND \cdots \AND B_n \ISA   \\
     & ~\overline{B}_{n+1}(x),\ldots, \overline{B}_{k}(x)  & &
                                                                               \ISA B_{n+1}\OR\cdots \OR B_{k} \in\T  \\[1ex] 
   A'(y) ~\gets & ~ A(x),\invat{r}(x,y) & &\mbox{for all } A~\ISA~\All{r}{A'}\in\T \\[1ex]
   \invat{s}(x,y)  ~\gets &~ \invat{r}(x,y) & & \mbox{for all } r \ISA s \in \T
 \end{align*}

 The rules in (VII) are replaced using the following rules.  They mark every type that
contains a nominal, but the type is not realized in the current core.
Note that, for each core $\I_c$ and each nominal $\{a\}$, the
  only type containing $\{a\}$ that is realized in $\I_c$ is the
  actual type $\tau=\type(a,\I_c)$ of $a$, which is in fact stored in
  the atom $\mathsf{hasType}^k(a,\bvec(\tau))$.  For this reason, to
  mark the nominal types that are not realized, it suffices to simply
  mark every type $\tau$ with $\{a\} \in \tau$ and
  $\tau \neq \type(a,\I_c)$. We can achieve this by adding to $P_M$
the following rules for all $B \in \conceptnames^{+}(\T)$ and all
nominals $\{a\} \in \conceptnames^{+}(\T)$:
\begin{align*}
  \mathsf{Marked}(\vec{x}) \gets & 
                                   \mathsf{Type}(\vec{x}),\{a\}\in\vec{x},
                                   \mathsf{hasType}^k(a,\vec{y}), B \in \vec{x}, B \notin \vec{y} \\
  \mathsf{Marked}(\vec{x}) \gets &
                                   \mathsf{Type}(\vec{x}),\{a\}\in\vec{x},
                                   \mathsf{hasType}^k(a,\vec{y}),B \notin \vec{x}, B \in \vec{y}
\end{align*}

 Finally, since now fringe elements are exactly the constants of the
 input ABox, the rules in (IX) are replaced by a single constraint as
 follows:
 \[\gets \mathsf{Marked}(\vec{x}), \mathsf{RealizedType}(\vec{x})\]

 \smallskip
 
 Note that in the absence of closed predicates, with the above rules
 replacing (II), (III), (VII) and (IX), our rewriting does not use
 $\mathit{not}$ in rule bodies.  Based on these observations, we
 obtain the following:

 \begin{theorem} For a c-acyclic OMQ $Q=(\T,\q)$, where $\T$ is an
   $\ALCHOI$ TBox, we can build in polynomial time a query $(P_Q,q)$,
   where $P_Q$ is a positive disjunctive \datalog program such that
   $\cert(Q,\A) = \cert((P_Q,q), \A)$ for any given ABox $\A$ over the
   concept and role names occurring in $\T$. In addition, if $\T$ is
   an $\mathcal{ALCHI}$ TBox, then $P_Q$ has no occurrences of the
   $\neq$ predicate.
 \end{theorem}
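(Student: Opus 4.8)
The plan is to follow the proof of the preceding theorem, but working with the simplified notion of a \emph{(simple) core} and the modified program obtained by the rule replacements described above. First I would apply Lemma~\ref{l:acytogr} to reduce, in polynomial time, the given c-acyclic OMQ $Q=(\T,\q)$ to a c-safe OMQ $Q'=(\T',\q'(\vec x))$ with empty set of closed predicates and $\cert(Q,\A)=\cert(Q',\A)$ for every ABox $\A$ over the concept and role names of $Q$; note that $\T'$ is still an $\ALCHOI$ TBox, and is an $\mathcal{ALCHI}$ TBox whenever $\T$ is, since the rolling-up axioms introduce only $\mathcal{ELI}$ concepts and normalization only adds fresh names and normal-form inclusions. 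So it suffices to treat c-safe OMQs with $\Sigma=\emptyset$. For these, the simplified core definition replaces Definition~\ref{def:coreint}, and, as observed in the text, the game characterization (Theorems~\ref{th:non-losing} and~\ref{th:mark}) carries over with simple cores: $(\T',\emptyset,\A)\not\models\q'(\vec a)$ iff there is a simple core $\I_c$ for $(\T',\A)$ with $\I_c\not\models\q'(\vec a)$ that is extendable to a model of $(\T',\emptyset,\A)$ (equivalently, no fringe individual of $\I_c$ realizes a marked type).

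For such a $Q'$, I would set $P_Q=P^*_{Q'}\cup\{q(\vec x)\gets\q'\}$ for a fresh predicate $q$ of the arity of $\q'$, where $P^*_{Q'}=P_c\cup P_M\cup P_T$ is built as in the construction of $P^*_Q$ but with the stated replacements: in $P_c$ the rules of (II) and (III) become the disjunctive guessing rules and constraints, and the $\mathsf{in}^{\alpha}/\mathsf{out}^{\alpha}$ rules are dropped, since a simple core has domain exactly $\indivnames(\K)$; in $P_M$ the clause of (VI) containing $\mathit{not}~B_i(x)$ disappears (as $\Sigma=\emptyset$) and the remaining rules refer to the disjunctively defined $\overline{A},\overline{p}$, while step (VII) is replaced by the two nominal-marking rules; and in $P_T$ the rules of (IX) become the single constraint $\gets\mathsf{Marked}(\vec x),\mathsf{RealizedType}(\vec x)$, so the $\mathsf{FringeType}$ and $\mathsf{hasType}_\alpha$ machinery is no longer needed because all fringe individuals are ordinary ABox individuals and their types already populate $\mathsf{RealizedType}$. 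Polynomiality follows exactly as for $P^*_Q$, and an inspection of the replaced rules shows that $\mathit{not}$ no longer occurs anywhere in $P_Q$, so $P_Q$ is a positive disjunctive program; hence its stable models coincide with its minimal models.

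The heart of the argument is to re-establish the analog of Proposition~\ref{prop:building-program}: the minimal models of $\A\cup P^*_{Q'}$ are in bijection, on the atoms over $\conceptnames(\T')\cup\rolenames(\T')$ and the individuals of $\A$, with the simple cores for $(\T',\A)$ that extend to a model of $(\T',\emptyset,\A)$. The only genuinely new point is the behaviour of the disjunctive guessing rules: in every minimal model of the whole program exactly one of $A(c),\overline{A}(c)$ (resp.\ $p(c,d),\overline{p}(c,d)$) holds for each tuple of individuals, since ``neither'' is ruled out by the guessing rule, and ``both'' is ruled out by minimality --- no rule besides the guessing rule has $A(c)$ (resp.\ $p(c,d)$) in its head, so deleting that atom from a model that also contains the complement atom still yields a model. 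Thus minimal models enumerate candidate interpretations over $\indivnames(\K)$; the replaced (III)-constraints cut these down to exactly the simple cores; $P_M$ computes $\mathsf{Marked}$ just as in the proof of Proposition~\ref{prop:building-program}, the one delicate step being the replaced (VII), where one uses that for a fixed core the only realized type containing a nominal $\{a\}$ is $\type(a,\I_c)$, recorded in $\mathsf{hasType}^k(a,\cdot)$, so the two new rules mark precisely the remaining nominal types; and the (IX)-constraint then forbids exactly the cores realizing a marked type, which by the simple-core version of Theorem~\ref{th:mark} are exactly the non-extendable ones. Combining this correspondence with Lemma~\ref{l:acytogr}, Theorem~\ref{th:non-losing} and Theorem~\ref{th:mark} gives $\cert(Q,\A)=\cert((P_Q,q),\A)$ by the same chain of equivalences as in the proof of the previous theorem: $\vec a\notin\cert((P_Q,q),\A)$ iff some stable model $I$ of $\A\cup P^*_{Q'}$ has $I\not\models\q'(\vec a)$ iff some extendable simple core $\I_c$ has $\I_c\not\models\q'(\vec a)$ iff $(\T',\emptyset,\A)\not\models\q'(\vec a)$ iff $\vec a\notin\cert(Q,\A)$.

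Finally, for the $\mathcal{ALCHI}$ refinement I would track the occurrences of $\neq$: in $P_c$ the replaced rules use no $\neq$; in $P_M$ the predicate $\neq$ occurs only in the clauses of (VI) handling a nominal $B_i=\{a\}$, while the replaced (VII) rules are instantiated only per nominal of $\T$; and the replaced (IX)-constraint in $P_T$ uses no $\neq$. So when $\T$ (hence $\T'$) contains no nominals, $\conceptnames^+(\T')$ contains no nominals, none of those rules is present, and $P_Q$ is $\neq$-free. I expect the main obstacle to be precisely this careful re-verification that every step of the original proof of Proposition~\ref{prop:building-program} survives the simplifications --- in particular that replacing the negation-as-failure guessing by disjunctive rules introduces no spurious minimal models, and that treating all ABox individuals as fringe individuals is sound, i.e.\ that Theorems~\ref{th:non-losing} and~\ref{th:mark}, whose statements are phrased in terms of the original cores, really do go through verbatim for simple cores when $\Sigma=\emptyset$.
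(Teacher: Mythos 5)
Your proposal follows the paper's own route: the paper proves this theorem exactly by the observations you reconstruct (reduce to the c-safe case, switch to simple cores and note the game/marking results carry over when $\Sigma=\emptyset$, replace the rules in (II), (III), (VII), (IX), observe the result is positive, polynomial, and $\neq$-free without nominals), so your argument is essentially a more detailed write-up of the intended proof, including the minimal-model analysis the paper leaves implicit.

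One micro-step is misstated, though the conclusion survives: it is \emph{not} true that no rule besides the guessing rule has $A(c)$ or $p(c,d)$ in its head --- the replaced (N3)-rules $A'(y)\gets A(x),\invat{r}(x,y)$ and (N4)-rules $\invat{s}(x,y)\gets\invat{r}(x,y)$ put concept- and role-name atoms in heads, so deleting the positive atom from a model containing both it and its complement need not yield a model. The repair is to delete the \emph{barred} atom instead: $\overline{A}$ and $\overline{p}$ occur in the head of the guessing rule only, and that rule instance stays satisfied via the positive atom, so a model containing both cannot be minimal. With that correction your chain of equivalences, and the $\neq$-tracking for the $\mathcal{ALCHI}$ case, match the paper's intended argument.
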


 We note that $\neq$-free positive disjunctive programs are not
 expressive enough to capture instance queries $Q=(\T,\q)$ when $\T$ has
 nominals. This follows from the following observation. For any
 positive $\neq$-free program $P$ and a set of facts $F$, if $P\cup F$
 has a model, then also $P\cup F'$ has a model, were $F'$ is obtained
 from $F$ by renaming its constants with fresh ones that do not occur
 in $P\cup F$. However, this property cannot be recast to
 $\ALCHOI$. Take the TBox $\T=\{A\ISA \{a\}\}$ and observe that $\T$
 is consistent with respect to the ABox $\A_1=\{A(a)\}$, but is
 inconsistent with respect to the ABox $\A_2=\{A(b)\}$.

 \subsection{The Need for Two Constants}\label{sec:dis} We note that
 the rewriting presented in Section 6.2 uses two distinct constants
 (namely $0$ and $1$), which are ``introduced'' by means of two facts
 that are always present in the constructed program. We observe that
 if no constants are allowed in rules, a polynomial time rewriting
 into a \datalog program with negation or a positive disjunctive
 \datalog program does not exist even for $\mathcal{ALC}$ TBoxes in
 the absence of closed predicates, under  common assumptions in
 complexity theory. 
 This can be argued using the well-known fact that deciding
 $(\{A(c)\},\T) \models B(c)$, where $A,B$ are concept names and $\T$
 is an \ALC TBox, is an \exptime-hard problem. Assume a triple
 $A(c),B(c),\T$ as above, and suppose that from $\T$ we can compute in
 polynomial time a desired (polynomially sized) program $P_{\T}$ that
 does not use constants. Since $P_{\T}$ is a proper rewriting, it is
 the case that $(\{A(c)\},\T) \models B(c)$ iff $B(c)\in J$ for every
 stable model $J$ of $P_{\T} \cup \{A(c)\}$. Since $A(c)$ is the
 only input fact, the grounding of $P_{\T} \cup \{A(c)\}$ is of
 polynomial size. From the complexity of \datalog programs consisting
 of disjunctive positive rules, or consisting of non-disjunctive rules
 with negation under the stable model semantics, we obtain a co\np
 upper bound for testing $(\{A(c)\},\T) \models B(c)$. This
 contradicts the belief that $\exptime \not\subseteq $ co$\np$.


\section{Conclusions and Future Work}\label{sec:conc} 

In this paper, we have proposed a novel technique for rewriting
c-acyclic OMQs of the form $Q=(\T, \Sigma, \q)$, where $\T$ is an
$\ALCHOI$ TBox, $\Sigma$ is a (possibly empty) set of closed
predicates, and $\q$ is a CQ, into a polynomially-sized \datalog
program $P_Q$ with negation under the stable model semantics.  We have
also shown that if $Q$ has no closed predicates (i.e.,
$\Sigma = \emptyset$) we can obtain a positive disjunctive \datalog
program with the built-in inequality predicate. If nominals are not
present in the input TBox, the inequality predicate is unnecessary. To
our best knowledge, these are the first such rewritings that take
\emph{polynomial time} to be computed.

{Our rewriting establishes an interesting connection between two very
  prominent reasoning formalisms.  On the one hand, the OMQs we
  consider allow for very rich ontological reasoning, and cover many
  of the most popular DL constructs (indeed, $\ALCHOI$ has most of the
  constructs present in $\mathcal{SHOIN}$, the basis of the OWL DL
  standard \cite{DBLP:reference/sp/HorrocksP11}).  On the other hand,
  disjunctive \datalog with negation as failure is a very prominent
  and versatile language for common-sense reasoning and problem
  solving.  Our results show that the former can be effectively
  translated into the latter in polynomial time. To prove the
    correctness of this translation, we have used as an intermediate
    step a game-like characterization of the mentioned OMQs, which we
    believe is interesting in its own right.
The fact that 
previous translations for similar languages 
required exponential time, 
points to the fact that, although related, these two formalisms can express knowledge
in rather different ways.  
Given the differences in computational complexity of these
formalisms, it is natural that 
the translation results in a program that is inherently non-ground, 
and uses predicates whose arity is not bounded. 
The presence of closed predicates makes our OMQs particularly well suited for
 settings in which complete and incomplete data coexist, allowing for
 non-monotonic inferences that exploit the knowledge about partial
 completeness. 
The price to pay for this is that 
we  must target for the translation a variant of disjunctive \datalog with
 negation. However, the use of negation as failure in the program resulting
from our translation is rather limited.}

\paragraph{Extensions  and Future Work}
 We have presented our results for $\ALCHOI$, but they also apply to
$\mathcal{SHIO}$, using standard techniques to eliminate transitivity axioms (see, e.g., \cite{DBLP:journals/jar/HustadtMS07}). 
Moreover, the results can be easily generalized to \emph{DL-safe} rules of
\cite{DBLP:journals/ws/MotikSS05}. 
These queries are syntactically restricted to
ensure that the relevant variable assignments only map into individuals of the
input ABox. 
 \newt{We remark that our results also apply to other OMQs that can be reduced
   in polynomial time to the OMQs considered in this paper. 
For instance, some restricted forms of \emph{navigational queries} (like, for example,
nested regular path
  queries with at least one existentially quantified variable) can be
reduced to our OMQs by adding a polynomial number of inclusions to the TBox,
see for example \cite{DBLP:conf/kr/BienvenuCOS14}.}

Under common assumptions in complexity theory, our
translation cannot be generalized to 
CQs, while remaining polynomial. This is because query answering for
 a disjunctive program with negation  is in co\nexptimenp, but
OMQ answering 
 is \twoexptime-hard
already for the DLs $\mathcal{ALCI}$\cite{DBLP:conf/cade/Lutz08} or $\mathcal{ALCO}$ \cite{DBLP:conf/kr/NgoOS16}. 
Adapting the ideas in this work to CQs may be possible at the cost of an
exponential blow-up, but we believe it would be technically quite involved. 
An interesting task for future direction is to obtain a polynomial translation for
 $\mathcal{ALCHOIQ}$, which adds \emph{number restrictions} to
 $\ALCHOI$. 
Considering other query languages, such as variations of {regular path
  queries} \cite{DBLP:journals/iandc/CalvaneseEO14}, is also a compelling direction for future research.

 \subsection*{Acknowledgments}
 
This work was supported by the Austrian Science Fund (FWF) projects P30360, P30873, and W1255-N23.

\bibliographystyle{plainnat}
\bibliography{closed-to-datalog}

\clearpage
\section{Appendix} \label{sec:appendix}

\paragraph{\bf Proof of Proposition \ref{p:non-losing}}

We provide the missing, more technical, part of 
 the proof of Proposition \ref{p:non-losing}. 
We have already shown  that $\J$ is an extension of $\I_c$ and that $\J \models_{\Sigma} \A$.  To complete the proof of  soundness, it is left to show that $\J \models \T$, more precisely that $\J$ satisfies all inclusions of the forms $\mathbf{(N1)}$, $\mathbf{(N2)}$, $\mathbf{(N3)}$, and $\mathbf{(N4)}$ in $\T$. We show the claim for each such inclusion next.


\begin{enumerate}

\item[$\mathbf{(N1)}$] For inclusions of the form $B_1\AND \cdots \AND B_n \ISA
  B_{n+1}\OR\cdots \OR B_{k}$, by construction of $\J$
and by definition of non-losing $\str$, for all $w \in \dom{\J}$, 
$\type(w, \J) = \tail(w)$ and $\tail(w)\in \LC(\T,\Sigma,\I_c)$ -- that is, $\tail(w)$ satisfies all inclusions of type $\mathbf{(N1)}$. Hence $\J$ satisfies all inclusions of type $\mathbf{(N1)}$.

\item [$\mathbf{(N2)}$] Consider an inclusion $\Ax = A \ISA \exists r.A' \in
  \T$. We distinguish the following cases: 

\begin{itemize}
\item If \rin, that is there is some inclusion $r\suprr s$ with $s^{(-)}\in \Sigma$, then that $\J$ satisfies $\alpha$ is a direct consequence of the
  definition of a core, more precisely that $\I_c\models A \ISA \exists r.A'
  $, and the fact that $\J$ is an extension of $\I_c$. 
\item If \rnotin, let $w$ be an arbitrary object in $ A^\J$. We show that
  there exists a $v \in \dom{\J}$ such that $(w,v) \in r^\J$ and $v \in
  A'^{\J}$. By construction of $A^\J$ and by construction of
  $\run(\I_c,\str)$, $A \in \tail(w)$ holds and $w$ is a run that follows
  $\str$. Since $\str$ is a non-losing strategy $\str(\tail(w), \Ax)$ is
  defined, that is there exists a $\tau'$ over $\T$ such that $\str(\tail(w),
  \Ax) = \tau'$ and $A' \in \tau'$. We further distinguish the following
  cases:  
 \begin{itemize}
\item   
 If $w=a$ for an individual $a \in \dom{\I_c}$ and $\tau'$ is a c-type,
 then the assumption holds by definition of a strategy, that is there exists
 $(a,a')\in r^{\I_c}$ with $\tau'=(a',\I_c)$, and thus also $a' \in
 A'^{\J}$.
\item Otherwise, if $w=a$ and $\tau'$ is not a c-type, then we have  
   $v=a\Ax\tau' \in \run(\I_c,\str) = \dom{\J}$, 
and by construction either $(a,v) \in r^\J$ if $r$ is a role
 name, or $(v,a) \in (r^-)^\J$ if $r^-$ is a role name. Since  $A' \in
 \tail(a\Ax\tau')$, then $a\Ax\tau' \in A'^\J$ as desired. 
 \item The case when $w$ is a run
 of the form $a\ldots\Ax_\ell \tau_\ell$ and $\tau'$ is not a c-type is similar,
 that is, $w\Ax\tau' \in \run(\I_c,\str) = \dom{\J}$, and as $A' \in \tau'$,
 we have $w\Ax\tau' \in A'^\J$, and either $(w, w\Ax\tau') \in r^\J$ if $r$ is a role
 name, or
 $(w\Ax\tau',w) \in (r^-)^\J$ if $r^-$ is a role name. 

\item Finally, assume $w$ is a run of the form $a\ldots\Ax_\ell \tau_\ell$ and
  $\tau'$ is a c-type. It follows that $v =w\Ax\tau' 
  \in \srun(\I_c,\str)$, and hence by
  construction, $(w,a_{\tau'}) \in r^\J$ if $r$ is a role name, or
  $(a_{\tau'},w) \in (r^-)^\J$ if $r^-$ is a role name, 
where $a_{\tau'}$ is the chosen individual that realizes $\tau'$ in 
$\I_c$. Clearly, since $A' \in \tau'$,  we have $a_{\tau'} \in A'^\J$. 
\end{itemize}
\end{itemize}

\item[$\mathbf{(N3)}$]   Consider an inclusion $\alpha =   A_1\ISA
   \All{r}{A_2}$. To show $\J \models \alpha$, 
let $w$ be an arbitrary object in $A_1^\J$. By construction
   of $\J$, $w \in A_1^\J$ implies $A_1 \in \tail(w)$.  
Consider an arbitrary object $w' \in \dom{\J}$ such
 that $(w,w') \in r^\J$; note that the claim trivially holds for $w$ in case
 there is no such  $w'$. To show that $w' \in A_2^\J$, 
we distinguish the following cases.  

	\begin{itemize}
	 \item Both $w$ and $w'$ are  individuals from the core, namely $a$
           and $a'$, respectively. Since $\J$ is an extension of $\I_c$, then
           $(a,a') \in r^{\I_c}$. By definition of a core, $ \I_c \models
           A_1\ISA \All{r}{A_2}$ and, therefore, $a' \in A_2^{\I_c}$. Since 
$\J$ is an extension of $\I_c$, we have $a' \in A_2^{\J}$.
\end{itemize}

If at least $w$ or $w'$ is not an individual, then, by the definition of
$r^\J$, we have four possible cases: 
	  \smallskip
    	\begin{itemize}
	
	\item  $w$ is arbitrary, and 
	$w'$ is of the form $w\Ax_i\tau_i$ for 
  some inclusion $\Ax_i = A \ISA \exists r_i.A' \in \T$ with $r_i
\,{\not\in_\T}\, \Sigma$. We further distinguish two cases.
	
	\begin{itemize}
\item	If $r$ is a role name, then, by construction of $\J$, $r_i \suprr r$ follows. 
As $w\Ax_i\tau_i $ is a run that 
follows $\str$, 
$\tau_i$ must satisfy (C2). This together with the assumption 
 $A_1 \in \tail(w)$ and the fact that $r_i\suprr r$, imply that
$A_2 \in \tau_i$, and 
 $w' = w\Ax_i\tau_i \in A_2^\J$ follows. 

\item If $r^-$ is a role name, then  $(w\Ax_i\tau_i, w) \in (r^{-})^\J$. By
  construction of $\J$, it must be the case that $r_i^-\suprr r^-$, so 
$r_i\suprr r$ follows, and we can argue as above, using (C2) to conclude that 
$A_2 \in \tau_i$ and $w' = w\Ax_i\tau_i \in A_2^\J$.
\end{itemize}  

\item $w'$ is an arbitrary word, and $w$ is of the form $w'\Ax_i\tau_i \in
  \dom{\J}$, 
  for $\Ax_i$ an inclusion $A \ISA \exists r_i.A' \in \T$ with $r_i
  \,{\not\in_T}\, \Sigma$. We again distinguish two cases.
	\begin{itemize}
		\item	 If  $r$ is a role name, then by construction of $\J$,
                  $r_i^- \suprr r $. 
As 
 $w'\Ax_i\tau_i $ is a run that follows $\str$, 
then $A \in \tail(w')$, $A' \in \tau_i$, and 
$\tau_i$ must satisfy (C2). From $A_1 \in \tail(w)$ and $\tail(w)=
\tail(w'\Ax_i\tau_i)$ follows that  $A_1\in \tau_i$. The latter together with
$A' \in \tau_i$, $A \in \tail(w')$, and $r_i^-\suprr r$ imply that $A_2
\in \tail(w')$,
so $w' \in A_2^\J$. 
			\item If $r^-$ is a role name, then
                          $(w',w'\Ax_i\tau_i) \in (r^{-})^\J$. By construction
                          of $\J$, $r_i\suprr r^-$, so 
 $r_i^-\suprr r$ holds,
and we can argue as above 
that $w' \ in A_2^\J$. 
  \end{itemize}
	\item $w$ is an arbitrary word other than an individual, and $w'$ is
          an individual $a_{\tau_i}$, where 
$w\Ax_i\tau_i \in \srun(\I_c,\str)$ for some $\Ax_i = A \ISA \exists r_i.A'$
with $r_i \,{\not\in_T}\, \Sigma$, and $a_{\tau_i}$ realizes $\tau_i$ in
$\I_c$. 
We again have two cases.
    \begin{itemize} 				
   \item If $r$ is a role name, then $r_i \suprr r$. Since
$w\Ax_i\tau_i $ is a run that follows $\str$, 
$A \in \tail(w)$, $A' \in \tau_i$, and $\tau_i$ must satisfy (C2). The
     latter together with  $A_1 \in \tail(w)$ and $r_i\suprr r$ 
  imply that $A_2 \in \tau_i$. As 
    $\tau_i = \tail(a_{\tau_i})$, we have that 
$A_2 \in  \tail(a_{\tau_i})$ and $a_{\tau_i} \in A_2^\J$.

   \item If $r^-$ is a role name, 
  then  it is the case that $r_i^-\suprr r^-$ and    $r_i\suprr r$, 
 so we argue as above
that $w' =a_{\tau_i} \in A_2^\J$.
   \end{itemize}
	
	  \item $w$ is an individual $a_{\tau_i}$, while $w'$
  is not an individual, and
$w'\Ax_i\tau_i \in \srun(\I_c,\str)$ for some $\Ax_i = A \ISA \exists r_i.A'$
with $r_i \,{\not\in_\T}\, \Sigma$, and $a_{\tau_i}$ realizes $\tau_i$ in
$\I_c$.   We again consider two cases.
	  	
	  		\begin{itemize}
				  \item  If  $r$ is a role name, then
$r_i^- \suprr r $. As 
$w=a_{\tau_i} \in A_1^\J$ and 
$a_{\tau_i}$ is a run that follows \str, by construction of $\J$ we have
$A_1 \in \tail(a_{\tau_i})=\tau_i$. As 
$w'\Ax_i\tau_i $ is a run that follows $\str$, $A \in \tail(w')$, $A' \in
\tau_i$, and $\tau_i$ must satisfy (C2). The latter together with $A_1 \in
\tau_i$ and $r_i^-\sqsubseteq r$ imply that $A_2 \in \tail(w')$, hence $w' \in A_2^\J$.   

				\item If $r^-$ is a role name,  
                              then it is the case that  $r_i^-\suprr r^-$ and 
$r_i\suprr r $, and therefore we
argue as above that $w' \in A_2^\J$.
	\end{itemize}

	\end{itemize}

\item [$\mathbf{(N4)}$] Finally,  consider an inclusion $\alpha = r \sqsubseteq s \in \T$. 
For pairs of individuals, $(a,a') \in r^\J $ implies $(a,a') \in s^\J$ 
because $\J$ is an extension of $\I_c$ and $\I_c \models r\sqsubseteq s$. For all other pairs of objects, it is not hard to verify that
$\J \models r\sqsubseteq s$ is guaranteed by the construction of
$\J$, in particular the fact that $p^\J$ is closed under the role inclusions in
$\T$, and 
the fact that,  
  Otherwise, assume $(w,v) \in r^\J$,  for an arbitrary pair of an object and its child, where at least one is not an individual. If $s$ is a role name $p$, then that $(w,v)$ belongs to $p^\J$ is ensured by construction of $\J$;  otherwise
 if $r \sqsubseteq p^-$ is in $\T$ for a role name $p$, then $r^- \sqsubseteq
 p$ is also in $\T$.

\end{enumerate}


\end{document}